\newif\ifarxiv
\title{Public-data Assisted Private Stochastic Optimization:\\ Power and Limitations
}
\newcommand*\samethanks[1][\numexpr\value{footnote}-1]{\footnotemark[#1]}
\author{
\makebox[1.8in]{Enayat Ullah \thanks{Department of Computer Science, The Johns Hopkins University,
\href{mailto:enayat@jhu.edu}{enayat@jhu.edu}} \thanks{Equal Contribution}} 
\makebox[1.6in]{\hfill Michael Menart \thanks{Department of Computer Science \& Engineering, The Ohio State University,
\href{mailto:menart.2@osu.edu}{menart.2@osu.edu}} \samethanks }
\makebox[1.4in]{\hfill Raef Bassily\thanks{Department of Computer Science \& Engineering and the Translational Data Analytics Institute (TDAI), The Ohio State University,  \href{mailto:bassily.1@osu.edu}{bassily.1@osu.edu} }}
\and
\makebox[1.8in]{\hfill Crist\'obal Guzm\'an
\thanks{Institute for Mathematical and Computational Engineering, Faculty of Mathematics and School of Engineering, Pontificia Universidad Cat\'olica de Chile,
 \href{mailto:crguzmanp@mat.uc.cl}{crguzmanp@mat.uc.cl} }}
 \makebox[1.8in]{\hfill Raman Arora \thanks{Department of Computer Science, The Johns Hopkins University,
\href{mailto:arora@cs.jhu.edu}{arora@cs.jhu.edu}}}
 }
\newcommand{\enayat}[1]{}
\newcommand{\mnote}[1]{}
\newcommand{\rnote}[1]{}
\newcommand{\cnote}[1]{}
\newcommand{\ranote}[1]{}
\date{}
\begin{document}

\maketitle

\begin{abstract}
We study the limits and capability of public-data  assisted differentially private (PA-DP) algorithms. Specifically, we focus on the problem of stochastic convex optimization (SCO) with either labeled or unlabeled public data. For complete/labeled public data, we show that any $(\epsilon,\delta)$-PA-DP has excess risk $\tilde{\Omega}\big(\min\big\{\frac{1}{\sqrt{n_{\text{pub}}}},\frac{1}{\sqrt{n}}+\frac{\sqrt{d}}{n\epsilon} \big\} \big)$, where $d$ is the dimension, ${n_{\text{pub}}}$ is the number of public samples, ${n_{\text{priv}}}$ is the number of private samples, and $n={n_{\text{pub}}}+{n_{\text{priv}}}$. These lower bounds are established via our new lower bounds for PA-DP mean estimation, which are of a similar form. Up to constant factors, these lower bounds show that the simple strategy of either treating all data as private or discarding the private data, is optimal. We also study PA-DP supervised learning with \textit{unlabeled} public samples. In contrast to our previous result, we here show novel methods for leveraging public data in private supervised learning. For generalized linear models (GLM) with unlabeled public data, we show an efficient algorithm which, given $\tilde{O}({n_{\text{priv}}}\epsilon)$ unlabeled public samples, achieves the dimension independent rate $\tilde{O}\big(\frac{1}{\sqrt{{n_{\text{priv}}}}} + \frac{1}{\sqrt{{n_{\text{priv}}}\epsilon}}\big)$. We develop new lower bounds for this setting which shows that this rate cannot be improved with more public samples, and any fewer public samples leads to a worse rate. Finally, we provide extensions of this result to general hypothesis classes with finite \textit{fat-shattering dimension} with applications to neural networks and non-Euclidean geometries. 
\end{abstract}

\section{Introduction}
The framework of differential privacy has become the primary standard for protecting individual privacy in data analysis and machine learning. Unfortunately, this rigorous framework has also been shown to lead to worse performance on such tasks both empirically and in theory \citep{bassily2014private,dp-ml-survey}. However, it is often the case that, in addition to a collection of privacy-sensitive data points, analysts have access to a pool of public data, for which guaranteeing privacy protections is not required. This can happen, for example, when consumers deem their own data non-sensitive and opt-in to sell this data to a company. This has motivated a long line of work analyzing how public data can be leveraged in tandem with private data to provide better utility~\citep{BNS13, ABM19, bassily20-private-query-w-public, zhou2021bypassing, bie2022private, amid2022-public-mirror, nasr2023-effective-public}. 
In machine learning. for example, two commonly proposed strategies are public pretraining and using public data to identify gradient subspaces \cite{zhou2021bypassing,kairouz2021}. Public pretraining, in particular, has proven effective in practice~\citep{yu2022differentially,Bu2022}, and prior work has even identified a \textit{specific} problem instance where public and private data used in tandem leads to better rates than is possible using only the public or private datasets in isolation \citep{Ganesh2023WhyIP}.
Despite this surge of work, theory has struggled to show that public data leads to fundamental rate improvements more generally. Recent work has even shown that, for the problem of pure PA-DP stochastic convex optimization, a small amount of public data, $\npub \leq n\epsilon/d$, leads to no rate improvement, where $n=\npub+\npriv$ and $\npriv$ is the number of private samples ~\citep{lowy2023optimal}.

One particularly important version of this problem is in supervised learning when the public data is unlabeled. This setting has found importance in medical domains and deep learning more generally \citep{li2019fedmd,sui-etal-2020-feded,papernot2017semisupervised}. Notably, unlabeled data is much less time intensive to collect than labeled data. Due to this fact, and the fact that the unlabeled public data does not contain the same kind of information contained in the private data, the regime $\npub = \Omega(\npriv)$ is meaningful both in theory and in practice. 
We also note this setting is a stronger (in terms of privacy) version of the label-private setting, where only the labels of the dataset are considered private \citep{chaudhuri11a, BNS13}.

Motivated by the importance of these settings and the lack of existing theory for them in stochastic optimization, \rnote{although you explain in the next phrase, I'd make it clear that we mean that the lack is in the context of stochastic optimization}\mnote{changed} \rnote{I'd phrase it as ``... existing theory *for it* in stochastic optimization}\mnote{changed} we study fundamental limitations and applications of public data in $(\epsilon,\delta)$-PA-DP stochastic optimization. In the case where the public data is complete/labeled, we show that the application of public data is fundamentally limited. We then contrast this result with new results in the unlabeled public data setting. 
In this setting, we provide new results for GLMs, and extend these results to more general hypothesis classes, with finite fat-shattering dimension, and non-Euclidean geometries.

\subsection{Our Contributions}
We outline our primary contributions in the following.
\paragraph{Limits of Private Stochastic Convex Optimization with Public Data}\rnote{perhaps, it's better to phrase it as ``Limits of Private Stochastic Optimization ...''?}\mnote{changed}
First, we show a tight lower bound for the problem of
differentially-private stochastic convex optimization (DP-SCO) assisted with complete public data, that is, the public data and private data have the same number of features (and labels when applicable). \mnote{how is this for disambiguation of ``complete''?}\rnote{yes, but this is not the point we want to emphasize here. I think it's better to say ``where public data has the same number of features (and labels, when applicable) as the private data'' ?}\mnote{changed} Specifically, we show a lower bound of $\Omega\Big(\min\big\{\frac{1}{\sqrt{\npub}}, \frac{1}{\sqrt{n}}+\frac{\sqrt{d}}{n\epsilon}\big\}\Big)$ on the excess population risk for this problem. 
When $d \geq n\epsilon$ and $\npub \leq \frac{n}{\log{1/\delta}}$, we further improve this lower bound to $\Omega\Big(\min\big\{\frac{1}{\sqrt{\npub}}, \frac{1}{\sqrt{n}}+\frac{\sqrt{d\log{1/\delta}}}{n\epsilon}\big\}\Big)$.
This lower bound is matched by the simple upper bound strategy which either discards the private data entirely and outputs the public mean or simply treats all data-points as private. Barring constant factors, this shows more sophisticated attempts at leveraging public data will yield no benefit.
These results also hold even for generalized linear models. Our results are based on new results we establish for DP mean estimation with public data, and a reduction of mean estimation to SCO. We note that previous work \cite{lowy2023optimal}, \rnote{it's better to cite Lowy et al. right here, not at the end of sentence}\mnote{changed} on this problem either focused on \rnote{ I think it's better to say ``either focused on ..'' rather than ``focused on either ..'' this does not seem to fit with the following phrase}\mnote{changed} the pure PA-DP case when $\npub \leq n\epsilon/d$, or, in the approximate PA-DP case, did not obtain the dimension dependence.
Our mean estimation lower bound uses a novel analysis of fingerprinting codes \citep{BUV14}, and our SCO reduction further builds on ideas from \cite{bassily2014private,CWZ21}. We also show that, when $d\geq n\epsilon$, our lower bounds for approximate PA-DP SCO directly imply a tight lower bound for \textit{pure} PA-DP.

\paragraph{Private Supervised Learning with Unlabeled Public Data}
While the previously discussed results show there is no hope for leveraging public data in ``interesting'' ways, even for GLMs, they do not preclude settings where the public data is less informative. In particular, in the setting where the public data is \textit{unlabeled}, it makes sense to even consider $\npub \geq \npriv$. In this setting, we provide the following results.
\begin{CompactItemize}
    \item For (Euclidean) GLMs we develop an efficient algorithm which, given $\tilde{O}(\npriv\epsilon)$ unlabeled public data points, achieves the dimension independent rate $\tilde{O}\big(\frac{1}{\sqrt{\npriv}}+\frac{1}{\sqrt{\npriv\epsilon}}\big)$.
We obtain this result via a dimensionality reduction procedure of the private feature vectors using the public data, and then running an efficient private algorithm in the lower dimensional space.
The key idea is that public data can be used to identify a low dimensional subspace, which under the appropriate metric acts as a cover for the higher dimensional space. 
We elucidate the tightness of our upper bound by proving two new lower bounds which show that access to a greater number of unlabeled public samples cannot improve this rate, and that any fewer public samples lead to a worse rate.
While dimension independent rates for the GLMs have previously been developed in the \textit{unconstrained} setting \citep{song2021evading,arora2022differentially}, in the constrained setting which we study, dependence on dimension is known to be unavoidable even for GLMs if no public data is available \citep{bassily2014private}.
Our result thus allows us to bypass these limitations.
    
    \item By observing that the key requirement in our GLM result is the construction of an appropriate cover, we extend this result to general hypothesis classes with bounded \textit{fat-shattering dimension}. In the non-private setting, it is known that finiteness of fat-shattering dimension characterizes learnability of real-valued predictors with \textit{scale-sensitive }losses \citep{bartlett1994fat,alon1997scale}.
In the private setting, such a result is not known, and is in fact impossible in the \textit{proper learning} setting.
This follows from the fact that norm bounded linear predictors, regardless of their (ambient) dimension $d$, have the same fat-shattering dimension \citep{srebro2010optimistic}
\enayat{a better reference for fat-shattering dim of linear predictors?}\mnote{you could cite \cite{AB-nn-theory}, but they only give a $1/\alpha$ bound}.
However, it is known that they are not learnable privately in high dimensions $d\geq (n\epsilon)^2$ \citep{bassily2014private}. In contrast, in the PA-DP setting, we show that it is possible to properly learn such classes with a rate of roughly $O\br{\mathfrak{R}_{\npriv}(\cH) + \inf_{\alpha>0}\br{\frac{\fat}{\npriv\epsilon}+\alpha}}$, where $\mathfrak{R}_{\npriv}(\cH)$ denotes the Rademacher complexity of $\cH$ and $\fat$ denotes its fat-shattering dimension at scale $\alpha$ (see \cref{sec:prelims} for preliminaries). 

\item 
\sloppy
As applications of our result for hypothesis classes with bounded fat-shattering dimension, we obtain guarantees for learning feed-forward neural networks and non-Euclidean GLMs. 
In particular, for depth $\depth$ feed-forward neural networks with weights bounded as $\norm{W_j}_F\leq R_j$ and $1$-Lipschitz positive homogeneous activation, we achieve an excess risk bound of essentially $\tilde O\br{\frac{\sqrt{\depth}\prod_{j=1}^\depth R_j}{\sqrt{\npriv}}+ \br{\frac{\depth \br{\prod_{j=1}^\depth R_j}^2}{\npriv\epsilon}}^{1/3}}$. 
For non-Euclidean GLMs, our guarantees are dimension-independent which is not known to be achievable, as of yet, even in the unconstrained setting with no public data (unlike Euclidean GLMs).
\end{CompactItemize}

\subsection{Related Work}

\sloppy
With regards to labeled public data, the most directly related work to ours is the recent work of \cite{lowy2023optimal}. 
This work proves
a lower bound of $\Omega\br{\min\bc{\frac{1}{\sqrt{\npub}},\frac{1}{\sqrt{n}}+\frac{1}{n\epsilon}}}$ for approximate PA-DP mean-estimation/SCO.
We note that our results for approximate PA-DP crucially obtain a dependence on $d$ that is the key ``price'' paid for privacy in this setting.
\cite{lowy2023optimal} also show a lower bound of 
$\Omega\br{\min\bc{\frac{1}{\sqrt{\npub}},\frac{1}{\sqrt{n}}+\frac{d}{n\epsilon}}}$ on a \textit{pure} PA-DP mean estimation/SCO, but this result only holds when $d \leq \frac{n\epsilon}{\npub}$. As such, their result is orthogonal to our result in the pure PA-DP setting, which operates in the regime $d\geq n\epsilon$. In both cases, our proof technique is fundamentally different than theirs. Tangentially, \cite{bie2022private} showed a small amount of public data is useful in pure-DP mean estimation when the range parameters on the data are unknown.

An important setting where public data is shown to be useful is \textit{PAC learning}. 
Non-privately, it is known that the finiteness of \textit{VC dimension} characterizes learnability \citep{vapnik1971uniform,blumer1989learnability}. However, under DP, it is impossible to PAC learn even the class of \textit{thresholds}, which has VC dimension of one \citep{BNS13}. 
The works of \cite{BNS13,BTT18,ABM19} showed that given access to a small unlabelled public data, it is possible to go beyond this limitation and privately learn VC classes, essentially by reducing a hypothesis class with finite VC dimension to a finite hypothesis class

A number of works have studied the impact of public data in applied settings as well. A common technique is to use use public data to reduce the dimension of the gradient estimates \citep{zhou2021bypassing,Da-dont-overbill}. The work of \cite{Ganesh2023WhyIP} identified a specific problem instance which supports the method of public pretraining commonly used in practice.

With regards to unlabelled public data, there are several existing works. Transfer learning is a common approach in this setting. 
Besides the benefits in PAC learning, this setting also has applications in deep learning, where (empirically) unlabeled public data has been used to obtain performance improvements \citep{papernot2017semisupervised,papernot2018-scalable}.
We also remark that, in practice, it is reasonable to expect the private and public datasets to come from slightly different distributions. Accounting for this distribution shift has also been the study of several recent works \citep{bie2022private,ben-david2023private}. However, in this work we focus on first characterizing the more fundamental problem where the public and private datasets are drawn i.i.d. from the same distribution.

\section{Preliminaries}
\label{sec:prelims}

Here, we describe the concepts and assumptions used in the rest of this paper. In this work, $\|\cdot\|$ always denotes the $\ell_2$ norm unless stated otherwise. 

\paragraph{Public-Data Assisted Differential Privacy.}
For reference, we first present the traditional notion of differential privacy (DP). Let $n,d\in \bbN$ and  $\cX$ be some data domain.
When no public data is present, we say that an algorithm $\cA$ satisfies $(\epsilon,\delta)$-differential privacy (DP) if for all datasets $S$ and $S'$ differing in one data point and all events $\cE$ in the range of $\cA$,
$\mathbb{P}\bsq{\cA(S)\in \cE} \leq     e^\epsilon \mathbb{P}\bsq{\cA(S')\in \cE}  +\delta$ \cite{dwork2006calibrating}.  

In our work, we denote the number of public samples in the dataset, $S = (\spub,\spriv) \in \cX^n$, as $\npub$ and the number of private samples as $\npriv$, such that $n=\npub+\npriv$. In keeping with previous work \citep{BNS13,bassily20-private-query-w-public}, we define public data assisted differentially private algorithms in the following way \footnote{The term semi-DP algorithm has also been used in some works.}.
\begin{definition}[PA-DP]
An algorithm $\cA$ is $(\epsilon,\delta)$ public-data assisted differentially private (PA-DP) algorithm with public sample size $\npub$ and private sample size $\npriv$ if for any public dataset $\spub\in\cX^{\npub}$, and any pair of private datasets $\spriv,\spriv'\in\cX^{\npriv}$ differing in at most one entry, it holds for any event $\cE$ that $\mathbb{P}\bsq{\cA(\spub,\spriv)\in \cE} \leq e^\epsilon \mathbb{P}[\cA(\spub,\spriv')\in \cE] +\delta$. When $\delta=0$, we refer to this notion as pure PA-DP, denoted as $\epsilon$-PA-DP.
\end{definition}

\paragraph{Stochastic Convex Optimization}
Let $\cD$ be a distribution supported on $\cX$. Given some constraint set $\cW\subseteq\re^d$ of diameter at most $D$, and a $G$ Lipschitz convex loss $\ell:\cW\times\cX\to \re$, we are interested in minimizing the population loss, $L(w;\cD) = \ex{x\sim\cD}{\ell(w;x)}$. Denote the minimizer as $w^*=\min_{w\in\cW}\bc{L(w;\cD)}$. We evaluate the quality of the approximate solution, $w$, via the excess risk, $L(w;\cD)-L(w^*;\cD)$. Specifically, we are interested in PA-DP algorithms which minimizes this quantity when given $\spub,\spriv \overset{i.i.d.}{\sim}\cD$. For a datset $S$ we also define the empirical loss $\hat{L}(w;S)=\frac{1}{|S|}\sum_{x\in S}\ell(w;x)$.

\paragraph{Supervised Learning and Generalized Linear Models (GLMs)}
In the supervised learning setting, in addition to the feature space $\cX$, we define the label space $\cY$. We here let $\cD$ be a joint probability distribution over $\cX \times \cY$ and $\cD_\cX$ and $\cD_\cY$ denote the respective marginal distributions. %
Let $\cH \subseteq \bbR^\cX$ be a hypothesis class of real-valued predictors, and let $\text{fat}_{\alpha}(\cH)$ denote its fat shattering dimension at scale $\alpha$. Consider the loss function $\ell: \cH \times \cX \times \cY \rightarrow \bbR$, such that $\ell(h ;x,y) = \phi_y(h(x))$ for some function $\phi_y$. 
We assume that the map $\phi_y: \bbR \rightarrow \bbR$ is $G$-Lipschitz for all $y \in \cY$ and is $B$-bounded. 
Further, we assume that $\sup_{x\in \cX}\abs{h(x)}\leq R$ and define $\sup_{x\in\cX}\|x\|=\|\cX\|$.

GLMs are a special case of supervised learning setting where the hypothesis class is that of linear predictors, $\cH = \cW \subseteq \bbR^d$, over $\cX\subseteq \re^d$, and $h(x) = w^\top x$. We refer to the public dataset of unlabeled feature vectors as $\xpub$.

\paragraph{Covering numbers, fat-shattering and Rademacher Complexity}
Given $X = \br{x_1,x_2,\cdots, x_m}$
the $\ell_p$ distance between two hypothesis $h_1,h_2 \in \cH$ with respect to the empirical measure over $X$, is defined as,
$\norm{h_1-h_2}_{p, X} = \br{\frac{1}{m}\sum_{x\in X}\abs{h_1(x) - h_2(x)}^p}^{1/p}.$
Similarly, the distance with respect to the population, is given by
$\norm{h_1-h_2}_{p, \cD_\cX} = \br{\mathbb{E}_{x\sim \cD_\cX}\abs{h_1(x) - h_2(x)}^p}^{1/p}$.
The covering number of $\cH$ at scale $\alpha>0$ and given dataset $X$, denoted as $\cN_p(\cH,\alpha, X)$ is the size of the minimal set of hypothesis, $\tilde \cH$, such that for any $h \in \cH$ there exists $\tilde h$ with $\|h-\tilde h\|_{p,X} \leq \alpha$.
We define  $\cN_p(\cH,\alpha, m) = \sup_{X: \abs{X}=m}\cN_p(\cH,\alpha, X)$, the covering number with respect to all datasets of size $m$. 
We define fat-shattering dimension below.

\begin{definition}
\label[defn]{defn:fat-dim}
\citep{bartlett1994fat}
    Let $\cH\subseteq \bbR^\cX$ and $\alpha>0$. We say that $\cH$ $\alpha$-shatters $X = \bc{x_1,x_2,\ldots, x_m}$ if $\sup_{r\in \bbR^m}\min_{y\in \bc{-1,1}^m}\sup_{h\in \cH}\min_{i\in [m]}y_i(h(x_i)-r_i)\geq \alpha$. The fat-shattering dimension, $\fat$, is the size of the largest $\alpha$-shattered set.
\end{definition}

We define $\frak{R}_m(\cH)$, the worst-case Rademacher complexity of $\cH$ with respect to $m$ data points, as
$\frak{R}_m(\cH) = \sup_{X: \abs{X}=m} \mathbb{E}_{\sigma_i} \sup_{h\in \cH} \frac{1}{m}\sum_{i=1}^m \sigma_ih(x_i)$.
An important example is that of norm-bounded linear predictors $\cH = \bc{w:x \mapsto \ip{w}{x}:\norm{w}\leq D}$ over $\cX=\bc{x:\norm{x}\leq \norm{\cX}}$. Herein, $\fat = \Theta\br{\frac{D^2\norm{\cX}^2}{\alpha^2}}$ and $\frak{R}_m(\cH) = \Theta\br{\frac{D\norm{\cX}}{\sqrt{m}}}$ \citep{kakade2008complexity,srebro2010optimistic}.

\section{Private Stochastic Convex Optimization with Complete/Labeled Public Data
}
\label{sec:pub-priv-mean}
In this section, we present our lower bounds for private stochastic convex optimization with public data. When interpreting the following results, it is helpful to note that in the nontrivial regime, $\npub=\Theta(n)$ and $\npub=o(n)$, although our results hold regardless. Further, recall that an upper bound for this problem of $O\Big(R\min\big\{\frac{1}{\sqrt{n_{\text{pub}}}},\frac{1}{\sqrt{n}} + \frac{\sqrt{d\log{1/\delta}}}{n\epsilon}\big\}\Big)$ can be obtained by simply either applying an optimal SCO algorithm to only the public data (and discarding the private data) or applying an optimal DP-SCO algorithm and treating the entire dataset as private \citep{bassily2019private}. As we will see, this strategy is essentially optimal. 

\subsection{Lower Bound for Stochastic Convex Optimization} \label{sec:sco-extension}
We start by stating our lower bound for public-data assisted differentially private SCO. 
\begin{theorem}
\label[theorem]{thm:dp-sco-lb-public-data}
Let $\delta\leq \frac{1}{16nd}$, $\epsilon\leq 1$, and $d$ be larger than some universal constant. For any $(\epsilon,\delta)$-PA-DP algorithm, there exists a distribution $\cD$, and a $G$-Lipschitz loss such that 
\ifarxiv the following holds: \linebreak \else \linebreak \fi
$\textstyle\mathbb{E}\big[L(\cA(\spub,\spriv);\cD) - \min\limits_{w:\norm{w}\leq D}\bc{L(w;\cD)}\big] = \Omega\br{GD \cdot \Psi(\npub,n,d,\epsilon,\delta)}$, where for some universal constant $c$,
\begin{align*}
     \Psi(\npub,n,d,\epsilon,\delta)= 
     \begin{cases}
        \min\bc{\frac{1}{\sqrt{n_{\text{pub}}}},\frac{1}{\sqrt{n}} + \frac{\sqrt{d\log{1/\delta}}}{n\epsilon}}, & d \geq cn\epsilon \text{,  } \npub \leq \frac{n\epsilon}{c\log{1/[\sqrt{nd}\delta]}} \\
        \min \bc{\frac{1}{\sqrt{n_{\text{pub}}}},\frac{1}{\sqrt{n}} + \frac{\sqrt{d}}{n\epsilon}}, & \text{else}
     \end{cases}
\end{align*}
\end{theorem}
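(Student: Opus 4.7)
My plan is to first establish a matching lower bound for PA-DP mean estimation on the hypercube, and then reduce this mean estimation problem to SCO with a linear loss. The three terms in $\Psi$ correspond to three different obstructions: (i) the $\frac{1}{\sqrt{\npub}}$ term is the information-theoretic barrier available from the public data alone, (ii) the $\frac{1}{\sqrt{n}}$ term is the usual statistical error from $n$ i.i.d.\ samples, and (iii) the $\frac{\sqrt{d}}{n\epsilon}$ (or $\frac{\sqrt{d\log(1/\delta)}}{n\epsilon}$) term is the additional cost of privacy on the private portion, and the $\min$ structure reflects that the adversary picks the hardest instance for each regime. I will prove a lower bound that is the max of these three, which yields the stated min because a lower bound $\max\{A,B\}$ still implies the claimed $\min$ when we observe that the algorithm only outputs one answer.

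\paragraph{Step 1: PA-DP mean estimation lower bound.} I would adapt the fingerprinting-code argument of \cite{BUV14} to accommodate public samples. The construction: draw a random mean vector $\mu\in[-1/2,1/2]^d$ from a Beta-type prior, then draw $n$ i.i.d.\ samples $x_i\in\{\pm 1\}^d$ with coordinate-wise bias $\mu$; the first $\npub$ are revealed as public, the remaining $\npriv$ are private. The fingerprinting lemma produces a score $Z_j(\hat\mu,x_j)$ satisfying (a) $\mathbb{E}_{x_j\sim\cD_\mu}[Z_j]=0$ conditioned on $\hat\mu$, and (b) $\sum_{j=1}^{n}Z_j\gtrsim d$ whenever $\|\hat\mu-\mu\|_2$ is sufficiently small. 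The PA-DP guarantee lets me bound $\mathbb{E}\bigl[\sum_{j\in\spriv} Z_j\bigr]\le e^{\epsilon}\cdot 0+\npriv\cdot O(\delta\cdot R)$ via the standard group-privacy / ``secrecy-of-the-sample'' step, so the private samples cannot contribute more than $O(n\epsilon+\text{poly}(\delta))$ to the total score. The public samples contribute at most $\npub$ (trivially bounded). Combining (b) with these two bounds forces either $\|\hat\mu-\mu\|_2\gtrsim \sqrt{d}/(n\epsilon)$, or the fingerprinting argument fails because $\npub$ is already large enough to dominate. Tuning the strength of the bias (and using the \cite{SU17}/Steinke--Ullman variant when $\delta$ is small enough to get the extra $\sqrt{\log(1/\delta)}$) produces the two cases in $\Psi$. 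I also add the trivial $\Omega(1/\sqrt{n})$ statistical floor by a standard Le Cam two-point construction using two nearby biases.

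\paragraph{Step 2: reduction to SCO.} Given the mean-estimation lower bound, I consider the linear loss $\ell(w;x)=-\langle w,x\rangle$ over $\cW=\{w:\|w\|_2\le D\}$, with $x$ supported on a scaled hypercube ensuring Lipschitz constant $G$. The population loss is $L(w;\cD_\mu)=-\langle w,\mu\rangle$, minimized at $w^\star=D\mu/\|\mu\|$, and the excess risk of any $w$ is $\Delta(w)=D\|\mu\|-\langle w,\mu\rangle$. A short calculation (as in \cite{bassily2014private,CWZ21}) shows that if $\Delta(w)\le\alpha$, then $\hat\mu:=w\,\|\mu\|/D$ satisfies $\|\hat\mu-\mu\|_2^2\le 2\alpha\|\mu\|/D$. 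Instantiating $\|\mu\|=\Theta(GD/\!\sqrt d)$ at the fingerprinting scale and inverting this inequality converts the $\Omega(GD\cdot\Psi)$ excess-risk lower bound into the $\Omega(\Psi)$ bound from Step~1 on the mean-estimation error. The construction immediately uses a GLM (in fact a linear loss), so the bound holds even under that restriction as claimed.

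\paragraph{Main obstacle.} The technically delicate step is the fingerprinting argument in the presence of public data. In the classical argument, the score $Z_j(\hat\mu,x_j)$ is independent of $\mu$ in expectation under the prior, and privacy decouples $\hat\mu$ from each individual $x_j$. Here the algorithm is allowed to depend arbitrarily on $\npub$ public samples drawn from the same $\cD_\mu$, so the public samples can correlate $\hat\mu$ with $\mu$ ``for free.'' The right way to handle this is to condition on $\spub$ and argue that the conditional prior on $\mu$ given $\spub$ still has enough variance along each coordinate to make the fingerprinting score informative whenever $\npub$ is below the threshold $\frac{n\epsilon}{c\log(1/[\sqrt{nd}\delta])}$; above that threshold, the trivial $1/\sqrt{\npub}$ barrier takes over and dominates. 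Calibrating this conditional variance against the fingerprinting scoring function (and separately handling the pure vs.\ approximate DP regimes to produce the two cases in $\Psi$) is the crux of the proof.
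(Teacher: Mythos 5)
Your overall roadmap --- a fingerprinting lower bound for PA-DP mean estimation, followed by a reduction to a linear (hence GLM) loss over an $\ell_2$ ball, with $\|\mu\|=\Theta(GD/\sqrt{d})$ at the fingerprinting scale --- matches the paper's structure, and Step~2 (the reduction) is essentially correct. However, Step~1 has several concrete gaps that would prevent the argument from producing the stated $\Psi$.

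First, you bound the public samples' contribution to the correlation sum by ``at most $\npub$ (trivially)''. This does not scale with the accuracy $\alpha=\mathbb{E}\|\hat\mu-\mu\|$ and therefore cannot produce the $1/\sqrt{\npub}$ branch of the $\min$. The correct bound uses Cauchy--Schwarz and the independence of the public samples from $\mu$ given $\mu$: $\mathbb{E}[\langle \hat\mu-\mu,\sum_{i\le\npub}(x_i-\mu)\rangle]\le \alpha\sqrt{d\,\npub}$. It is precisely the $\alpha$-scaling in this term, played off against the fingerprinting floor $\gtrsim d$, that yields $\alpha\gtrsim\sqrt{d/\npub}$ (and $1/\sqrt{\npub}$ after rescaling by $R/\sqrt{d}$).

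Second, your bound on the private contribution, $\mathbb{E}[\sum_{j\in\spriv} Z_j]\le e^{\epsilon}\cdot 0+\npriv\cdot O(\delta R)$, drops the term that actually matters. The argument in the paper goes through Lemma~A.1 of \cite{FS17}, which gives $\mathbb{E}[Z_i]\le\mathbb{E}[Z_i']+2(e^{\epsilon}-1)\sqrt{\mathrm{Var}(Z_i')}+O(\delta d)$; the $(e^{\epsilon}-1)\alpha$ term is the source of the $n\epsilon$ in the denominator. As written, your private bound produces no privacy cost at all beyond the negligible $\delta$ piece, so your ``private samples cannot contribute more than $O(n\epsilon+\mathrm{poly}(\delta))$'' does not follow from the preceding line.

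Third, the $\sqrt{\log(1/\delta)}$ regime cannot be dispatched by simply invoking the Steinke--Ullman variant. With public data present, the standard mixture-distribution trick (sparsifying $\cD$ with a $0$-vector with probability $1-p$) collapses the public-sample variance and destroys the $1/\sqrt{\npub}$ term; the paper handles this via an intermediate dataset of $m=n/k$ points each duplicated $\approx k$ times, $k$-fold group privacy, and crucially a \emph{coordinate-wise clipping} of $\cA$'s output to $[-M,M]$ before computing the correlation. The clipping is what makes the Chernoff--Hoeffding tail bound in Eqn.~\eqref{eq:clipping-tail-bound} scale correctly with $\alpha$, at the cost of restricting to $d\gtrsim n\epsilon$ and $\npub\lesssim n\epsilon/\log(1/(\sqrt{nd}\delta))$. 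Your proposal does not mention clipping and does not explain how the public-sample correlation would be controlled at the $m$-point level, which is nontrivial.

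Finally, the ``main obstacle'' you identify (the public data correlating $\hat\mu$ with $\mu$) is real, but the fix you propose --- conditioning on $\spub$ and arguing about the residual variance of $\mu$ --- is not what the paper does and is not carried out; the Cauchy--Schwarz step above handles this directly without any conditional-prior computation. Also, the sentence ``I will prove a lower bound that is the max of these three, which yields the stated min'' is not a coherent reduction (a $\max$ lower bound would contradict the known $\min$ upper bound); the $\min$ structure emerges from solving the single inequality that combines the fingerprinting floor with the two correlation upper bounds.
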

The function $\Psi$ is defined to avoid repetitive notation in the rest of this section. 
Barring the mild restriction on $\npub$, even though the $\sqrt{\log{1/\delta}}$ term is only obtained when $d \geq n\epsilon$, the ``aggregate'' lower bound is tight for all  %
$d \notin [\frac{n\epsilon^2}{\log{1/\delta}},n\epsilon]$ since when $d\leq \frac{n\epsilon^2}{\log{1/\delta}}$ the non-private $\frac{1}{\sqrt{n}}$ lower bound dominates.
It is also pertinent to our results in Section \ref{sec:glm} that the problem construction used to achieve this lower bound is a convex GLM, and as a result this lower bound holds even for GLMs. 

Finally, similar statements can be made about strongly convex optimization. We again provide just one such statement here.
\begin{theorem} \label[theorem]{thm:strongly-convex-lb}
Let $\delta\leq \frac{1}{16nd}$, $\epsilon\leq 1$. For any $(\epsilon,\delta)$-PA-DP algorithm there exists a distribution $\cD$, $\lambda$-strongly convex and $G$-Lipschitz loss such that\\ 
\vspace*{-15pt}
{\begin{center}$\mathbb{E}\big[L(\cA(\spub,\spriv);\cD) - \min\limits_{w:\norm{w}\leq D}\bc{L(w;\cD)}\big] = \Omega\br{\frac{G^2}{\lambda}\Psi^2(\npub,n,\epsilon,\delta)}$.
\end{center}}
\end{theorem}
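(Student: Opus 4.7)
The plan is to derive Theorem \ref{thm:strongly-convex-lb} from the PA-DP mean estimation lower bound underlying Theorem \ref{thm:dp-sco-lb-public-data}, by composing it with a quadratic loss instead of a linear one. Concretely, the convex result in Theorem \ref{thm:dp-sco-lb-public-data} is (as foreshadowed in the opening of Section \ref{sec:pub-priv-mean}) established via a PA-DP mean estimation lower bound of the form $\mathbb{E}[\|\cA(\spub,\spriv) - \mu(\cD)\|] = \Omega(R \cdot \Psi(\npub,n,d,\epsilon,\delta))$ for every $(\epsilon,\delta)$-PA-DP algorithm $\cA$, for a suitable distribution $\cD$ supported on a ball of radius $R$; the convex SCO statement then follows by pairing this with the linear loss $\ell(w;x) = -\langle w, x\rangle$. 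The strongly convex analogue proceeds along exactly the same lines but uses a quadratic loss.

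The construction is as follows. Pick $R = G/(2\lambda)$, take $\cD$ to be the hard instance above supported on $\{x : \|x\| \leq R\}$, and consider $\ell(w;x) = \frac{\lambda}{2}\|w - x\|^2$ on the constraint set $\cW = \{w : \|w\| \leq R\}$ (so $D = R$ matches the radius in the theorem). Then $\ell$ is $\lambda$-strongly convex in $w$, and on $\cW \times \mathrm{supp}(\cD)$ its gradient $\lambda(w - x)$ has norm at most $2\lambda R = G$, so $\ell$ is $G$-Lipschitz as required. The population minimizer is $w^* = \mu(\cD) \in \cW$ (since the support, and hence its mean, lies in $\cW$), and a direct calculation gives the pointwise identity $L(w;\cD) - L(w^*;\cD) = \frac{\lambda}{2}\|w - \mu(\cD)\|^2$.

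Combining this identity with the mean estimation lower bound and Jensen's inequality yields
\[
\mathbb{E}[L(\cA(\spub,\spriv);\cD) - L(w^*;\cD)] \;=\; \frac{\lambda}{2}\mathbb{E}[\|\cA - \mu(\cD)\|^2] \;\geq\; \frac{\lambda}{2}\bigl(\mathbb{E}[\|\cA - \mu(\cD)\|]\bigr)^2 \;=\; \Omega\!\left(\lambda R^2 \Psi^2\right) \;=\; \Omega\!\left(\tfrac{G^2}{\lambda}\Psi^2\right),
\]
which is exactly the bound in the theorem. Since the mean estimation lower bound should cover both cases of $\Psi$, the same reduction covers both regimes at once.

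The main obstacle is the first step: one must confirm that the mean estimation construction underlying Theorem \ref{thm:dp-sco-lb-public-data} (in particular the new fingerprinting-code analysis responsible for the $\sqrt{d\log(1/\delta)}/(n\epsilon)$ term) really controls $\mathbb{E}[\|\cA - \mu(\cD)\|]$ in the $\ell_2$ norm, rather than just a single linear functional of $\cA - \mu(\cD)$ that happens to suffice for the convex reduction. This should follow automatically from the construction — the hard distribution in fingerprinting-code arguments is symmetric across coordinates, and the convex lower bound is obtained by choosing a worst-case linear test vector \emph{after} the construction is fixed, so the construction already certifies a lower bound on the full vector deviation (e.g.\ coordinate-wise, with the coordinate contributions summing to $\Omega(R^2\Psi^2)$ in second moment). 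Beyond this verification, the proof is mechanical and requires no new privacy machinery.
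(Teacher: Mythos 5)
Your proposal is correct and follows essentially the same route as the paper: both use the quadratic loss $\ell(w;z) = \frac{\lambda}{2}\|w-z\|^2$ with data bounded by $R = G/(2\lambda)$, observe that the excess risk equals $\frac{\lambda}{2}\mathbb{E}\|\cA(S)-\mu(\cD)\|^2$, and invoke the PA-DP mean-estimation lower bound (Theorem~\ref{thm:pub-priv-mean-lb}); your explicit Jensen step $\mathbb{E}\|\cA-\mu\|^2 \geq (\mathbb{E}\|\cA-\mu\|)^2$ is the (elided) glue in the paper's one-line "substituting" step. Your worry at the end is moot: Theorem~\ref{thm:pub-priv-mean-lb} already states its lower bound directly on $\mathbb{E}\|\cA(\spub,\spriv)-\mu(\cD)\|$ in $\ell_2$, so no extra verification is needed.
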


The crux of the proofs for both the above results lies in establishing new mean estimation lower bounds for PA-DP mean estimation, which we establish in the subsequent subsection. 
After establishing the mean estimation lower bounds, 
we can adapt the reductions first used in \cite{bassily2014private} that show mean estimation lower bounds can be used to provide lower bounds for risk minimization without public data. 
Full proofs for the above claims, and in particular details for the above reductions, are found in Appendix \ref{app:dp-sco-lb-public-data}.

\paragraph{Lower Bound for Pure DP Case}
While not the primary focus of this work, the previous lower bound directly leads to a lower bound for pure PA-DP SCO. Since any $\epsilon$-DP algorithm is $(\epsilon,\delta)$-DP for any $\delta>0$, we can use the above theorem to obtain a non-trivial lower bound for the pure DP case by setting $\delta$ small. Specifically, by setting $\delta$ such that $\log{1/\delta} = \frac{n\epsilon}{120^2 \npub}$, one immediately obtains a lower bound of $\Omega\br{\min\{\frac{1}{\sqrt{\npub}},\frac{\sqrt{d}}{\sqrt{\npub\cdot n\epsilon}}\}}$ for $d$ large enough. Simplifying this expression yields the following.
\begin{corollary} \label[corollary]{cor:pure-dp-ap-sco}
Let $d\geq c n\epsilon$ for a constant $c$, and $\cA$ be an $\epsilon$-PA-DP algorithm.
There exist a distribution $\cD$ and a $G$-Lipschitz loss such that
$\mathbb{E}\big[L(\cA(\spub,\spriv);\cD) - \min\limits_{w:\norm{w}\leq D}\bc{L(w;\cD)}\big] =
   \Omega\br{\frac{GD}{\sqrt{n_{\text{pub}}}}}$.
\end{corollary}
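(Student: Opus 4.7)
}
The plan is to obtain the pure PA-DP lower bound as a direct corollary of the approximate PA-DP lower bound in \cref{thm:dp-sco-lb-public-data}, by exploiting the fact that any $\epsilon$-PA-DP algorithm is trivially $(\epsilon,\delta)$-PA-DP for every $\delta>0$. The freedom to choose $\delta$ lets us pick a value that is small enough to be ``almost pure'' yet large enough that the $\sqrt{d\log(1/\delta)}/(n\epsilon)$ term dominates the minimum in $\Psi$.

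Concretely, given an arbitrary $\epsilon$-PA-DP algorithm $\cA$, I would set $\delta$ by $\log(1/\delta) = n\epsilon/(120^2\,\npub)$, as suggested in the paragraph preceding the corollary. With this choice, the ``private'' term in the approximate DP bound becomes
\[
\frac{\sqrt{d\log(1/\delta)}}{n\epsilon} \;=\; \frac{1}{120}\,\frac{\sqrt{d}}{\sqrt{\npub \cdot n\epsilon}},
\]
and under the hypothesis $d\geq cn\epsilon$, this quantity is $\Omega(1/\sqrt{\npub})$. Hence inside the $\min\{\cdot,\cdot\}$ of the first case of $\Psi$, both terms are $\Omega(1/\sqrt{\npub})$, so $\Psi(\npub,n,d,\epsilon,\delta) = \Omega(1/\sqrt{\npub})$, which yields the claimed $\Omega(GD/\sqrt{\npub})$ lower bound.

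The only real work is verifying that \cref{thm:dp-sco-lb-public-data} applies in the first (stronger) case of $\Psi$ under our choice of $\delta$. I would check in order: (i) $\delta\leq 1/(16nd)$, which requires $\log(1/\delta)\geq \log(16nd)$ and hence imposes the mild quantitative condition $n\epsilon/\npub \gtrsim \log(nd)$ — this can be absorbed into the constant $c$, or otherwise the conclusion is trivial because $\npub \geq n$ already; (ii) $d\geq cn\epsilon$, which is given; and (iii) $\npub\leq n\epsilon/[c\log(1/(\sqrt{nd}\,\delta))]$, which with our choice of $\delta$ reduces to an inequality of the form $\npub \lesssim n\epsilon/\log(n\epsilon/\npub)$ after plugging $\log(1/\delta) = n\epsilon/(120^2\npub)$ — this is satisfied provided we restrict to the nontrivial regime $\npub \lesssim n\epsilon$, outside of which the $1/\sqrt{\npub}$ bound is weaker than what is already implied by treating all data as private and hence vacuous.

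The main (mild) obstacle is simply bookkeeping these range conditions so that one lands inside the first branch of $\Psi$; there is no new probabilistic or privacy argument required beyond the reduction. Once the conditions are verified, the corollary follows in a couple of lines of arithmetic from \cref{thm:dp-sco-lb-public-data}.
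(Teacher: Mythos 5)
Your proposal follows exactly the paper's argument: invoke \cref{thm:dp-sco-lb-public-data} with the same choice $\log(1/\delta)=n\epsilon/(120^2\npub)$, observe that under $d\geq cn\epsilon$ the term $\sqrt{d\log(1/\delta)}/(n\epsilon)=\sqrt{d}/(120\sqrt{\npub\cdot n\epsilon})$ is $\Omega(1/\sqrt{\npub})$, and hence the $\min$ in $\Psi$ is $\Theta(1/\sqrt{\npub})$. This is precisely the paragraph preceding the corollary in the paper, and your ``checklist'' (i)--(iii) is the right bookkeeping to flag.

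One minor clean-up and one suggestion. The aside ``the conclusion is trivial because $\npub\geq n$ already'' cannot be literally right (always $\npub\leq n$); what you presumably mean is that when $\npub$ is a constant fraction of $n$, the claim reduces to the non-private $\Omega(GD/\sqrt{n})$ lower bound. A cleaner way to handle the regime where (i) or (iii) fails is to fall back on the \emph{second} branch of $\Psi$: for any $\delta\leq 1/(16nd)$ it gives $\Omega\big(\min\{1/\sqrt{\npub},\, 1/\sqrt{n}+\sqrt{d}/(n\epsilon)\}\big)$, and since $d\geq cn\epsilon$ we have $\sqrt{d}/(n\epsilon)\geq\sqrt{c}/\sqrt{n\epsilon}=\Omega(1/\sqrt{\npub})$ whenever $\npub\gtrsim n\epsilon$. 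Combined with the first branch in the regime $\npub\lesssim n\epsilon/\log(nd)$, this covers the parameter range up to constants; the narrow intermediate window is absorbed into the implicit constant $c$. The paper itself compresses this into ``for $d$ large enough'', so your proposal is at the same level of rigor; making the two-branch case split explicit is the main thing I would add.
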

The known $O\br{GD\min\bc{\frac{1}{\sqrt{\npub}},\frac{1}{\sqrt{n}}+\frac{d}{n\epsilon}}}$ upper bound for this problem shows that this bound is tight (in the regime in which it holds).
Essentially, this bound states that when $d\geq n\epsilon$, the public dataset is not useful (at least asymptotically). Previously \cite[Theorem 31]{lowy2023optimal} established that when $d \leq n\epsilon/\npub$, a tight lower bound of $\Omega\br{GD\big(\frac{d}{n\epsilon}+\frac{1}{\sqrt{n}}\big)}$ holds, effectively showing that in this regime the public dataset is not useful\footnote{This claim is based on a simplification of their theorem statement. Specifically, because $\npub\leq n\epsilon/d$, which also implies $d\leq n\epsilon$, their lower bound $\Omega\br{R\min\big\{\frac{1}{\sqrt{\npub}},\frac{d}{n\epsilon}+\frac{1}{\sqrt{n}}\big\}}$ simplifies.}.
We leave the remaining regime where $d\in(\frac{n\epsilon}{\npub},n\epsilon)$ as an interesting open problem for future work. Finally, we note that similar statements can be made about strongly convex losses using Theorem \ref{thm:strongly-convex-lb}.

\subsection{Lower Bounds for Mean Estimation}
As stated previously, our SCO result follows primarily from new lower bound for PA-DP mean estimation. Here, we consider the setting where $\cD$ is supported on the $\ell_2$ ball of radius $R>0$. We define the mean of $\cD$ as $\mu(D)$. Our lower bound for PA-DP mean estimation follows much the same form as our SCO bound.
\begin{theorem} \label[theorem]{thm:pub-priv-mean-lb}
\label{thm:me-lb-2}
Let $\delta\leq \frac{1}{16nd}$, $\epsilon\leq 1$. For any $(\epsilon,\delta)$-PA-DP algorithm, there exists a distribution $\cD$ such that
$\ex{}{\|\cA(\spub,\spriv)-\mu(\cD)\|} = \Omega\br{R \cdot \Psi(\npub,n,\epsilon,\delta)}$.
\end{theorem}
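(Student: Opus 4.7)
The plan is to prove the bound as the \emph{minimum} of two complementary lower bounds, each relevant in a different regime of $\npub$. The statistical part of the bound follows from a standard Le Cam-style two-point or Fano argument: for a prior such as a product-Bernoulli (or Gaussian product) distribution, no estimator with $n$ samples can achieve error better than $\Omega(R/\sqrt{n})$, and this already incorporates the PA-DP setting since such algorithms are just randomized maps from $(\spub,\spriv)$. The harder, privacy-dependent part is to show that even when $\npub$ samples are effectively free, the remaining error from processing the $\npriv$ private samples under $(\epsilon,\delta)$-DP must be $\Omega(R\sqrt{d}/(n\epsilon))$, and $\Omega(R\sqrt{d\log(1/\delta)}/(n\epsilon))$ in the high-dimensional regime under the mild $\npub$ restriction. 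The $\min$ structure then captures the natural dichotomy: the estimator either essentially discards the private data and pays the public-sample rate $R/\sqrt{\npub}$, or uses the private data and pays the DP cost on top of the sampling rate $R/\sqrt{n}$.

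\textbf{Fingerprinting analysis adapted to PA-DP.} To obtain the $\sqrt{d}/(n\epsilon)$ factor in the presence of free public samples, I would build on the fingerprinting-code framework of \citep{BUV14}. Instantiate a product prior $p_j \sim \mathrm{Beta}(1,1)$ (rescaled appropriately so that samples lie in the $\ell_2$ ball of radius $R$) and take product-Bernoulli samples $x_i$, so that the coordinate-wise fingerprinting identity expresses $\sum_i \mathbb{E}[(f(S)_j - p_j)(x_{i,j}-p_j)]$ as a quantity that is large whenever $f$ is an accurate estimator of $p_j$. The novelty over the standard DP argument is to split this score into its public and private portions: the private terms are each bounded by $O(\epsilon \cdot (R/\sqrt{d})^2)$ via DP together with the standard bounded-difference move used in \citep{BUV14}, contributing at most $O(\npriv \epsilon R^2/d)$ per coordinate, while the public terms cannot be bounded by $\epsilon$ since $f$ may depend arbitrarily on $\spub$. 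The proposed fix is to condition on $\spub$ at the outset and apply the fingerprinting identity to the posterior law of $p$ given $\spub$, so that the public samples enter as side information rather than as fingerprintable records and drop out of the score. Summing over coordinates and rearranging then yields the dichotomy $\alpha \gtrsim R/\sqrt{\npub}$ or $n\epsilon \gtrsim \sqrt{d}$, which is precisely the $\min$ expression.

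\textbf{Main obstacles.} The principal difficulty is managing the conditional-on-public-data prior so that enough randomness remains in $p$ for the fingerprinting identity to produce a meaningful bound: if $\npub$ is so large that the posterior of $p$ given $\spub$ already concentrates below the accuracy scale, fingerprinting has no room to operate, which is exactly what the $\min$ with $R/\sqrt{\npub}$ encodes, and making this quantitative requires carefully rescaling the prior (or restricting to a sub-collection of coordinates on which the public posterior is sufficiently diffuse). The second challenge is the $\sqrt{\log(1/\delta)}$ factor in the $d \geq cn\epsilon$ regime; here I would replace the Bernoulli construction with a correlated-Gaussian or Karwa--Vadhan-style fingerprinting construction known to give the sharper $\delta$-dependence, again adapted to the conditional-on-$\spub$ view, which is where the extra restriction on $\npub$ in the definition of $\Psi$ enters. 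Once the mean-estimation bound is in hand, the SCO bound (\cref{thm:dp-sco-lb-public-data}) follows via the standard reduction from mean estimation to linear optimization over the $\ell_2$ ball used in \citep{bassily2014private,CWZ21}, as noted in the excerpt.
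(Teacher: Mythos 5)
Your high-level plan (split the fingerprinting correlation sum into public and private parts; bound the private terms via DP; produce a $\min$-type dichotomy with the statistical rate) matches the paper's strategy, and you correctly observe that the correlation should be taken against $\cA(S)-\mu$ rather than $\cA(S)$ so that the accuracy $\alpha$ appears in the upper bounds. The decisive divergence is how the public records are handled, and here your proposal has a genuine gap. The paper does \emph{not} condition on $\spub$. It keeps the public records inside the correlation sum and bounds their contribution by a single Cauchy--Schwarz step:
$\ex{}{\ip{\cA(S)-\mu}{\sum_{i\leq \npub}(x_i-\mu)}} \leq \sqrt{\ex{}{\|\cA(S)-\mu\|^2}\cdot\ex{}{\|\sum_{i\leq\npub}(x_i-\mu)\|^2}} \leq \alpha\sqrt{d\,\npub}$.
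This one inequality, combined with the fingerprinting lower bound $\Omega(d)$ and the DP bound $O(\npriv\epsilon\alpha)$ on the private part, directly yields the dichotomy $\alpha \gtrsim \min\{\sqrt{d}/(\npriv\epsilon),\ 1/\sqrt{\npub}\}$. Your alternative — condition on $\spub$ and apply the fingerprinting identity to the posterior of $\mu$ given $\spub$ — runs into a concrete obstruction: the fingerprinting lemma the paper uses (Lemma~\ref{lem:fp-lemma}, built on the DSSUV15 identity for $g'(\mu)$) is proved for $\mu\sim\mathsf{Unif}([-M,M]^d)$; after conditioning, the posterior is data-dependent and no longer of that form, so the identity does not apply as stated. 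You flag the concern that "enough randomness must remain," but you never show how the $R/\sqrt{\npub}$ branch is quantitatively extracted from the posterior, which is the whole point. The Cauchy--Schwarz route makes that branch immediate without any posterior bookkeeping.

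Two further technical divergences are worth noting. First, the paper's prior is $\mathsf{Unif}([-M,M]^d)$ with an adjustable half-width $M$, and $M$ is deliberately tuned (e.g.\ $M\asymp\min\{\sqrt{d}/(\npriv\epsilon),1/\sqrt{\npub}\}$, and in the $\log(1/\delta)$ case $M=\frac{4}{\sqrt{d}}(\alpha^*+\sqrt{d/m})$) to balance the $\alpha\sqrt{d}/M$ and $M\sqrt{d}\alpha$ terms in the fingerprinting lemma against the $\Omega(d)$ signal; a fixed $\mathrm{Beta}(1,1)$ prior does not provide this dial, and the "carefully rescaling the prior" you mention is precisely the load-bearing step you have not carried out. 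Second, for the $\sqrt{\log(1/\delta)}$ regime the paper does not switch to a Gaussian/Karwa--Vadhan construction; it uses the "copies" device of Steinke--Ullman/CWZ21 — build an intermediate dataset $S_z$ of size $m=n/k$, sample $S$ with replacement from $S_z$, invoke $k$-group privacy, and crucially \emph{clip} $\cA$'s output coordinatewise to $[-M,M]$ so that a Chernoff--Hoeffding tail bound applies. The clipping is what forces the extra restrictions ($d\geq cn\epsilon$, $\npub\lesssim n/\log(1/\delta)$) baked into $\Psi$, as discussed in App.~\ref{app:lb-discussion}; your sketch has no analogue of this step, so it would not recover either the restriction or the bound.
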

We defer the full proof of the above theorem to Appendix \ref{app:pub-priv-mean-proof} and a more detailed discussion to \ref{app:lb-discussion}. We  highlight key ideas here. 
As with many other lower bounds in differential privacy, we leverage a construct known as fingerprinting codes \citep{BUV14,DSSUV15}. 
A key aspect of our analysis is showing that fingerprinting distributions can be used to recover the optimal \textit{non-private} lower bound for mean estimation. This allows us to create a problem which is ``hard'' both privately and non-privately. The analysis works by first showing that any sufficiently accurate algorithm must strongly correlate with the sampled datapoints. Next, we show upper bounds on how strongly the output of the algorithm correlates with the sampled dataset. The method for upper bounding this correlation varies depending on whether a given datapoint is considered public or private. Combining these upper and lower bounds on correlation yields the claimed result.

To obtain the $\sqrt{\log{1/\delta}}$ factor term in the lower bound, we use similar ideas to those in \cite{steinke2015between,CWZ21}. However, the introduction of public data leads to complications in prior methods. As such, we show that by analyzing the correlation of the coordinate wise clipping of the algorithms output, we are able to get bounds that appropriately scale with the accuracy.

\section{Private Supervised Learning with Unlabeled Public Data} 
\label{sec:glm}

In this section, we consider supervised learning with real-valued predictors given labeled private data and unlabeled public data.
Our results show that, in this setting, it is possible to go beyond the limitations established in the prior section.

\subsection{Efficient PA-DP learning of Convex Generalized Linear Models}

We start with learning linear predictors with convex  loses a.k.a. convex generalized learning models.
We propose \cref{alg:dp-glm-efficient}, which uses the public unlabeled data to perform dimensionality reduction of the private labeled feature vectors.
In the following, we use span to denote the span of a set of vectors and dim to denote the dimension of a subspace. The dataset of public unlabeled feature vectors is denote as $\xpub$.
Our algorithm projects the private feature vectors onto the subspace spanning $\cW \cap \text{span}(\xpub)$ to get dim(span($\xpub$) $\cap \cW$)-dimensional representation of the private feature vectors. 
 It then reparametrizes the loss function so that its domain is dim(span($\spub$) $\cap \cW$)-dimensional 
 and 
 applies a private subroutine in the lower dimensional space. 
The output of the subroutine is then embedded back in $\re^d$.

\begin{algorithm}[h]
\caption{Efficient PA-DP learning of GLMs with unlabeled public data}
\label{alg:dp-glm-efficient}
\begin{algorithmic}[1]
\REQUIRE  Private labeled dataset $\spriv$, public unlabeled dataset $\xpub$, privacy parameters $\epsilon, \delta>0$.
\STATE Let $U \in \bbR^{d \times \text{dim}(\cW \cap \text{span}(\xpub))}$ denote the orthogonal projection onto span$(\xpub) \cap \cW$. 
\STATE Define ${\tilde{S}_{\text{priv}}}  = \bc{(U^\top x_i,y_i)}_{i=1}^{\npriv}$ and let 
$\tilde{\cW} = \bc{U^\top w: w\in\cW}$.
\STATE Apply $(\epsilon,\delta)$-DP 
subroutine, $\tilde \cA$, on loss function $w\mapsto \phi_y(\ip{w}{x})$ with dataset ${\tilde{S}_{\text{priv}}}$ over the constraint set $\tilde \cW$, to get $\tilde w \in \bbR^{\text{dim}(\cW\cap \text{span(}\spub))}$.
\ENSURE $\hat w = U\tilde w$.
\end{algorithmic}
\end{algorithm}

Our main result for convex Lipschitz losses is the following.

\begin{theorem}
\label{thm:efficient-glm-lipschitz}
    Let $\epsilon>0,\delta>0$ and $\epsilon\leq \log{1/\delta}$.
     For a $G$-Lipschitz, $B$-bounded
     convex
     loss function, \cref{alg:dp-glm-efficient} satisfies $(\epsilon,\delta)$-PA-DP.
    If the private subroutine $\tilde \cA$ guarantees the following, with probability at least $1-\beta$,
    \begin{align}
    \label{eqn:private-subroutine-guarantee}
        \hat L(\tilde \cA({\tilde{S}_{\text{priv}}});{\tilde{S}_{\text{priv}}}) - \min_{w \in \tilde \cW}  \hat L(w;{\tilde{S}_{\text{priv}}}) = O\br{GD\norm{\cX}\br{\frac{\sqrt{\npub \log{1/\delta}}+ \sqrt{\log{1/\beta}}}{\npriv \epsilon}}}
\end{align}
then with $\npub =  \tilde O\br{\frac{\npriv \epsilon}{(\log{2/\beta}+\log{1/\delta})^{1/2}}}$, 
with probability at least $1-\beta$,   $L(\hat w;\cD) - L(w^*;\cD)$ is
    \begin{align*}
     O\br{GD\norm{\cX}\br{\frac{\sqrt{\log{4/\beta}}}{\sqrt{\npriv}} + \frac{\br{\log{2/\beta}+\log{1/\delta}}^{1/4}}{\sqrt{\npriv\epsilon}}} + \frac{B\sqrt{\log{4/\beta}}}{\sqrt{\npriv}}}.
\end{align*}
\end{theorem}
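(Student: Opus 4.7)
The proof has two independent pieces: a short privacy argument and a utility analysis whose crux is a specific use of unlabeled public data in a uniform-convergence bound.

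Privacy is essentially free. The projector $U$ depends only on $\xpub$, so conditional on $\xpub$ the construction ${\tilde{S}_{\text{priv}}} = \{(U^\top x_i, y_i)\}_{i=1}^{\npriv}$ is a deterministic element-wise function of $\spriv$ that preserves the neighboring relation. Hence the $(\epsilon,\delta)$-DP guarantee of $\tilde \cA$ on ${\tilde{S}_{\text{priv}}}$ transfers directly to an $(\epsilon,\delta)$-PA-DP guarantee for $\cA$, and the post-processing $\hat w = U\tilde w$ preserves it.

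For utility, I introduce the reference vector $\bar w := UU^\top w^*$, the orthogonal projection of the population minimizer onto $\mathrm{span}(\xpub)$. Using the reparametrization identity $\langle Uv,x\rangle = \langle v, U^\top x\rangle$ together with $U^\top w^* \in \tilde \cW$, the population excess risk decomposes as
\begin{align*}
L(\hat w;\cD) - L(w^*;\cD) &= \underbrace{[L(\hat w;\cD) - \hat L(\hat w;\spriv)]}_{\text{(i)}} + \underbrace{[\hat L(\tilde w;{\tilde{S}_{\text{priv}}}) - \hat L(U^\top w^*;{\tilde{S}_{\text{priv}}})]}_{\text{(ii)}} \\
&\quad + \underbrace{[\hat L(\bar w;\spriv) - L(\bar w;\cD)]}_{\text{(iii)}} + \underbrace{[L(\bar w;\cD) - L(w^*;\cD)]}_{\text{(iv)}}.
\end{align*}
Term (i) is the standard uniform-convergence gap for $G$-Lipschitz, $B$-bounded losses of norm-bounded linear predictors; Rademacher complexity plus McDiarmid give $\tilde O(GD\|\cX\|/\sqrt{\npriv}) + O(B\sqrt{\log(1/\beta)/\npriv})$ with high probability. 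Term (ii) is exactly what hypothesis~\eqref{eqn:private-subroutine-guarantee} controls, using that $\mathrm{diam}(\tilde \cW) \leq D$ and $\|U^\top x\|\leq \|\cX\|$, contributing $\tilde O(GD\|\cX\|(\sqrt{\npub\log(1/\delta)}+\sqrt{\log(1/\beta)})/(\npriv\epsilon))$. Term (iii) is Hoeffding on a single bounded function, giving $O(B\sqrt{\log(1/\beta)/\npriv})$.

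The main technical point is term (iv), where the unlabeled public data earns its keep. Setting $w_\perp := w^* - \bar w = (I-UU^\top)w^*$, one has $\|w_\perp\|\leq \|w^*\|\leq D$ and, crucially, $w_\perp \perp \mathrm{span}(\xpub)$, so $\langle w_\perp, x_i\rangle = 0$ for every public sample $x_i$. Using $G$-Lipschitzness of $\phi_y$, $L(\bar w;\cD) - L(w^*;\cD) \leq G\,\mathbb{E}_{x\sim\cD_\cX}[|\langle w_\perp, x\rangle|]$, while the empirical analogue $\tfrac{1}{\npub}\sum_i |\langle w_\perp, x_i\rangle|$ vanishes identically. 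I then apply uniform convergence over the class $\cG = \{x\mapsto |\langle w,x\rangle| : \|w\|\leq D\}$: by the contraction principle (absolute value is $1$-Lipschitz), $\frak{R}_{\npub}(\cG) = O(D\|\cX\|/\sqrt{\npub})$, and a bounded-differences concentration adds $O(D\|\cX\|\sqrt{\log(1/\beta)/\npub})$ with probability $1-\beta$. Consequently $\mathbb{E}_x[|\langle w_\perp, x\rangle|] = \tilde O(D\|\cX\|/\sqrt{\npub})$ and $(\text{iv}) = \tilde O(GD\|\cX\|\sqrt{\log(1/\beta)/\npub})$.

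Plugging in $\npub = \tilde\Theta(\npriv\epsilon / (\log(2/\beta)+\log(1/\delta))^{1/2})$ balances (ii) and (iv) at the common rate $\tilde O(GD\|\cX\|(\log(2/\beta)+\log(1/\delta))^{1/4}/\sqrt{\npriv\epsilon})$; combining with (i) and (iii) and union-bounding the four good events over a $\beta/4$ budget produces exactly the stated rate. The main obstacle is the approximation step (iv): the natural $\ell_2$ route via uniform convergence over $\{x\mapsto \langle w,x\rangle^2\}$ only delivers an $\npub^{-1/4}$ rate after passing from second to first moment via Jensen, which would destroy the advantage over treating all data as private. Routing instead through the $1$-Lipschitz absolute value and applying a Rademacher contraction is what recovers the correct $\npub^{-1/2}$ dependence and yields the final dimension-independent rate.
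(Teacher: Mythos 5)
Your proof is correct and uses essentially the same decomposition as the paper's: projection-induced reference point, uniform convergence over $\cW$ on $\spriv$, concentration for the fixed reference, the subroutine's empirical optimization guarantee, and an approximation term controlled by unlabeled public data. Where you genuinely diverge is term (iv). The paper proves (Lemma~\ref{lem:cover-concentration}) that an empirical $\ell_2$-cover on $\xpub$ is a population $\ell_2$-cover by treating $\|h-\tilde h\|^2_{2,\cdot}$ as a \emph{realizable} squared-loss prediction problem and invoking the \cite{srebro2010optimistic} optimistic rate, which yields $\|w^*-\bar w^*\|^2_{2,\cD_\cX}=\tilde O(\mathfrak{R}^2_{\npub}(\cH))=\tilde O(D^2\|\cX\|^2/\npub)$ and hence $\tilde O(D\|\cX\|/\sqrt{\npub})$ after a square root. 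Your route observes $w_\perp\perp\mathrm{span}(\xpub)$, goes directly through $G\,\mathbb{E}|\langle w_\perp,x\rangle|$, and applies plain Rademacher plus contraction on the absolute-value class; this is more elementary, avoids the optimistic-rate machinery, and sheds the $\log^3\npub$ factor hidden in the paper's $\tilde O$. It is a valid and arguably cleaner argument for the GLM case, though it does not generalize to the $\tau>0$ covers used for the fat-shattering extension the way the paper's lemma does.

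One inaccuracy in your rationale: you claim "the natural $\ell_2$ route only delivers an $\npub^{-1/4}$ rate after passing from second to first moment via Jensen," and that this is why you must route through absolute value. This is incorrect. The $\ell_2$ route does yield $\npub^{-1/2}$: the paper's key observation is that the empirical squared distance vanishes on the span cover (the same observation you use in the $\ell_1$ route), and then the Srebro-type optimistic rate controls the population squared distance at the \emph{fast} rate $\tilde O(1/\npub)$, not the slow $\tilde O(1/\sqrt{\npub})$ you implicitly assume; one square root later this is $\tilde O(1/\sqrt{\npub})$. So your alternative works, but the stated motivation for preferring it over the paper's approach is a misreading of what that approach achieves.
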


We note that  DP algorithms such as projected noisy SGD \citep{bassily2014private} and the regularized exponential mechanism \citep {gopi2022private}, both of which can be implemented efficiently, are can be used to achieve \eqref{eqn:private-subroutine-guarantee}, since the projected problem is at most $\npub$ dimensional. 

The above result shows that in the usual regime of $\epsilon=\Theta(1)$, there is no \textit{price} of privacy, thereby obtaining the non-private rate of $O\br{\frac{1}{\sqrt{\npriv}}}$. We contrast this with the rate of $O\br{\frac{1}{\sqrt{\npriv}} +\frac{\sqrt{d}}{\npriv\epsilon}}$, achievable without public data. Our result is better when $d\geq \npriv\epsilon$, which is the interesting regime since herein the private error dominates the non-private error. Further, our lower bound (\cref{thm:lb-known-marginal-rate} below) shows that this is the non-trivial regime (for any $\epsilon=O(1))$, since otherwise, even with unlimited public data, the optimal rate is achieved without using any of it.
We also note that the above rate is achievable without public data, but in the unconstrained setting where the output $\hat w$ can have very large norm and so may 
lie outside $\cW$ \citep{arora2022differentially}.

The proof of the result primarily follows from the more general result with fat-shattering hypothesis classes (\cref{thm:dp-fat-with-public-data}). We provide the key ideas as well as some details pertaining to linear predictors in \cref{sec:fat-shattering} after \cref{thm:dp-fat-with-public-data}.
The full proof of this result
is deferred to \cref{app:glm}.

\paragraph{Lower Bounds}
The above rate as well as the number of public samples used are nearly-optimal.
The first claim is due to the following result, which gives
a lower bound on excess risk of DP algorithms under full knowledge of the marginal distribution, for Lipschitz GLMs. As unlabeled public data can only reveal information about the marginal distribution, this shows that further unlabeled public samples cannot hope to improve the rate we give in Theorem \ref{thm:efficient-glm-lipschitz}.
\begin{theorem}
\label{thm:lb-known-marginal-rate}
Let $\epsilon\leq 1, \delta\leq \epsilon$ and $\cA$ be an $(\epsilon,\delta)$-DP algorithm. 
There exists a $G$-Lipschitz convex GLM loss function, and joint distribution $\cD$ such that %
given a dataset $S$ comprising $n$ i.i.d. samples from $\cD$ and full knowledge of the marginal distribution $\cD_\cX$, we have the following:
\linebreak
$\mathbb{E}_{\cA,S}\left[L(\cA(S);\cD) - \min_{w:\norm{w}\leq D}L(w;\cD)\right] = \Omega\br{GD\norm{\cX}\br{\frac{1}{\sqrt{n}}+\min\bc{\frac{1}{\sqrt{n\epsilon}},\frac{\sqrt{d}}{n\epsilon}}}}$. 
\end{theorem}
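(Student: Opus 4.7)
The bound decomposes as $\Omega(1/\sqrt n)$ (statistical) plus $\Omega(\min\{\sqrt d/(n\epsilon),\,1/\sqrt{n\epsilon}\})$ (privacy). I would prove each additive term via a separate construction; the $\min$ arises because one privacy construction dominates when $d \leq n\epsilon$ and another takes over when $d \geq n\epsilon$.

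\textbf{Non-private term.} For any fixed marginal $\cD_\cX$, choose two conditional label distributions so that the induced population minimizers are separated by $\Omega(D)$ and the excess-risk gap between swapping them is $\Omega(GD\|\cX\|/\sqrt n)$. A Le Cam two-point argument applied to the joint laws yields the bound for any measurable (hence any $(\epsilon,\delta)$-DP) algorithm; knowledge of $\cD_\cX$ does not help, since both hypotheses share the same marginal.

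\textbf{Privacy term for $d \leq n\epsilon$.} Use a linear-loss GLM $\phi_y(z) = -yz$, a product marginal $\cD_\cX$ (e.g., uniform on $\{\pm 1\}^d$, which makes $\Sigma = I$ trivially simulatable from $\cD_\cX$), and parametrize the conditional by a hidden $\mu$ via $\mathbb{E}[y \mid x] \propto \langle \mu, x\rangle$, so that $\mathbb{E}[yx] \propto \mu$ and the ball-constrained minimizer is $w^* = D\mu/\|\mu\|$. The excess risk of any output $\hat w$ is then controlled below (up to the factor $G\|\cX\|$) by the estimation error of $\mu$ through the inner product $\langle \hat w, \mu\rangle$. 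To produce this estimation lower bound, I would adapt the fingerprinting-code argument underlying \cref{thm:pub-priv-mean-lb} to the label-dependent products $\{y_i x_i\}$, whose population mean is a known multiple of $\mu$ and whose range is bounded, while treating the $x_i$ as public randomness. This yields the required $\Omega(GD\|\cX\|\sqrt d/(n\epsilon))$ lower bound on the excess risk.

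\textbf{Privacy term for $d \geq n\epsilon$ and main obstacle.} When $d \geq n\epsilon$, sub-embed the previous construction into a coordinate subspace of effective dimension $d' := \lfloor n\epsilon \rfloor$ by supporting $\mu$ on the first $d'$ coordinates and taking $\cD_\cX$ to be uniform on $\{\pm 1\}^{d'} \times \{0\}^{d-d'}$. The previous argument applied at dimension $d'$ gives $\Omega(\sqrt{d'}/(n\epsilon)) = \Omega(1/\sqrt{n\epsilon})$ excess risk, independent of the ambient $d$. The chief technical step is the fingerprinting adaptation in the $d \leq n\epsilon$ case: full knowledge of $\cD_\cX$ is strictly stronger than having $\npub$ public samples, so \cref{thm:pub-priv-mean-lb} cannot be invoked as a black box. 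Instead, the correlation/accuracy bounds behind \cref{sec:pub-priv-mean} must be re-derived on the synthetic signals $y_i x_i$ with $x_i$ regarded as public randomness, while ensuring the $\sqrt d$ dimension factor survives the reduction rather than degrading to $d$ or collapsing to a constant.
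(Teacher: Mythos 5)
Your non-private term and the sub-embedding trick for $d\geq n\epsilon$ match the paper's reasoning, but the privacy term is where the paper takes a shortcut you miss and where your own sketch has a gap that is not merely a technicality. The paper observes that \cite[Theorem 6]{arora2022differentially} \emph{already} proves the needed $\Omega\br{GD\norm{\cX}\min\bc{\frac{1}{\sqrt{n\epsilon}},\frac{\sqrt{d}}{n\epsilon}}}$ lower bound against a fully specified marginal: the hard instance there takes $\cD_\cX$ to be a fixed mixture of the origin and (rescaled) canonical basis vectors $\bc{e_j}_{j\leq d'}$ with $d'=\min(d,n\epsilon)$, uses the absolute GLM loss $\ell(w;(x,y))=\abs{y-\ip{w}{x}}$, and hides all the unknown structure in the conditional $y\mid x$, which deterministically reveals the $j$-th bit of a fingerprinting code whenever $x\propto e_j$. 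Since the marginal carries no information about the code, handing the learner full knowledge of $\cD_\cX$ changes nothing, and the theorem transfers immediately; the paper only needs to add that the optimizer lands in the $D$-ball.

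Your proposal instead rebuilds the fingerprinting bound from scratch with $\cD_\cX=\mathsf{Unif}(\{\pm1\}^d)$, linear loss $\phi_y(z)=-yz$, and $\mathbb{E}[y\mid x]\propto\ip{\mu}{x}$. You flag the re-derivation as the chief technical step but do not carry it out, and there are concrete reasons to doubt it goes through as sketched. First, the normalization needed for $\mathbb{E}[y\mid x]$ to be a valid conditional with $x\in\{\pm1\}^d$ forces $\mathbb{E}[y\mid x]=\ip{\mu}{x}/c$ with $c\gtrsim \norm{\mu}_1$, which attenuates the signal $\mathbb{E}[yx]=\mu/c$ and has to be re-tracked carefully to avoid losing a dimension factor. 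Second, and more fundamentally, your plan to apply the fingerprinting machinery to the ``synthetic signals'' $y_ix_i$ does not fit the hypotheses of Lemma~\ref{lem:fp-lemma}: that lemma (via Lemma~\ref{lem:dssuv-lemmas}) is a statement about samples from the $\mu$-biased \emph{product} distribution $\cD_\mu$, whereas each $y_ix_i$ in your construction couples $y_i$ (which depends on the inner product $\ip{\mu}{x_i}$, i.e.\ on all coordinates of $\mu$ simultaneously) across every coordinate, so the coordinates of $y_ix_i$ are not independent and $y_ix_i\not\sim\cD_{\mu/c}$. The basis-vector marginal in \cite{arora2022differentially} is designed precisely to avoid this: conditioning on $x=\norm{\cX}e_j$ makes the label a function of the single coordinate $z_j$, which is what the coordinate-wise fingerprinting / tracing argument needs. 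Without either switching to such a marginal or reproving a correlation lower bound for the correlated product $y_ix_i$, the privacy term does not follow.
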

We note that the bound with $\frac{\sqrt{d}}{\npriv\epsilon}$ can be achieved without using any public data via standard results \citep{bassily2019private,arora2022differentially}.
This result is largely a corollary of \cite[Theorem 6]{arora2022differentially}. We provide full details in Appendix \ref{app:lb-known-marginal}.

To establish optimality of public sample complexity, we give the following lower bound which shows that $\tilde{\Omega}(\npriv\epsilon)$ samples are necessary to achieve the above rate. See Appendix \ref{app:lb-public-samples} for proof.

\vspace{-6pt}
\begin{theorem}
\label{thm:lb-public-samples}
    Let $\npriv,\npub, d \in \bbN, \epsilon \leq 1, \delta < \frac{1}{16dn}$ and $d =  \omega(\npriv \epsilon)$. If there exists an $(\epsilon,\delta)$-PA-DP algorithm $\cA$, which, for any $G$-Lispschitz convex GLM, achieves \ifarxiv the excess risk guarantee \linebreak \else excess risk \linebreak \fi
    $\mathbb{E}\left[L(\cA(\xpub,\spriv);\cD) - \min_{w:\norm{w}\leq D}L(w;\cD) \right] 
    = O\br{GD\norm{\cX}\br{\frac{1}{\sqrt{\npriv}} + \frac{\sqrt{\mathrm{log}(1/\delta)}}{\sqrt{\npriv\epsilon}}}},$
    \ifarxiv whenever \else for \fi \linebreak $\spriv\sim\cD^\npriv$ and $\xpub\sim\cD^{\npub}_\cX$, then $\npub = \Omega(\frac{\npriv\epsilon}{\log{1/\delta}})$.
\end{theorem}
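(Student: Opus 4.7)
The plan is to derive \cref{thm:lb-public-samples} as a direct consequence of the labeled-public-data lower bound in \cref{thm:dp-sco-lb-public-data}, which, as noted in the paper, holds for convex GLMs. The key observation is that an $(\epsilon,\delta)$-PA-DP algorithm $\cA$ for the unlabeled-public setting trivially yields an $(\epsilon,\delta)$-PA-DP algorithm $\cA'$ for the labeled-public setting: set $\cA'(\spub,\spriv) := \cA(\xpub,\spriv)$ where $\xpub$ is the feature part of $\spub$. This preserves PA-DP (the guarantee is only with respect to the private data, and discarding the public labels is post-processing of public data) and yields the same excess risk whenever $\spub\sim\cD^{\npub}$, since then $\xpub\sim\cD_\cX^{\npub}$. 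Thus any lower bound from \cref{thm:dp-sco-lb-public-data} applies to $\cA$ itself.

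Next I would reduce to the regime where the first (stronger) branch of $\Psi$ applies. Without loss of generality, assume $\npub\leq\npriv$, since otherwise $\npub\geq\npriv\geq\npriv\epsilon/\log(1/\delta)$ holds trivially (using $\epsilon\leq 1$). Then $n=\Theta(\npriv)$, the assumption $d=\omega(\npriv\epsilon)$ gives $d\geq cn\epsilon$, and $\log(1/[\sqrt{nd}\,\delta])=\Theta(\log(1/\delta))$ since $\delta<1/(16dn)$. If $\npub>\tfrac{n\epsilon}{c\log(1/[\sqrt{nd}\,\delta])}$, the conclusion follows directly. Otherwise, \cref{thm:dp-sco-lb-public-data} yields an excess risk lower bound of
$$\Omega\br{GD\norm{\cX}\cdot \min\bc{\tfrac{1}{\sqrt{\npub}},\ \tfrac{1}{\sqrt{n}} + \tfrac{\sqrt{d\log(1/\delta)}}{n\epsilon}}}.$$

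Finally, I would match this against the hypothesized upper bound $O\br{GD\norm{\cX}\br{\tfrac{1}{\sqrt{\npriv}}+\tfrac{\sqrt{\log(1/\delta)}}{\sqrt{\npriv\epsilon}}}}$, which forces the minimum to be $O\br{\tfrac{\sqrt{\log(1/\delta)}}{\sqrt{\npriv\epsilon}}}$ (the second term dominates since $\epsilon\leq 1\leq\log(1/\delta)$). I then split cases on which term of the minimum dominates. If $\tfrac{1}{\sqrt{\npub}}$ attains the min, rearrangement immediately gives $\npub=\Omega(\npriv\epsilon/\log(1/\delta))$, as desired. Otherwise $\tfrac{\sqrt{d\log(1/\delta)}}{n\epsilon}=O\br{\tfrac{\sqrt{\log(1/\delta)}}{\sqrt{\npriv\epsilon}}}$, which via $n=\Theta(\npriv)$ simplifies to $d=O(\npriv\epsilon)$, contradicting $d=\omega(\npriv\epsilon)$, so this case cannot occur. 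The main obstacle is the careful bookkeeping of regime conditions needed to land in the stronger $\sqrt{\log(1/\delta)}$ branch of $\Psi$; once that branch is secured the case analysis is a short computation and no additional technical machinery beyond \cref{thm:dp-sco-lb-public-data} is required.
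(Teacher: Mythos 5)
Your proof is correct and takes essentially the same route as the paper: both reduce to \cref{thm:dp-sco-lb-public-data}, exploit that its hard instance is a GLM whose labels carry no information, and derive $\npub=\Omega(\npriv\epsilon/\log(1/\delta))$ by pitting the lower bound $\Psi$ against the hypothesized upper bound. The paper is slightly more concrete (it fixes $\cD_y$ to the point mass on $1$ so labeled and unlabeled public samples are literally identical and then proceeds by contradiction with $\npub=o(\npriv\epsilon/\log(1/\delta))$), whereas you phrase the same observation as a black-box reduction $\cA'(\spub,\spriv):=\cA(\xpub,\spriv)$; your write-up is actually a bit more careful about the regime bookkeeping (the WLOG $\npub\leq\npriv$ giving $n=\Theta(\npriv)$, and explicitly verifying that the $\sqrt{\log(1/\delta)}$ branch of $\Psi$ applies), steps the paper elides. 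No gap.
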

\rnote{ is $\cD^{\npub}_\cX$ notation defined in the prelims as the marginal?}\mnote{yes}
\rnote{I prefer the alternative version modulo the comments I left}\mnote{I left the previous version commented above. Also, made some fixes to log factors. $\Omega(\frac{\npriv\epsilon}{\log{1/\delta}})$ seems to be the strongest lower bound we can give on the public samples while using a rate we know is obtainable.}

\noindent\textbf{Optimistic rates } 
We now consider additional assumptions that the loss function is non-negative and $H$-smooth,
 such as in the case of linear regression where $\phi_y(a)=(a-y)^2$.
This is a well-studied setting \citep{srebro2010optimistic} especially since it allows for obtaining optimistic rates: those that interpolate between a slow worst-case rate and a faster rate under (near) realizability or interpolation conditions. The main result is the following.

\begin{theorem}
\label[theorem]{thm:dp-public-data-glm-smooth-glm}
\sloppy
 Let $\epsilon>0,\delta>0$ and $\epsilon\leq \log{1/\delta}$. 
 For a $G$-Lipschitz, $B$-bounded non-negative  $H$-smooth loss function, 
 \cref{alg:dp-glm-efficient} satisfies $(\epsilon,\delta)$-PA-DP. 
    If the private subroutine $\tilde \cA$ guarantees \cref{eqn:private-subroutine-guarantee} with probability at least $1-\beta$,
then with {\small $ \npub = \tilde O\Big(\frac{(HD\norm{\cX})^{2/3}(\npriv\epsilon)^{2/3}}{G^{2/3}(\log{1/\delta})^{1/3}} + \frac{\sqrt{H}\npriv\epsilon\sqrt{\hat L(\hat w^*;\spriv)}}{G\sqrt{\log{1/\delta})}}\Big)$}, with probability at least $1-\beta$,  
{\small
\begin{align*} 
      L(\hat w;\cD) - \hat L(\hat w^*;\spriv) &= 
      \tilde O{\br{\br{\frac{\sqrt{H}D\norm{\cX}}{\sqrt{\npriv\epsilon}}+\sqrt{\frac{B}{\npriv}}} \sqrt{\hat L(\hat w^*;\spriv)} + \frac{H^{1/4}D\norm{\cX}\sqrt{G}\hat L(\hat w^*;\spriv)^{1/4}}{\sqrt{\npriv\epsilon}} }}\\
      & + \tilde O\br{\frac{GD\norm{\cX}}{\npriv \epsilon}
  +\br{\frac{\sqrt{H}D^2\norm{\cX}^2G }{\npriv\epsilon}}^{2/3}+\frac{H\norm\cX^2D^2}{\npriv\epsilon} + \frac{B}{\npriv}}
    \end{align*}
    }
where $\hat w^*$ is the minimizer of $\hat L$ w.r.t $\spriv$ and $\tilde O$ hides $\text{poly}(\log{1/\delta},\log{1/\beta})$ terms.
\end{theorem}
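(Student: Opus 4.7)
The plan is to decompose $L(\hat w;\cD) - \hat L(\hat w^*;\spriv)$ into generalization, optimization, and projection components, and to control each using (i) the self-bounding inequality $|\phi_y'(a)|^2 \leq 2H\phi_y(a)$ valid for non-negative $H$-smooth $\phi_y$, (ii) the optimistic Rademacher bound for $H$-smooth non-negative $B$-bounded losses, and (iii) a Bernstein-type uniform concentration argument that exploits the projection onto $\mathrm{span}(\xpub)$. Privacy is immediate: since $U$ is measurable with respect to $\xpub$ alone, conditional on $\xpub$ the map $\spriv\mapsto\tilde S_{\text{priv}}$ is a deterministic neighbor-preserving transformation, so $\tilde\cA(\tilde S_{\text{priv}})$ is $(\epsilon,\delta)$-DP in $\spriv$ and $\hat w = U\tilde w$ is $(\epsilon,\delta)$-PA-DP by post-processing.

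Let $P = UU^\top$ and $\bar w = \arg\min_{w\in P\cW}\hat L(w;\spriv)$, and write
\begin{align*}
L(\hat w;\cD) - \hat L(\hat w^*;\spriv) &= \underbrace{\bsq{L(\hat w;\cD) - \hat L(\hat w;\spriv)}}_{\text{gen}} + \underbrace{\bsq{\hat L(\hat w;\spriv) - \hat L(\bar w;\spriv)}}_{\text{opt}}\\
&\quad + \underbrace{\bsq{\hat L(\bar w;\spriv) - \hat L(\hat w^*;\spriv)}}_{\text{proj}}.
\end{align*}
The optimization term is controlled by \eqref{eqn:private-subroutine-guarantee} applied in ambient dimension $\dim(\tilde\cW)\leq \npub$, giving $\text{opt} = \tilde O(GD\norm{\cX}\sqrt{\npub\log(1/\delta)}/(\npriv\epsilon))$. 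For $\text{gen}$ I would invoke an optimistic Rademacher bound, which combined with the dimension-free $\mathfrak R_{\npriv}(\cH) = O(D\norm{\cX}/\sqrt{\npriv})$ yields $\text{gen} = \tilde O\br{\sqrt H D\norm\cX\sqrt{\hat L(\hat w;\spriv)}/\sqrt{\npriv} + HD^2\norm\cX^2/\npriv + B/\npriv}$; using $\hat L(\hat w;\spriv)\leq \hat L(\hat w^*;\spriv)+\text{opt}+\text{proj}$, an AM--GM step replaces $\sqrt{\hat L(\hat w;\spriv)}$ by $\sqrt{\hat L(\hat w^*;\spriv)}$ up to terms that can be absorbed into $\text{opt}+\text{proj}$.

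The main obstacle is bounding the projection term. Upper bounding $\text{proj}\leq \hat L(P\hat w^*;\spriv)-\hat L(\hat w^*;\spriv)$ and expanding by $H$-smoothness of each $\phi_{y_i}$, writing $v = (I-P)\hat w^*$ (so $\norm v\leq D$ and $v\perp\mathrm{span}(\xpub)$), Cauchy--Schwarz together with the self-bounding inequality gives
\[
\text{proj} \leq \sqrt{2H\hat L(\hat w^*;\spriv)\cdot v^\top\hat\Sigma_{\text{priv}}v} + \tfrac{H}{2}\,v^\top\hat\Sigma_{\text{priv}}v,
\]
where $\hat\Sigma_{\text{priv}} = \frac{1}{\npriv}\sum_i x_ix_i^\top$. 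The key reduction is to prove $v^\top\hat\Sigma_{\text{priv}}v = \tilde O(D^2\norm\cX^2/\npub)$ uniformly over $v\in\cW\cap\mathrm{span}(\xpub)^\perp$. The essential observation is that any such $v$ annihilates every $x_j\in\xpub$, hence $v^\top\hat\Sigma_{\text{pub}}v = 0$; then a uniform variance-adaptive (Bernstein) concentration of the quadratic class $\bc{x\mapsto\langle v,x\rangle^2:\norm v\leq D}$ applied to $\xpub$ converts this identity into $v^\top\Sigma v = \tilde O(D^2\norm\cX^2/\npub)$ by solving the resulting self-referential inequality, and a second variance-adaptive deviation applied to $\spriv$ lifts the bound to $\hat\Sigma_{\text{priv}}$. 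This variance-adaptive step is what upgrades the worst-case $1/\sqrt{\npub}$ rate to the $1/\npub$ rate and is responsible for the final bound being genuinely dimension-free.

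Collecting the three contributions and optimizing over $\npub$ yields the theorem. The two terms in the stated $\npub$ arise from balancing $\text{opt}$ against the two components of $\text{proj}$: against the $\hat L^*$-independent quadratic $HD^2\norm\cX^2/\npub$, the balance occurs at $\npub\approx (HD\norm\cX)^{2/3}(\npriv\epsilon)^{2/3}/G^{2/3}$ and produces the $\br{\sqrt H D^2\norm\cX^2 G/(\npriv\epsilon)}^{2/3}$ contribution; against the $\hat L^*$-dependent $\sqrt{H\hat L^*}D\norm\cX/\sqrt{\npub}$, the balance occurs at $\npub\approx \sqrt H\npriv\epsilon\sqrt{\hat L^*}/G$ and yields the $H^{1/4}D\norm\cX\sqrt G\,\hat L^{*1/4}/\sqrt{\npriv\epsilon}$ contribution. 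The residual $GD\norm\cX/(\npriv\epsilon)$, $HD^2\norm\cX^2/(\npriv\epsilon)$, and $B/\npriv$ terms come respectively from the $\sqrt{\log(1/\beta)}$ addendum in \eqref{eqn:private-subroutine-guarantee}, the quadratic projection cost at the chosen $\npub$, and the non-smooth tail of the generalization bound.
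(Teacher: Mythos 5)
Your proposal is essentially correct and follows the same high-level blueprint as the paper's proof — decompose into generalization, optimization (via \eqref{eqn:private-subroutine-guarantee}), and a projection/approximation cost, then control the last via smoothness, the self-bounding inequality $|\phi_y'|^2\le 2H\phi_y$, Cauchy--Schwarz, and a fast-rate concentration argument on the public data. The core mechanism you identify — zero empirical second moment on $\xpub$ for $v\perp\mathrm{span}(\xpub)$ plus a realizable/variance-adaptive deviation bound forcing $v^\top\Sigma v=\tilde O(D^2\|\cX\|^2/\npub)$ — is exactly the content the paper extracts from \cref{lem:cover-concentration} (which in turn instantiates the optimistic rate of \cite{srebro2010optimistic} in the realizable regime for the squared loss $x\mapsto\langle v,x\rangle^2$). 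Your balancing of $\npub$ against the two components of the projection cost correctly reproduces both terms in the stated $\npub$ and both $(\npriv\epsilon)^{-2/3}$ and $\hat L^{*1/4}(\npriv\epsilon)^{-1/2}$ contributions.

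The genuine difference is where the projection cost is evaluated. You keep it empirical: $\hat L(P\hat w^*;\spriv)-\hat L(\hat w^*;\spriv)$ leads you to $v^\top\hat\Sigma_{\mathrm{priv}}v$, which then requires \emph{two} uniform concentration passes (public to population, then population to the private empirical second moment), with the second one being uniform over $v$ because $\hat w^*$ is $\spriv$-dependent. The paper instead keeps the projection cost at the population level, $L(\mathring w^*;\cD)-L(w^*;\cD)\le 2\sqrt{HL(w^*;\cD)}\,\alpha+H\alpha^2$, needing only the single public-to-population step, and converts to the $\hat L(\hat w^*;\spriv)$ form at the very end via $L(w^*;\cD)\le L(\hat w^*;\cD)$ and one more application of uniform convergence. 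Both routes land on the same rate; yours is somewhat more self-contained and elementary (you argue directly on the quadratic form rather than going through the fat-shattering cover abstraction of \cref{lem:cover-concentration,lem:cover-in-span}), while the paper's route avoids the second uniform-deviation pass and is written to reuse the general fat-shattering machinery of \cref{sec:fat-shattering}. One place to be careful in your version: the second (private-side) Bernstein step must indeed be made uniform over the data-dependent $v\in\cW\cap\mathrm{span}(\xpub)^\perp$, and the resulting $D\|\cX\|\sigma/\sqrt{\npriv}$ cross term must be checked to be dominated by $D^2\|\cX\|^2/\npub$ — this holds since $\npub=\tilde O(\npriv\epsilon)\le\npriv$, but it is a condition worth stating explicitly.
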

\rnote{As discussed earlier, if we cannot express the second bound in terms of the true empirical risk minimizer, we should remove it and just keep the first bound}
\enayat{changed}

\sloppy 
A similar result as above can be obtained 
with  $\hat L(\hat w^*;\spriv)$ replaced by $L(w^*;\cD)$ above -- see \cref{app:dp-public-data-glm-smooth-glm} for the full theorem statement.
This rate, in the worst-case, is essentially the same as that of \cref{thm:efficient-glm-lipschitz}, which is $\tilde O\br{\frac{1}{\sqrt{\npriv}} + \frac{1}{\sqrt{\npriv\epsilon}}}$.
However, optimistically, when $L(w^*;\cD)$ or $\hat L(\hat w^*;\spriv)$
\rnote{potentially change this reference to $\hat L(\hat w;\spriv)$ if my comment above concerning the second bound is implemented}
\enayat{changed}
is small, we get a faster rate of 
$\tilde O\br{\frac{1}{\npriv} + \frac{1}{(\npriv\epsilon)^{2/3}}}$.
We note that this is seemingly weaker than what is known in the unconstrained setting, where \cite{arora2022differentially} obtained a worst-case rate of $\tilde O\br{\frac{1}{\sqrt{\npriv}} + \frac{1}{(\npriv\epsilon)^{2/3}}}$.
We show that we can recover this faster rate under an extra assumption that the global minimizer of the risk, lies in the constraint set $\cW$ -- note that this is trivially true in the unconstrained setup; see \cref{app:dp-public-data-glm-smooth-global-min} for the statement.

We note that projected noisy SGD \citep{bassily2014private} and the regularized exponential mechanism \citep {gopi2022private}, both of which can be implemented efficiently, are possible choices for the private sub-routine $\tilde \cA$ that realize the above theorem statements.

\subsection{ PA-DP Supervised learning of Fat-Shattering Classes}
\label{sec:fat-shattering}

In this section, we consider a general supervised learning setting with fat-shattering hypothesis classes and potentially non-convex losses, with unlabeled public data. Our proposed algorithm is similar to that of \cite{ABM19}, which uses the pubic unlabeled data to construct a small finite, yet representative, subset of the hypothesis class. Our construction uses a cover of the hypothesis class with respect to the $\ell_2$ distance of predictions on the public data points. We then use the exponential mechanism to privately select a hypothesis using the empirical loss on private data as the score function.

We note that we operate under the pure DP setting (as opposed to approximate DP). 
Our techniques are based on selection which do not exhibit improved guarantees under approximate DP. Further, we note that, without public data, with non-convex losses, there is no separation of optimal rates between pure and approximate DP \citep{ganesh2023universality}.

\begin{algorithm}[h]
\caption{Supervised private learning with public unlabeled data}
\label{alg:dp-glm}
\begin{algorithmic}[1]
\REQUIRE Datasets $X_{\text{pub}}$ and $S_{\text{priv}}$, privacy parameter $\epsilon>0$, scale of cover $\alpha>0$, $\gamma>0$.
\STATE Construct $\tilde \cH$, a minimal $\alpha$-cover of $\cH$,  with respect to the following metric
\vspace{-5pt}
$$\norm{h_1-h_2}_{2,\xpub} = \sqrt{\frac{1}{\npub}\sum_{x\in \xpub}\br{h_1(x) - h_2(x)}^2}$$
\vspace{-10pt}
\STATE Return $\hat h$ sampled with probability $p(h)\propto \exp{-\gamma \hat L(h;\spriv)}$ over $h \in \tilde \cH$
\end{algorithmic}
\end{algorithm}

Our main result for the Lispchitz setting is the following.
\begin{theorem}
\sloppy
\label[theorem]{thm:dp-fat-with-public-data}
    Algorithm \ref{alg:dp-glm} with $\gamma = \frac{2\min\br{B, GR}}{\npriv \epsilon}$ satisfies $\epsilon$-PA-DP. For any $\alpha>0$ and
$\npub = O\br{\max\br{\frac{R^2\log{
2/\beta}}{\alpha^2}, \min\bc{m: \text{log}^3(m)\frak{R}_{m}^2(\cH) \leq \alpha^2}}} < \infty$, with probability at least $1-\beta$,
we have $L(\hat h;\cD) - \min\limits_{h\in \cH}L(h;\cD)$ is at most
    \begin{align*}
    2G\frak{R}_{\npriv}(\cH) + O\br{\frac{B\sqrt{\log{4/\beta}}}{\sqrt{\npriv}}}
    + \tilde O\br{\frac{\min{(B,GR)}(\fatc + \log{4/\beta})}{\npriv\epsilon}}+ 2G\alpha,
    \end{align*}
where $c$ is an absolute constant.
\end{theorem}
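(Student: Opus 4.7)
\textbf{Proof plan for Theorem \ref{thm:dp-fat-with-public-data}.}

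My plan is to decompose the argument into four pieces: privacy of the mechanism, cover-size control via fat-shattering, transfer of the empirical cover on $\xpub$ to a population cover on $\cD_\cX$, and a standard excess-risk decomposition that calls the exponential mechanism utility bound on the private side and Rademacher-based uniform convergence on the generalization side. For privacy, I first note that $\tilde\cH$ is a (deterministic) function of $\xpub$ alone, so releasing $\hat h$ amounts to running the exponential mechanism over a fixed finite set with score $-\hat L(\cdot;\spriv)$. Since $\ell(h;x,y)=\phi_y(h(x))\in[0,\min(B,GR)]$ (using $G$-Lipschitzness and $|h(x)|\leq R$), the score has $\ell_\infty$-sensitivity $\min(B,GR)/\npriv$ under replacement of one private sample. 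Choosing $\gamma$ as specified then makes the mechanism $\epsilon$-DP with respect to $\spriv$, i.e., $\epsilon$-PA-DP.

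Next I will control $|\tilde\cH|$. By the Mendelson--Vershynin refinement of the Bartlett--Long--Williamson bound for fat-shattering classes, the empirical $\ell_2$ covering number on $m$ points satisfies $\log \cN_2(\cH,\alpha,m)\leq \tilde O(\mathrm{fat}_{\alpha/c}(\cH))$ for an absolute constant $c$; applying this with $m=\npub$ gives $\log|\tilde\cH|\leq \tilde O(\mathrm{fat}_{\alpha/c}(\cH))$. For the transfer step I argue that, under the stated lower bound on $\npub$, with probability at least $1-\beta/2$ the empirical $\alpha$-cover $\tilde\cH$ is an $O(\alpha)$-cover of $\cH$ with respect to $\|\cdot\|_{2,\cD_\cX}$. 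The piece $\npub=\Omega(R^2\log(2/\beta)/\alpha^2)$ handles concentration of a single squared difference via Hoeffding, and the piece $\npub=\Omega(\min\{m:\log^3(m)\mathfrak{R}_m^2(\cH)\leq\alpha^2\})$ upgrades this to a uniform statement over $\cH-\cH$: the Rademacher complexity of the squared-difference class is bounded (by contraction, since $|h_1(x)-h_2(x)|\leq 2R$) by $O(R\mathfrak{R}_m(\cH))$, and the extra logarithmic factors come from a standard chaining / peeling argument.

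For the utility analysis let $h^*\in\arg\min_{h\in\cH}L(h;\cD)$ and let $\tilde h^*\in\tilde\cH$ be its nearest cover element in $\|\cdot\|_{2,\cD_\cX}$. I decompose
\[
L(\hat h;\cD)-L(h^*;\cD) = \underbrace{[L(\hat h;\cD)-\hat L(\hat h;\spriv)]}_{(\mathrm{I})} + \underbrace{[\hat L(\hat h;\spriv)-\hat L(\tilde h^*;\spriv)]}_{(\mathrm{II})} + \underbrace{[\hat L(\tilde h^*;\spriv)-L(\tilde h^*;\cD)]}_{(\mathrm{III})} + \underbrace{[L(\tilde h^*;\cD)-L(h^*;\cD)]}_{(\mathrm{IV})}.
\]
Term (IV) is bounded by $G\,\mathbb{E}_{\cD_\cX}|\tilde h^*(x)-h^*(x)|\leq G\|\tilde h^*-h^*\|_{2,\cD_\cX}\leq 2G\alpha$ by Jensen and the population-cover guarantee. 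Terms (I) and (III) are handled uniformly over $\cH$: by Talagrand's contraction inequality, the $G$-Lipschitz loss class has Rademacher complexity at most $G\mathfrak{R}_{\npriv}(\cH)$, and the usual symmetrization / bounded-difference argument yields $\sup_{h\in\cH}|L(h;\cD)-\hat L(h;\spriv)|\leq 2G\mathfrak{R}_{\npriv}(\cH)+O(B\sqrt{\log(4/\beta)/\npriv})$ with probability at least $1-\beta/4$. Term (II) is precisely the utility of the exponential mechanism: with probability $1-\beta/4$, $\hat L(\hat h;\spriv)-\min_{h\in\tilde\cH}\hat L(h;\spriv) = O\!\left(\frac{\log|\tilde\cH|+\log(4/\beta)}{\gamma}\right)$, which after plugging in $\log|\tilde\cH|\leq\tilde O(\mathrm{fat}_{\alpha/c})$ and the choice of $\gamma$ gives $\tilde O\!\left(\frac{\min(B,GR)(\mathrm{fat}_{\alpha/c}+\log(4/\beta))}{\npriv\epsilon}\right)$. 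Summing the four bounds and applying a union bound over the three $\beta/4$ failure events yields the stated excess risk.

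The step I expect to be the main technical obstacle is the population-cover transfer: showing that an empirical $\ell_2$ cover built from $\xpub$ remains an $O(\alpha)$-cover in $\ell_2(\cD_\cX)$, uniformly over $h\in\cH$, while keeping $\npub$ scaling only logarithmically in $1/\beta$ and driven by the Rademacher/fat-shattering complexity. This is where the $\log^3(m)\mathfrak{R}_m^2(\cH)\leq\alpha^2$ condition enters, and it requires either a chaining argument for the class of squared differences or an appeal to a uniform convergence theorem for $\ell_2$ norms of function classes; the other steps (privacy, EM utility, contraction-based generalization) are essentially routine given the setup.
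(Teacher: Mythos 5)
Your four-piece plan (exponential-mechanism privacy over the fixed cover $\tilde\cH$, fat-shattering control of $|\tilde\cH|$, transfer of the empirical cover on $\xpub$ to a population cover, and the four-term risk decomposition) is the same skeleton as the paper's proof, and the privacy, cover-size, exponential-mechanism-utility and Rademacher-uniform-convergence pieces are all essentially right.

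The gap is in the transfer step, and it is exactly the step you flag as the main obstacle. You propose to bound $\mathbb{E}_{\cD_\cX}|h-\tilde h|^2 - \frac{1}{\npub}\sum_{x\in\xpub}|h(x)-\tilde h(x)|^2$ uniformly over $\cH$ via Hoeffding plus Talagrand contraction, noting that the squared-difference class has Rademacher complexity $O(R\,\mathfrak R_{\npub}(\cH))$. But that worst-case bound gives a fluctuation of order $R\,\mathfrak R_{\npub}(\cH)+R^2\sqrt{\log(1/\beta)/\npub}$, so to make it $\le\alpha^2$ you would need $R\,\mathfrak R_{\npub}(\cH)\lesssim\alpha^2$ — for norm-bounded linear predictors this is $\npub\gtrsim R^4/\alpha^4$, quadratically worse than the stated $\npub\gtrsim R^2/\alpha^2$ (which is what $\mathfrak R_m^2(\cH)\le\alpha^2$ gives). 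A "standard chaining / peeling argument" as you describe it does not by itself fix this; the log factors are not the issue. The missing idea is that the squared-difference problem is (approximately) \emph{realizable}: the empirical risk at the cover element is $\le\tau^2$, so one must use a localized/optimistic uniform-convergence bound for smooth non-negative losses — the paper applies Theorem 1 of \cite{srebro2010optimistic} to the squared loss $t\mapsto t^2$ with $h$ as the labeling function, whose realizable-case rate is $O(\log^3(m)\mathfrak R_m^2(\cH)+R^2\log(1/\beta)/m)$, matching the claimed $\npub$. Your second suggestion ("an appeal to a uniform convergence theorem for $\ell_2$ norms of function classes") is the right instinct; the concrete fix is to explicitly invoke an optimistic rate (equivalently, local Rademacher complexity around radius $\tau$) rather than worst-case contraction.

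Two small side remarks: (i) under the paper's replace-one adjacency the sensitivity of $-\hat L(\cdot;\spriv)$ is $\frac{2}{\npriv}\min(B,GR)$, not $\min(B,GR)/\npriv$; (ii) for the approximation term, the paper compares $L(\tilde h^*;\cD)$ to $L(\bar h^*;\cD)$ where $\bar h^*$ is the element of $\tilde\cH$ closest to $h^*$ in the \emph{empirical} metric and then transfers to the population metric — your phrasing "nearest cover element in $\|\cdot\|_{2,\cD_\cX}$" glosses over the fact that such an element is only guaranteed to exist after Lemma \ref{lem:cover-concentration} is applied, which is fine but worth stating in the right order.
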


Our result shows that the model of PA-DP with unlabeled public data allows for obtaining non-trivial rates for supervised learning with any fat-shattering class, as is the case in the non-private setting.
Further, in many standard settings, such as that of (Euclidean) GLMs, the Rademacher complexity is $\frak{R}_m(\cH) = O\br{\frac{1}{\sqrt{m}}}$ which implies that $\fat = O\br{\frac{1}{\alpha^2}}$ (see \cref{thm:fat-rademacher-relation}). In those cases, our guarantee simplifies to essentially yield a rate of $O\br{\frak{R}_{\npriv}(\cH) + \frac{1}{(\npriv\epsilon)^{1/3}} + \frac{1}{\sqrt{\npriv}}}$ -- see \cref{cor:dp-public-data-glm-nonconvex} for the exact statement for GLMs.

\noindent \textbf{Proof Idea }
We briefly discuss some main ideas in the proof.
The key is to show that if $\tilde \cH$ is a cover of $\cH$ with respect to the \textit{empirical distance} on public feature vectors, $\norm{\cdot}_{2,\xpub}$, then with enough public feature vectors, it is also a cover with respect to the \textit{population distance} $\norm{\cdot}_{2,\cD_\cX}$. 
This is captured in the following result.

\begin{lemma}
\sloppy
\label{lem:cover-concentration}
Let $\tilde \cH$ be a $\tau$-cover of $\cH$ with respect to $\norm{\cdot}_{2,\xpub}$.
For 
$\npub = O\br{\max\br{\frac{R^2\log{
1/\beta}}{\alpha^2}, \min\bc{m: \text{log}^3(m)\frak{R}_{m}^2(\cH) \leq \alpha^2}}}<\infty$,
    for every $h\in \cH$, with probability at least $1-\beta$, there exists $\tilde h\in \tilde \cH$ such that $\|h-\tilde h\|_{2,\cD_\cX} \leq \alpha + \tau$.
\end{lemma}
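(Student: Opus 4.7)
The plan is to derive the cover-preservation claim from a uniform convergence statement about $\ell_2$-distances over $\cH$: with probability at least $1-\beta$ over the draw of $\xpub$,
\[
\sup_{h_1, h_2 \in \cH} \Bigl| \|h_1 - h_2\|_{2, \xpub}^2 - \|h_1 - h_2\|_{2, \cD_\cX}^2 \Bigr| \leq \alpha^2. \qquad (\star)
\]
Granted $(\star)$, given $h \in \cH$, picking $\tilde h \in \tilde \cH$ with $\|h - \tilde h\|_{2,\xpub} \leq \tau$ (which exists by the cover assumption) yields $\|h - \tilde h\|_{2,\cD_\cX}^2 \leq \tau^2 + \alpha^2 \leq (\tau + \alpha)^2$, and the bound $\|h - \tilde h\|_{2,\cD_\cX} \leq \tau + \alpha$ follows by taking square roots.

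To establish $(\star)$, I work with the function class $\cG = \{(h_1 - h_2)^2 : h_1, h_2 \in \cH\}$, which takes values in $[0, 4R^2]$ since $|h(x)| \leq R$. The strategy is the standard two-step recipe: (i) a concentration inequality for the supremum around its mean, and (ii) a symmetrization plus Rademacher bound on the mean. For (i), Talagrand's inequality for empirical processes, using the variance control $\mathrm{Var}_{x \sim \cD_\cX}(g(x)) \leq 4R^2 \mathbb{E}_{x \sim \cD_\cX}[g(x)]$ and the sample-size condition $\npub \gtrsim R^2 \log(1/\beta)/\alpha^2$, ensures that the deviation of the supremum from its mean is $O(\alpha^2)$. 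For (ii), symmetrization gives $\mathbb{E}\sup_{g \in \cG}\bigl|\frac{1}{\npub}\sum_{x \in \xpub} g(x) - \mathbb{E}[g]\bigr| \leq 2\mathbb{E}\frak{R}_{\npub}(\cG)$, so it suffices to show $\mathbb{E}\frak{R}_{\npub}(\cG) \lesssim \log^{3/2}(\npub) \cdot \frak{R}_{\npub}^2(\cH)$.

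The key step is this sharp bound on $\mathbb{E}\frak{R}_{\npub}(\cG)$. A naive Ledoux-Talagrand contraction with $z \mapsto z^2$ (which is $4R$-Lipschitz on $[-2R, 2R]$) would give only $\frak{R}_{\npub}(\cG) \leq 16R \cdot \frak{R}_{\npub}(\cH)$, which is linear in $\frak{R}$ and loses a factor of roughly $R/(\alpha \cdot \mathrm{polylog})$ relative to what is needed. To close the gap, I would apply Dudley's chaining bound to the squared process and use the Mendelson-Vershynin empirical $\ell_2$ cover-number estimate $\log \cN_2(\cH, \epsilon, m) \lesssim \text{fat}_{\epsilon/c}(\cH) \log^2(Rm/\epsilon)$; integrating, together with the fact that fat-shattering-based covers of $\cH$ transfer to $\cH - \cH$ up to constants, yields the required quadratic-in-$\frak{R}$ dependence with the $\log^{3/2}$ overhead.

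Finiteness of the prescribed $\npub$ is automatic: since $\text{fat}_\alpha(\cH) < \infty$ at every positive scale, $\frak{R}_m(\cH) \to 0$ as $m \to \infty$, so the implicit condition $\log^3(m^*) \frak{R}_{m^*}^2(\cH) \leq \alpha^2$ is attained at some finite $m^*$. The main technical obstacle is precisely this sharp chaining bound in step (ii): without the quadratic-in-$\frak{R}$ improvement, the resulting sample complexity would scale as $1/\alpha^4$ rather than the stated $1/\alpha^2$ (up to polylogs), which would in particular invalidate the downstream proof of Theorem \ref{thm:dp-fat-with-public-data}.
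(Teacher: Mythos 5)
Your plan diverges from the paper's, and the divergence creates a real gap at the step you flag as the "main technical obstacle." You aim to prove a two‐sided uniform bound
$(\star)\colon \sup_{h_1,h_2\in\cH}\bigl|\|h_1-h_2\|_{2,\xpub}^2-\|h_1-h_2\|_{2,\cD_\cX}^2\bigr|\le\alpha^2$,
and reduce it to showing $\mathbb{E}\,\mathfrak{R}_{\npub}(\cG)\lesssim \log^{3/2}(\npub)\,\mathfrak{R}^2_{\npub}(\cH)$ for $\cG=\{(h_1-h_2)^2\}$. That inequality is false in general, even for bounded linear predictors. Take a worst‐case dataset with all $m$ points equal, $x_i=x_1$ with $\|x_1\|=\|\cX\|$, and $\cH=\{x\mapsto\langle w,x\rangle:\|w\|\le D\}$. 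Since each $g\in\cG$ is nonnegative, $\mathfrak{R}_m(\cG)=\mathbb{E}\sup_g (\tfrac1m\sum_i\sigma_i)\,g(x_1)=4D^2\|\cX\|^2\,\mathbb{E}(\tfrac1m\sum_i\sigma_i)_+=\Theta(D^2\|\cX\|^2/\sqrt{m})$, whereas $\mathfrak{R}_m^2(\cH)=\Theta(D^2\|\cX\|^2/m)$. The gap is $\Theta(\sqrt{m})$, up to polylogs, and Dudley chaining with the Mendelson–Vershynin estimate cannot close it: running the integral on $\cG$ (whose covers are inherited from $\cH-\cH$ with a Lipschitz factor of $O(R)$) gives again $\mathfrak{R}_m(\cG)=\tilde O(R^2/\sqrt{m})$, consistent with this lower bound. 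So $(\star)$ cannot hold at scale $\alpha^2$ with the stated sample complexity; your reduction to a global Rademacher bound on $\cG$ is irreparably lossy.

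The paper sidesteps exactly this by not proving a uniform two‐sided bound. Instead, it fixes $h$ and writes $\|h-\tilde h\|_{2,\cD_\cX}^2=(\mathbb{E}|h-\tilde h|^2-\hat{\mathbb{E}}|h-\tilde h|^2)+\hat{\mathbb{E}}|h-\tilde h|^2$, bounds the second term by $\tau^2$ via the cover property, and controls the first by viewing $\bar h\mapsto |h-\bar h|^2$ as a smooth nonnegative squared loss in the realizable setting (target $h$) and invoking the optimistic rate of Srebro, Sridharan and Tewari. That theorem is a localized bound of the form $L(\bar h)\le\hat L(\bar h)+O(\sqrt{\hat L(\bar h)}\cdot\mathfrak{R}_{\mathrm{eff}}+\mathfrak{R}_{\mathrm{eff}}^2)$; instantiated at $\bar h=\tilde h$ with $\hat L(\tilde h)\le\tau^2$, the right side is $(\tau+O(\alpha))^2$. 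The quadratic‐in‐$\mathfrak{R}$ dependence thus comes from localization around predictors with small empirical error, not from a global complexity bound on $\cG$ — which is precisely the ingredient missing from your argument. If you want a proof in your style, you would need to replace the global Rademacher bound on $\cG$ by a local (Bousquet/Bartlett–Mendelson‐type) Rademacher argument around the small‐empirical‐loss ball, at which point you are effectively re-deriving the optimistic‐rates theorem the paper cites.
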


This result allows us to appropriately approximate a hypothesis class with enough public unlabeled points. This approximation roughly translates to the same additive error in the final bound while concurrently allowing for the use of the smaller finite hypothesis class $\tilde \cH$ of  
size $|\tilde \cH| = \tilde O(\text{fat}_\tau(\cH))$.

For linear predictors with convex losses, as in \cref{thm:efficient-glm-lipschitz}, we show that the $\text{span}(\xpub) \cap \cW$ is a valid $0$-cover w.r.t. $\|\cdot\|_{2,\xpub}$.
However, the cover being continuous and convex allows application of convex optimization techniques (as opposed to selection, as above), thereby obtaining stronger results with efficient procedures.

\paragraph{Optimistic rates} Our procedure yields optimistic rates for non-negative and smooth losses, 
depending on realizability/interpolation conditions, that is, whenever $L(h^*;\cD)$ or $\hat L(\hat h;
\spriv)$ is small.

\begin{theorem}
\sloppy
\label[theorem]{thm:dp-fat-with-public-data-smooth}
    Algorithm \ref{alg:dp-glm} with $\gamma = \frac{2B}{\npriv \epsilon}$ satisfies $\epsilon$-PA-DP.
    For
    $\npub = O\br{\max\br{\frac{R^2\log{
2/\beta}}{\alpha^2}, \min\bc{m: \text{log}^3(m)\frak{R}_{m}^2(\cH) \leq \alpha^2}}} < \infty$
    and any $\alpha>0$, with probability at least $1-\beta$, we have
    \begin{align*}
    L(\hat h;\cD) - \hat L(\hat h^*;\spriv) &= \tilde O\br{\sqrt{H}\frak{R}_{\npriv}(\cH)+\sqrt{H}\alpha+ \sqrt{\frac{B\log{8/\delta}}{\npriv}}}\sqrt{\hat L(\hat h^*;\spriv)} \\
    &+ 
    \tilde O\br{H\frak{R}^2_{\npriv}(\cH)  + H\alpha^2 + \frac{B\log{8/\beta}}{\npriv} + \frac{B(\fatc + \log{4/\beta})}{\npriv\epsilon}}
    \end{align*}
where $\hat h^*$ is the minimizer of $\hat L$ w.r.t $\spriv$ and $c$ is an absolute constant.
\end{theorem}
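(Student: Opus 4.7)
The plan is to mirror the proof template of \cref{thm:dp-fat-with-public-data}, replacing Hoeffding/Lipschitz-style concentration by self-bounded (Bernstein) concentration that produces optimistic rates. Privacy is immediate: $\tilde{\cH}$ depends only on public feature vectors, and since $\ell$ is $B$-bounded, the score $\hat L(\cdot;\spriv)$ has sensitivity $B/\npriv$, so the exponential mechanism with $\gamma = 2B/(\npriv\epsilon)$ satisfies $\epsilon$-PA-DP.

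For utility, I would decompose
\[
L(\hat h;\cD)-\hat L(\hat h^*;\spriv) = \bigl[L(\hat h;\cD)-\hat L(\hat h;\spriv)\bigr] + \bigl[\hat L(\hat h;\spriv)-\hat L(\tilde h^*;\spriv)\bigr] + \bigl[\hat L(\tilde h^*;\spriv)-\hat L(\hat h^*;\spriv)\bigr],
\]
where $\tilde h^*\in\tilde\cH$ is a nearest neighbour of $\hat h^*$ in $\|\cdot\|_{2,\cD_\cX}$. Four ingredients bound the three brackets: (i) \cref{lem:cover-concentration} gives, with high probability, an $O(\alpha)$-cover of $\cH$ in $\|\cdot\|_{2,\cD_\cX}$, and the Mendelson--Vershynin control of covering numbers for fat-shattering classes gives $\log|\tilde\cH|=\tilde O(\text{fat}_{c\alpha}(\cH))$; (ii) the standard exponential-mechanism utility bound controls the middle bracket by $\tilde O(B(\text{fat}_{c\alpha}(\cH)+\log(1/\beta))/(\npriv\epsilon))$; (iii) for non-negative $H$-smooth $\phi_y$ the self-bounding inequality $|\phi_y(a)-\phi_y(b)|\le \sqrt{2H\phi_y(b)}\,|a-b|+H(a-b)^2/2$, combined with Cauchy--Schwarz in expectation, yields
\[
|L(\tilde h^*;\cD)-L(\hat h^*;\cD)|\le \sqrt{2H\,L(\hat h^*;\cD)}\,\alpha + H\alpha^2/2;
\]
(iv) for non-negative $H$-smooth $B$-bounded losses, the optimistic/local Rademacher uniform convergence bound of \citep{srebro2010optimistic} gives, for every $h\in\cH$,
\[
|L(h;\cD)-\hat L(h;\spriv)|\le \tilde O\!\Bigl(\bigl(\sqrt{H}\,\frak{R}_{\npriv}(\cH)+\sqrt{B\log(1/\beta)/\npriv}\bigr)\sqrt{L(h;\cD)}\Bigr) + \tilde O\!\bigl(H\frak{R}_{\npriv}^2(\cH)+B\log(1/\beta)/\npriv\bigr).
\]

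Combining (i)--(iv), the third bracket is bounded (after converting $\sqrt{L(\hat h^*;\cD)}$ to $\sqrt{\hat L(\hat h^*;\spriv)}$ via (iv) applied to $\hat h^*$) by $O(\sqrt{H}\alpha+a')\sqrt{\hat L(\hat h^*;\spriv)}$ plus low-order terms, where $a'=\tilde O(\sqrt{H}\frak{R}_{\npriv}(\cH)+\sqrt{B\log(1/\beta)/\npriv})$; the first bracket is handled by (iv) applied to $\hat h$ and an AM-GM step that swaps $\sqrt{\hat L(\hat h;\spriv)}$ for $\sqrt{\hat L(\hat h^*;\spriv)}$ using the middle-bracket bound. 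This yields an inequality of the form
\[
L(\hat h;\cD) - \hat L(\hat h^*;\spriv) \le a\sqrt{\hat L(\hat h^*;\spriv)} + b,
\]
with $a=\tilde O(\sqrt{H}\frak{R}_{\npriv}(\cH)+\sqrt{H}\alpha+\sqrt{B\log(1/\beta)/\npriv})$ and $b$ matching the additive term in the statement. I expect the main obstacle to be ingredient (iii): the cover built from public data only controls distances in $\|\cdot\|_{2,\cD_\cX}$, while the decomposition contains the empirical quantity $\hat L(\tilde h^*;\spriv)-\hat L(\hat h^*;\spriv)$. Converting between population distance and empirical loss difference while preserving the optimistic $\sqrt{\hat L(\hat h^*;\spriv)}$ scaling forces the cover error to appear as $\sqrt{H}\alpha\sqrt{\hat L(\hat h^*;\spriv)}+H\alpha^2$ rather than the cruder $\sqrt{H}\alpha$; this is what couples the self-bounding step with the optimistic uniform convergence bound and is the part that must be executed carefully.
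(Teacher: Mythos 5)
Your proposal uses exactly the same ingredients as the paper's proof: exponential-mechanism sensitivity $\frac{2B}{\npriv}$ for the privacy claim, \cref{lem:cover-concentration} plus \cref{lem:size-of-cover} for the cover in $\|\cdot\|_{2,\cD_\cX}$ with $\log|\tilde\cH|=\tilde O(\mathrm{fat}_{c\alpha}(\cH))$, the smooth self-bounding bound $|\phi_y'(b)|\le\sqrt{2H\phi_y(b)}$ with Cauchy--Schwarz to control the cover error, and the optimistic Srebro-style uniform convergence bound. The difference is purely in the order of assembly: the paper first proves the version of the theorem with $L(h^*;\cD)$ on the right (using $\tilde h^*\in\arg\min_{h\in\tilde\cH}L(h;\cD)$ and bounding $L(\tilde h^*;\cD)-L(h^*;\cD)$ via the cover element nearest to $h^*$), then obtains the $\hat L(\hat h^*;\spriv)$ version as a corollary by chaining $L(h^*;\cD)\le L(\hat h^*;\cD)$ with uniform convergence and AM--GM. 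You instead attack $L(\hat h;\cD)-\hat L(\hat h^*;\spriv)$ directly, choosing $\tilde h^*$ as the cover neighbour of the data-dependent $\hat h^*$, and absorb the resulting empirical-to-population conversion in the third bracket. Both routes work; the paper's order is slightly cleaner because $h^*$ is data-independent so \cref{lem:cover-concentration} applies without any conditioning argument, whereas yours requires observing that $\hat h^*$ depends only on $\spriv$, which is independent of $\xpub$, so one may condition on $\spriv$ before invoking the lemma --- a point you should make explicit if you write this out. You correctly identify the obstacle in ingredient (iii), and your sketch of how the cover error couples to the optimistic scaling is right; just be careful that the two-sided inequality $|L(\tilde h^*;\cD)-L(\hat h^*;\cD)|$ with $\sqrt{L(\hat h^*;\cD)}$ on the right requires an extra AM--GM to symmetrize, since the one-sided smoothness bound around $\tilde h^*$ gives $\sqrt{L(\tilde h^*;\cD)}$ instead.
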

The same holds with $\hat L(\hat h^*;\spriv)$  replaced with $L(h^*;\cD)$ above -- see \cref{app:dp-fat-with-public-data-smooth}.

\subsubsection{Application: Neural Networks}
In this section, we instantiate our general result to give a guarantee for learning feed-forward neural networks in the PA-DP setting. We use the result of \cite{golowich2018size} but note that other results which give bounds on the Rademacher complexity of neural networks, such as \cite{bartlett2017spectrally, sellke2023size} can similarly be used.

We consider a depth $\depth$ 
\rnote{this is not really important, but the notation looks very similar to the one we use for the distribution }\ranote{It seems like the symbol for depth is set as a macro and easily changed, I would suggest using $L$ following several prior works, but since it may be confused with the loss, perhaps use $M?$}
\enayat{changed to $M$}
feed-forward neural network which implements the function $x\mapsto W_{\depth}(\sigma(W_{\depth-1}\ldots \sigma(W_1x))\ldots)$. 
Here,  $W_1, W_2, \ldots, W_\depth$ are the weight matrices and $\sigma$ is a (non-linear) activation function. We consider $1$-Lipschtiz positive-homogeneous activation such as the ReLU function, $\sigma(z) = \max(0,z)$, applied coordinate-wise. 
Our main result is the following.

\begin{corollary}
\label[corollary]{cor:dp-public-data-nn}
\sloppy
Let $\br{R_j}_{j=1}^\depth$ be a sequence of scalars and $\depth\in \bbN$.
In the setting of \cref{thm:dp-fat-with-public-data} with $\cX = \bc{x \in \bbR^d:\norm{x}\leq \norm{\cX}}$ and 
$\cH$ being the class of depth $\depth$ feed-forward neural networks, with $1$-Lipschitz positive-homogenous activation, and weight matrices, bounded as $\norm{W_j}_F\leq R_j$, with
$\npub = \tilde O\br{(\norm{\cX}(\prod_{j=1}^\depth R_j))^{2/3}(\npriv\epsilon)^{2/3}\depth^{1/3}\log{2/\beta}}$, with probability at least $1-\beta$, $L(\hat h;\cD) - \min_{h\in \cH}L(h;\cD)$ is at most,
\begin{align*}
    O\br{\frac{G\norm{\cX}\sqrt{\depth}\prod_{j=1}^\depth R_j}{\sqrt{\npriv}} +\frac{B\sqrt{\log{4/\beta}}}{\sqrt{\npriv}}+\frac{B\log{4/\beta}}{\npriv\epsilon}
    \br{\frac{BG^2\depth\norm{\cX}^2(\prod_{j=1}^{\depth}R_j)^2}{\npriv\epsilon}}^{1/3}}.
\end{align*}
\end{corollary}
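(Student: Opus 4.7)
}
The plan is to reduce the corollary to a direct instantiation of \cref{thm:dp-fat-with-public-data} by computing two complexity parameters for the neural-network class $\cH$: its worst-case Rademacher complexity $\mathfrak{R}_{\npriv}(\cH)$ and its fat-shattering dimension $\text{fat}_\alpha(\cH)$. Everything else in the corollary is then obtained by choosing $\alpha$ to balance the $G\alpha$ and $\text{fat}$-based terms appearing in the bound of \cref{thm:dp-fat-with-public-data}.

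First, I would invoke the Rademacher-complexity bound of \citet{golowich2018size} for depth-$\depth$ feed-forward networks with $1$-Lipschitz positive-homogeneous activations and Frobenius-norm constrained weights $\|W_j\|_F\leq R_j$, applied to the input set $\{x:\|x\|\leq \|\cX\|\}$. This gives
\[
\mathfrak{R}_m(\cH) \;=\; O\!\left(\frac{\|\cX\|\sqrt{\depth}\prod_{j=1}^{\depth} R_j}{\sqrt{m}}\right),
\]
and in particular $R:=\sup_{x\in\cX}|h(x)| \leq \|\cX\|\prod_{j=1}^\depth R_j$ by the same norm-contraction argument. Next, to control $\text{fat}_\alpha(\cH)$, I would use the standard Rademacher-to-fat-shattering conversion (the analogue of \cref{thm:fat-rademacher-relation} already cited for GLMs, which follows from Sudakov minoration and a Mendelson-Vershynin type argument): any class with $\mathfrak{R}_m(\cH)\leq C/\sqrt{m}$ satisfies $\text{fat}_\alpha(\cH)=O(C^2/\alpha^2)$ up to logarithmic factors. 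Taking $C=\|\cX\|\sqrt{\depth}\prod_j R_j$ yields
\[
\text{fat}_\alpha(\cH) \;=\; \tilde O\!\left(\frac{\|\cX\|^2\,\depth \prod_{j=1}^{\depth} R_j^2}{\alpha^2}\right).
\]

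Substituting both quantities into the guarantee of \cref{thm:dp-fat-with-public-data} leaves a bound whose $\alpha$-dependent part is $O(G\alpha) + \tilde O(B\,\text{fat}_{c\alpha}(\cH)/(\npriv\epsilon))$. Balancing these two terms gives the minimizer
\[
\alpha^\star \;=\; \left(\frac{B\,\|\cX\|^2\,\depth \prod_j R_j^2}{G\,\npriv\epsilon}\right)^{1/3},
\]
which produces the cubic-root term $(BG^2\depth\|\cX\|^2\prod_j R_j^2/(\npriv\epsilon))^{1/3}$ stated in the corollary. The non-private term $2G\mathfrak{R}_{\npriv}(\cH)$ becomes the first summand, and the remaining sampling/privacy additive terms come directly from \cref{thm:dp-fat-with-public-data}. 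For the public sample complexity requirement, I would use the formula $\npub = O(\max\{R^2\log(2/\beta)/\alpha^2,\min\{m:\log^3(m)\mathfrak{R}_m^2(\cH)\leq \alpha^2\}\})$ from \cref{thm:dp-fat-with-public-data}; plugging in the neural-network Rademacher bound turns the $\min$ expression into $\tilde O(\|\cX\|^2\,\depth \prod_j R_j^2/\alpha^2)$, which, once evaluated at $\alpha^\star$, simplifies to the stated $\tilde O((\|\cX\|\prod_j R_j)^{2/3}(\npriv\epsilon)^{2/3}\depth^{1/3})$.

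The main obstacle I anticipate is not the optimization step but making the Rademacher-to-fat-shattering conversion quantitatively clean enough to preserve the $\depth^{1/3}$ rate and the correct logarithmic factors; in particular, the conversion loses $\text{polylog}$ factors that must be carefully absorbed into the $\tilde O$. A secondary technical point is ensuring that $R=\|\cX\|\prod_j R_j$ is a valid uniform bound on $|h(x)|$ for all $h\in\cH$ so that the requirement $\sup_{x\in\cX}|h(x)|\leq R$ of \cref{sec:prelims} is met; this follows from the $1$-Lipschitz positive-homogeneous property of $\sigma$ composed with the Frobenius-norm bounds on each layer.
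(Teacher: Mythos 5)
Your proposal matches the paper's proof: you invoke the Golowich--Rapaport--Shamir Rademacher bound (\cref{thm:rademacher_nn}) together with the Rademacher-to-fat-shattering conversion (\cref{thm:fat-rademacher-relation}), note $R\leq \|\cX\|\prod_j R_j$, plug into \cref{thm:dp-fat-with-public-data}, and balance the $G\alpha$ and $B\,\text{fat}_{c\alpha}(\cH)/(\npriv\epsilon)$ terms to obtain the same $\alpha^\star$ and public sample complexity. One small reassurance: your worry about polylog losses in the fat-shattering conversion is unfounded here, since \cref{thm:fat-rademacher-relation} gives $\text{fat}_\alpha(\cH)\leq 4m\mathfrak{R}_m(\cH)^2/\alpha^2$ with no log overhead, so the only logs in the final bound come from the covering-number step already inside \cref{thm:dp-fat-with-public-data}.
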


We note that the above result has a polynomial dependence on the depth $\depth$, which is a consequence of the (non-private) Rademacher complexity of \cite{golowich2018size}. 
It is also possible to get fully size-independent bounds by utilizing such existing results, however they require more stringent norm bounds on the weight matrices \citep{sellke2023size}. Further, a similar result follows for non-negative smooth losses from \cref{thm:dp-fat-with-public-data-smooth}, but we omit this extension for brevity.

\subsubsection{Application: Non-Euclidean GLMs}
In the non-Euclidean GLM setting, we consider $(\bbX, \norm{\cdot})$ as a $d$ dimensional (where $d \in \bbN \cup \bc{\infty})$  Banach space, and $(\bbW, \norm{\cdot}_*)$ is its dual space. 
The feature vectors $x$ are bounded as $\cX = \bc{x \in \bbX:\norm{x}\leq \norm{\cX}}$ and $\cW \subseteq \bc{w \in \bbW: \Delta(w)\leq D^r}$ where $\Delta$ is a $r$-uniformly convex function\footnote{
A differentiable function $\Psi$ is $r$-uniformly convex w.r.t. $\norm{\cdot}$, if  $\Psi(u')\geq  \Psi(u) + \ip{\nabla \Psi(u)}{u'-u} + \frac{1}{r}\norm{u-u'}^r.$
}
with respect to $\norm{\cdot}_*$.
  A canonical example is the $(\ell_p, \ell_q)$-setup \citep{kakade2008complexity,feldman2017statistical}:
  \begin{CompactItemize}
    \item For $p=1$, $\Delta(w) = \frac{\log{d}}{2}\norm{w}^2_{1+(1/\log{d})}$ is $2$-uniformly convex with respect to $\norm{\cdot}_1$.
      \item For $1< p \leq 2$, $\Delta(w) = \frac{1}{2(p-1)}\norm{w}^2_p$ is $2$-uniformly convex with respect to $\norm{\cdot}_p$.
      \item For $ p \geq 2$, $\Delta(w) =  \frac{2^{p-2}}{p}\norm{w}_p^{p}$ is $p$-uniformly convex, with respect to $\norm{\cdot}_p$. %
  \end{CompactItemize}

The GLM loss function $\ell(w;x,y) = \phi_y(\ip{w}{x})$ where $\ip{\cdot}{\cdot}: \cX \times \cW \rightarrow \bbR$ is a duality pairing.
In this case, the Rademacher complexity of linear functions, is bounded as $  
O\br{\frac{D\norm{\cX}}{m^{1/r}}}$,
where $s$ is the conjugate of $r$ i.e. $\frac{1}{r} + \frac{1}{s}=1$ (see, e.g.~\cite{feldman2017statistical}).
We obtain the following result as a corollary of 
\cref{thm:dp-fat-with-public-data} by instantiating it with the Rademacher complexity and fat-shattering dimension of non-Euclidean GLMs.

\begin{corollary}
\label[corollary]{cor:non-euclidean}
\sloppy
In the setting of \cref{thm:dp-fat-with-public-data},  together with $\cX = \bc{x \in \bbR^d:\norm{x}\leq \norm{\cX}}$ and 
$\cH = \bc{x \mapsto \ip{w}{x}, x \in \cX, \Delta(w) \leq D^r}$ .
Given 
$\npub = \tilde O\br{(\npriv\epsilon)^{r/(r+1)}\log{2/\beta}}$, with probability at least $1-\beta$, $  L(\hat w;\cD) - \min_{w\in \cW}L(w;\cD)$ is at most
\begin{align*}
       \tilde O\br{GD\norm{\cX}\br{\frac{1}{\npriv^{1/r}}+ \frac{\sqrt{\log{4/\beta}}}{\sqrt{\npriv}} + \frac{\log{2/\beta}}{(\npriv\epsilon)^{\frac{1}{r+1}}}+\frac{\log{4/\beta}}{\npriv \epsilon}}+\frac{B\sqrt{\log{4/\beta}}}{\sqrt{\npriv}}}.
\end{align*}
\end{corollary}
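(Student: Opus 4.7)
\textbf{Proof plan for Corollary \ref{cor:non-euclidean}.} The approach is to instantiate Theorem \ref{thm:dp-fat-with-public-data} with the hypothesis class of norm-constrained linear predictors over the Banach space $(\bbX,\|\cdot\|)$, then to choose the cover scale $\alpha$ that balances the privacy term against the cover error, and finally to verify that the resulting number of public samples matches the stated $\npub$. Three quantities must be identified first: (i) the uniform prediction bound $R = \sup_{h \in \cH, x \in \cX} |h(x)|$, (ii) the Rademacher complexity $\mathfrak{R}_{\npriv}(\cH)$, and (iii) the fat-shattering dimension $\fat$ at an appropriate scale.

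For (i), Young's inequality applied to the $r$-uniform convexity of $\Delta$ with respect to $\|\cdot\|_*$ gives $\|w\|_* \lesssim D$ whenever $\Delta(w) \le D^r$, hence $R = O(D\|\cX\|)$. For (ii), the paper quotes $\mathfrak{R}_m(\cH) = O(D\|\cX\|/m^{1/r})$, which I will use directly. For (iii), I would derive a bound on $\fat$ from the Rademacher complexity via the standard converse: if $\cH$ $\alpha$-fat-shatters a set $S$ of size $m$, then the Rademacher complexity of $\cH$ restricted to $S$ is at least $\alpha/2$, so $\fat$ is at most the largest $m$ with $\mathfrak{R}_m(\cH) \ge \alpha/2$. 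Combining with the Rademacher bound yields $\fat = O\bigl((D\|\cX\|/\alpha)^r\bigr)$, which specializes to the well-known Euclidean bound $(D\|\cX\|/\alpha)^2$ at $r=2$.

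Plugging these into the excess risk guarantee of Theorem \ref{thm:dp-fat-with-public-data} gives four principal contributions:
\begin{align*}
\tilde O\!\left(\frac{GD\|\cX\|}{\npriv^{1/r}}\right)
+ O\!\left(\frac{B\sqrt{\log(4/\beta)}}{\sqrt{\npriv}}\right)
+ \tilde O\!\left(\frac{GD\|\cX\|\,(D\|\cX\|/\alpha)^r}{\npriv\epsilon}\right)
+ 2G\alpha.
\end{align*}
Balancing the last two terms requires $\alpha^{r+1} \asymp (D\|\cX\|)^{r+1}/(\npriv\epsilon)$, giving the optimal choice $\alpha = \Theta\!\bigl(D\|\cX\|/(\npriv\epsilon)^{1/(r+1)}\bigr)$ and a rate of $GD\|\cX\|/(\npriv\epsilon)^{1/(r+1)}$, exactly the claimed exponent. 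The additional $\log(4/\beta)/(\npriv\epsilon)$ term in the corollary comes from the $\log(4/\beta)$ summand inside the $\fat$ term of Theorem \ref{thm:dp-fat-with-public-data}.

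Finally, I would substitute this $\alpha$ into the public-sample requirement of Theorem \ref{thm:dp-fat-with-public-data}. The cover-diameter term contributes $R^2/\alpha^2 = O\!\bigl((\npriv\epsilon)^{2/(r+1)}\bigr)$, while solving $\log^3(m)\,\mathfrak{R}_m^2(\cH) \le \alpha^2$ for the second term yields $m = \tilde O\!\bigl((\npriv\epsilon)^{r/(r+1)}\bigr)$. Since $r \ge 2$ in all the non-Euclidean examples, the second is dominant, giving the stated $\npub = \tilde O\!\bigl((\npriv\epsilon)^{r/(r+1)}\log(2/\beta)\bigr)$. The main obstacle is the fat-shattering bound: one must ensure the converse Rademacher-to-fat implication gives the correct exponent $r$ (rather than a looser exponent); this is why I explicitly invoke $\mathfrak{R}_m$ evaluated at $m = \fat$, rather than a Dudley-type integral, which would lose logarithmic factors and, more importantly, would not preserve the correct power of $\alpha$ in the non-Euclidean regime.
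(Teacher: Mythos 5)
Your proposal is correct and follows essentially the same route as the paper. The only real stylistic difference is in how you bound $\fat$: you argue directly from the converse implication (a $c\alpha$-fat-shattered set of size $m$ forces $\mathfrak{R}_m(\cH)\geq c\alpha$, hence $m\lesssim (D\|\cX\|/\alpha)^r$), whereas the paper invokes \cref{thm:fat-rademacher-relation} (Lemma~A.3 of \cite{srebro2010optimistic}) and organizes the calculation by first setting $\alpha = (\log\npub)^{3/2}\mathfrak{R}_{\npub}(\cH)$ so that $\fat=\tilde O(\npub)$ and then optimizing over $\npub$; both paths give $\fat = O\bigl((D\|\cX\|/\alpha)^r\bigr)$, the same balance $\alpha\asymp D\|\cX\|/(\npriv\epsilon)^{1/(r+1)}$, and the same public-sample count $\npub=\tilde O\bigl((\npriv\epsilon)^{r/(r+1)}\log(2/\beta)\bigr)$, so this is a reorganization rather than a different argument.
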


The above yields guarantees for the special case of $\br{\ell_p,\ell_q}$-setup with $r = \max\bc{2,p}$.
We remind that in the (constrained) \textit{convex} Euclidean GLM setting, our dimension-independent rate in \cref{thm:efficient-glm-lipschitz}, with public unlabeled data recover the rates which were known to be achievable in the unconstrained setting. 
Further the above rate for $p=1$ case can be used to obtain guarantees for the polyhedral setting with $\norm{w}_1\leq D$ constraint, resulting in a $O\br{{\sqrt{\frac{\log{d}}\npriv}}+ \br{\frac{\log{d}}{\npriv\epsilon}}^{1/3}}$ rate. We note that  \cite{bassily2021differentially} showed a rate of $\tilde{O}\br{\sqrt{\frac{\log{d}}n}+\frac{\sqrt{\log{d}}} {\sqrt{n\epsilon}}}$ for this setting, with convex losses without public data.
Importantly, for the other cases, i.e. $p>1, p\neq 2$,  there are no such (nearly) dimension-independent analogs of our result without public data, as of yet.

\mnote{I guess arguably \cite{bassily2021differentially} shows an analogous rate can be obtained for $p=1$ (w/o public data), but I suppose we don't consider $p=1$ here so comment this out if you feel it is not relevant enough.}
\enayat{Isn't there a $\log{d}$ term in the bound? The above result can apply to $p=1$ case too but in finite dimensions, I can check how the bound would look like. We can make a comment about \cite{bassily2021differentially} then}
\mnote{Ah, I thought there was something else preventing the $\ell_1$ result here. In that case perhaps it is not so important. It seems to me that based on \cite{feldman2017statistical} we would at best get $\frac{\sqrt{\log{d}}}{\sqrt{n\epsilon}}$. If so, we could just say that ``In finite dimensions when $p=1$, rates with $\log{d}$ dependence are obtainable, although in this setting \cite{bassily2021differentially} showed a rate of $\frac{\sqrt{\log{d}}}{\sqrt{n\epsilon}}$ is obtainable even without public data.''} \rnote{This makes the comment I left in the beginning of this section even more relevant}
\enayat{I modified the discussion adding the $p=1$ case. By the way, do we know if the same rate, with possibly weaker exponent of $1/3$, would hold for the n\textbf{on-convex case} for $p=1$? Note for $p=2$, the JL method with ERM in low dimensions seems to recover the $(n\epsilon)^{-1/3}$ rate}\mnote{I don't believe anything like this is established in prior work, certainly it wouldn't follow from anything similar to the Frank-Wolfe analysis we use in \cite{bassily2021differentially}}

\section*{Acknowledgements}\label{sec:ack}
RB's and MM's research is supported by NSF CAREER Award 2144532 and NSF Award 2112471. 
RA's and EU's research is supported by NSF BIGDATA award IIS-1838139 and NSF CAREER award IIS-1943251.
CG's research was partially supported by
INRIA Associate Teams project, ANID FONDECYT 1210362 grant, ANID Anillo ACT210005 grant, and National Center for Artificial Intelligence CENIA FB210017, Basal ANID.

\bibliographystyle{alpha} 
\bibliography{references}

\appendix

\newpage
\section{Additional Preliminaries}

\begin{theorem}
 \cite[Theorem 12.7]{rakhlin2012statistical}
\label[theorem]{thm:fat-shattering-covering}
   For $\cH \subseteq [-R,R]^{\cX}$, $m\in \bbN$, $p\geq 1$, $0< \alpha \leq R$, we have that
   \begin{align*}
        \cN_2(\cH,\alpha, m) \leq \br{\frac{2R}{\alpha }}^{C\text{fat}_{c\alpha}(\cH)}.
   \end{align*}
   Further, for any $\tau \in (0,1)$,
   \begin{align*}
     \log{\cN_\infty(\cH,\alpha, m)} \leq C'\text{fat}(\cH,c'\tau\alpha)\log{\frac{Rm}{\text{fat}(\cH,c'\tau\alpha)\alpha}}\text{log}^\tau\br{\frac{m}{\text{fat}(\cH,c'\tau\alpha)}},
   \end{align*}
\end{theorem}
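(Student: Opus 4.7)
My plan is to establish both bounds via the classical reduction of covering numbers to the fat-shattering dimension, going back to Alon--Ben-David--Cesa-Bianchi--Haussler and refined by Mendelson--Vershynin and Rudelson--Vershynin. Throughout, I fix a dataset $X=(x_1,\ldots,x_m)$ and view $\cH|_X \subseteq [-R,R]^m$ as a subset of a bounded cube; the whole problem reduces to bounding the cardinality of an $\alpha$-separated packing in the appropriate empirical metric, and then taking the supremum over $X$.

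\textbf{The $\ell_2$ bound.} First, I would quantize $[-R,R]$ to a grid of width $\alpha/c_0$ for a small constant $c_0$, at the cost of a constant factor in $\alpha$; this reduces the problem to bounding the size of an $\alpha$-separated subset $T \subseteq \{0,1,\ldots,N\}^m$ with $N = O(R/\alpha)$ realized by members of $\cH$. The core combinatorial step is a real-valued Sauer--Shelah lemma: any such $\alpha$-separated set has cardinality at most $(2R/\alpha)^{C\,\text{fat}_{c\alpha}(\cH)}$, with no dependence on $m$. To prove this, assume for contradiction that $|T|$ exceeds the claimed bound. Then by a probabilistic extraction argument---selecting a random coordinate subset and random thresholds at scale $c\alpha$ on each coordinate, and averaging the indicator of ``shattering''---one produces with positive probability a subset of coordinates of size strictly greater than $\text{fat}_{c\alpha}(\cH)$ which is $c\alpha$-shattered by $\cH$, contradicting the definition of the fat-shattering dimension. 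The fact that the exponent is $\log(R/\alpha)$ rather than $\log m$ is the content of the $m$-independent refinement of Mendelson--Vershynin, and is what allows the bound to be meaningful even for very large samples.

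\textbf{The $\ell_\infty$ bound.} The naive route---applying the $\ell_2$ bound at scale $\alpha/\sqrt{m}$---loses an $m^{O(\text{fat})}$ factor, which is much worse than the claimed $\log^\tau(m/\text{fat})$ overhead. My plan is to use a chaining argument in the style of Rudelson--Vershynin. I would build a nested sequence of $\ell_2$-covers at dyadically decreasing scales $\alpha/2^k$, whose log-cardinalities are controlled by the first part of the theorem. A probabilistic sharpening step then converts $\ell_2$-closeness to $\ell_\infty$-closeness: at each chaining level, the deviation of a random coordinate from the $\ell_2$-mean concentrates sub-Gaussianly on the bounded cube, and a union bound over the cover at that level loses only a $\mathrm{polylog}(m)$ factor. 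Finally, the chaining depth is optimized through the parameter $\tau \in (0,1)$, trading off the number of levels against the per-level fat-shattering cost; this yields the exponent $\log^\tau(m/\text{fat}_{c'\tau\alpha}(\cH))$ in the bound.

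\textbf{Expected main obstacle.} The hardest step is the probabilistic extraction underlying the real-valued Sauer--Shelah lemma: one must choose random witnesses per coordinate so that the $\pm 1$-shattering requirement is simultaneously satisfied on a number of coordinates exceeding the fat-shattering dimension. Getting an $m$-independent exponent (rather than the classical $m$-dependent one of Alon et al.) requires a careful iterative peeling argument that balances the blow-up of the grid against the shattered set size. For the $\ell_\infty$ refinement, the delicate point is tuning $\tau$: too small forces too many chaining levels and an extra $\log m$ factor, while too large collapses the chain and inflates the per-level fat-shattering cost. Since the theorem is cited verbatim from \citep{rakhlin2012statistical}, the cleanest route is to invoke it as a black box and only verify that the parameter regime we use matches the statement's hypotheses.
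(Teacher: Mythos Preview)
The paper does not prove this theorem at all: it appears in the ``Additional Preliminaries'' appendix and is quoted verbatim from \cite[Theorem 12.7]{rakhlin2012statistical} without proof. There is therefore no argument in the paper to compare your proposal against. You in fact recognize this in your last sentence---``the cleanest route is to invoke it as a black box''---and that is exactly what the paper does.

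Your sketch is broadly faithful to the standard literature (the $m$-independent $\ell_2$ bound via the Mendelson--Vershynin refinement of the real-valued Sauer--Shelah lemma; the $\ell_\infty$ bound via a chaining/extraction argument in the Rudelson--Vershynin style), so nothing in it is obviously wrong. But for this paper no proof is required or expected; simply citing the result and checking that the parameters used downstream (bounded range $[-R,R]$, $0<\alpha\leq R$) fall within the theorem's hypotheses is all that is needed.
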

where $c,c',C$ and $C'$ are absolute constants.

\begin{theorem}
\cite[Lemma A.3]{srebro2010optimistic}
\label[theorem]{thm:fat-rademacher-relation}
    For any hypothesis class $\cH$, any sample size $m$ and any $\alpha>\frak{R}_m(\cH)$, we have that,
    \begin{align*}
        \fat \leq \frac{4m\frak{R}_m(\cH)^2}{\alpha^2}.
    \end{align*}
\end{theorem}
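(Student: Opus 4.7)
\textbf{Proof proposal for \cref{thm:fat-rademacher-relation}.}
The plan is a direct ``shattering implies lower bound on Rademacher complexity'' argument, followed by solving for $d=\fat$. Set $d := \fat$. By the definition of $\alpha$-shattering (\cref{defn:fat-dim}), there exist points $X_0 = \{x_1,\ldots,x_d\}$ and a witness $r \in \mathbb{R}^d$ such that for every sign pattern $y \in \{-1,1\}^d$ there exists $h_y \in \cH$ with $y_i(h_y(x_i)-r_i) \geq \alpha$ for all $i \in [d]$. The key idea is to boost this ``$d$-point Rademacher lower bound'' to an ``$m$-point lower bound'' by replication.

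\emph{Step 1: Construct an $m$-point set by replication.} Assume first that $d \mid m$ (the non-divisibility case is a minor technical fix handled at the end). Let $X$ be the multiset consisting of $m/d$ copies of each $x_i$, so $|X|=m$. For a Rademacher vector $\sigma \in \{-1,1\}^m$ indexed by these copies, let $\tau_i := \sum_{j:\,x_j=x_i}\sigma_j$ be the sum of the $m/d$ Rademacher coordinates at $x_i$. Note $\tau_1,\ldots,\tau_d$ are independent, each a sum of $m/d$ i.i.d.\ Rademachers.

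\emph{Step 2: Pick $h$ from shattering to align with the signs of $\tau_i$.} For each realization of $\sigma$, apply the shattering property with the sign pattern $y_i = \mathrm{sign}(\tau_i)$ (breaking ties arbitrarily), obtaining $h_\sigma \in \cH$ with $\mathrm{sign}(\tau_i)(h_\sigma(x_i) - r_i) \geq \alpha$. Then
\begin{align*}
\sum_{j=1}^m \sigma_j h_\sigma(x_j) \;=\; \sum_{i=1}^d \tau_i\, h_\sigma(x_i) \;=\; \sum_{i=1}^d \tau_i r_i \;+\; \sum_{i=1}^d |\tau_i|\cdot \mathrm{sign}(\tau_i)(h_\sigma(x_i)-r_i) \;\geq\; \sum_{i=1}^d \tau_i r_i \;+\; \alpha \sum_{i=1}^d |\tau_i|.
\end{align*}

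\emph{Step 3: Take expectations and use a Khintchine-type lower bound.} Taking the supremum over $h \in \cH$ on the left preserves the inequality, and taking $\mathbb{E}_\sigma$ kills the witness term since $\mathbb{E}\tau_i=0$. Using a standard lower bound on the absolute value of a symmetric simple random walk (e.g.\ $\mathbb{E}|\tau_i|\geq \sqrt{m/(2d)}$), one gets
\begin{align*}
m\cdot\frak{R}_m(\cH) \;\geq\; \mathbb{E}_\sigma \sup_{h\in\cH}\sum_{j=1}^m \sigma_j h(x_j) \;\geq\; \alpha \cdot d \cdot \sqrt{m/(2d)} \;=\; \alpha\sqrt{md/2}.
\end{align*}
Rearranging yields $d \leq 2m\frak{R}_m(\cH)^2/\alpha^2$, which is even slightly stronger than the claimed $4m\frak{R}_m(\cH)^2/\alpha^2$.

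\emph{Step 4: Remove the divisibility assumption and use the hypothesis $\alpha>\frak{R}_m(\cH)$.} The restriction $d\mid m$ is only for notational convenience; one can instead distribute $\lfloor m/d\rfloor$ or $\lceil m/d\rceil$ copies per point so that the multiset still has size $m$, which only changes the constant. The condition $\alpha>\frak{R}_m(\cH)$ enters because the $m$-point set $X$ we construct has size exactly $m$, so $\frak{R}_m(\cH)$ (the worst-case supremum over such sets) upper bounds the empirical Rademacher average on $X$; together with the lower bound above we get a nonvacuous inequality precisely when $\alpha$ exceeds $\frak{R}_m(\cH)$, and the factor-of-$2$ slack in going from my constant to the stated $4$ absorbs the rounding losses.

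The main obstacle is the clean handling of the lower bound $\mathbb{E}|\tau_i|\geq c\sqrt{m/d}$ with the right constant and the rounding when $d\nmid m$; both are standard but must be bookkept carefully to hit the constant $4$ as stated. Conceptually, the only nontrivial idea is the replication trick in Step 1, which amplifies the ``$\alpha$-gap per coordinate'' guaranteed by fat-shattering into a $\sqrt{m/d}$ gain when averaging over $m/d$ Rademachers per shattered point.
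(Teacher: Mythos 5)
The paper does not prove this statement — it is imported wholesale from \cite[Lemma A.3]{srebro2010optimistic}, so there is no in-paper argument to compare against. Your blind proof is, however, a correct and self-contained derivation of the cited fact, and it captures the standard mechanism behind such Rademacher-vs-fat-shattering inequalities: take a maximal $\alpha$-shattered set of size $d=\fat$, replicate its points to build an $m$-point (multi)set, align the hypothesis with the signs of the block sums $\tau_i$ to peel off a $\sum_i \tau_i r_i$ term that vanishes in expectation plus an $\alpha\sum_i|\tau_i|$ gain, and then use Khintchine ($\mathbb{E}|\tau_i|\geq\sqrt{k_i/2}$) to turn the per-point $\alpha$-gap into a $\sqrt{m/d}$ factor. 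A few small points worth tightening. First, the hypothesis $\alpha>\frak{R}_m(\cH)$ does more than make the final inequality ``nonvacuous'': run through your Step 2 argument with $d\geq m$ points taken once each (so $|\tau_i|=1$) and you immediately get $\frak{R}_m(\cH)\geq\alpha$, a contradiction; so the hypothesis is precisely what forces $d<m$ and licenses the replication. Second, for general $d<m$ with $k_i\in\{\lfloor m/d\rfloor,\lceil m/d\rceil\}$, the chain $\sum_i\sqrt{k_i}\geq d\sqrt{\lfloor m/d\rfloor}$ together with the elementary fact $\lfloor m/d\rfloor> m/(2d)$ (valid whenever $d\leq m$) gives $\sum_i\mathbb{E}|\tau_i|\geq\frac{1}{\sqrt{2}}\sqrt{md/2}=\frac{1}{2}\sqrt{md}$, which lands exactly on the stated constant $4$ after rearranging — so the rounding loss you flagged is not merely ``absorbed,'' it is what turns your divisibility-case constant $2$ into the stated $4$. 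Third, the construction implicitly assumes the worst-case Rademacher complexity $\sup_{X:|X|=m}$ ranges over multisets (or allows arbitrarily small perturbations to make points distinct); this is the standard convention and is fine, but is worth a one-line remark if you want the argument to be airtight.
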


\begin{theorem}\cite[Theorem 1]{golowich2018size}
\label[theorem]{thm:rademacher_nn}
Let $\depth \in \bbN$ and $\br{B_j}_{j=1}^\depth$ be a sequence of scalars.
    The Rademacher complexity of the class of depth $\depth$ neural networks with $1$-Lipschitz, positive-homogeneous activation function, $\cH$, with weights $\norm{W_j}_{F}\leq R_j$ is bounded as,
    \begin{align*}
        \frak{R}_m(\cH) \leq \frac{\norm{\cX}(\sqrt{2\log{2}\depth}+1)\prod_{j=1}^\depth R_j}{\sqrt{m}}.
    \end{align*}
    Here $\|\cdot\|_F$ denotes the Frobenius norm.
\end{theorem}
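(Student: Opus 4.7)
The plan is to follow the Golowich--Rakhlin--Shamir moment-generating-function argument, whose whole point is to avoid the spurious $\sqrt{\depth}$ factor that naive repeated use of the Ledoux--Talagrand contraction inequality would introduce. Fix an arbitrary sample $X=(x_1,\ldots,x_m)$ with $\|x_i\|\le\|\cX\|$ and consider the empirical Rademacher average. For any $\lambda>0$, Jensen's inequality applied to $\exp$ gives
\begin{equation*}
\exp\!\bigl(\lambda m \widehat{\frak{R}}_m(\cH;X)\bigr) \;\le\; \mathbb{E}_{\sigma}\exp\!\Bigl(\lambda \sup_{W_1,\ldots,W_\depth}\sum_{i=1}^m \sigma_i\, h_W(x_i)\Bigr).
\end{equation*}
So it suffices to control this MGF and extract the Rademacher bound at the end by choosing $\lambda$ optimally.

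Next I would peel off the output layer. Writing $h_W(x)=\langle W_\depth, F_{\depth-1}(x)\rangle$ where $F_{\depth-1}(x)=\sigma(W_{\depth-1}\sigma(\cdots\sigma(W_1 x)\cdots))$, Cauchy--Schwarz together with $\|W_\depth\|_F\le R_\depth$ yields
\begin{equation*}
\sum_i \sigma_i h_W(x_i)\;\le\; R_\depth\,\Bigl\|\sum_i \sigma_i F_{\depth-1}(x_i)\Bigr\|.
\end{equation*}
This reduces the task to bounding $\mathbb{E}_\sigma \exp\bigl(\lambda R_\depth \sup_{W_1,\ldots,W_{\depth-1}}\|\sum_i\sigma_i\sigma(W_{\depth-1}F_{\depth-2}(x_i))\|\bigr)$.

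The heart of the proof is an inner \emph{peeling lemma}: for any 1-Lipschitz, positive-homogeneous $\sigma$, any convex non-decreasing $g$, any vectors $v_1,\ldots,v_m$, and any $R>0$,
\begin{equation*}
\mathbb{E}_\sigma g\!\Bigl(\sup_{\|W\|_F\le R}\Bigl\|\sum_i \sigma_i \sigma(W v_i)\Bigr\|\Bigr) \;\le\; 2\,\mathbb{E}_\sigma g\!\Bigl(R\,\Bigl\|\sum_i \sigma_i v_i\Bigr\|\Bigr).
\end{equation*}
I would prove this by (i) using positive homogeneity to reduce the supremum to the unit Frobenius sphere, pulling $R$ outside, (ii) representing the norm variationally as $\sup_{\|u\|\le 1}\langle u,\cdot\rangle$, and (iii) using a symmetrization step (introducing a factor of $2$) together with the fact that $\sigma$ is 1-Lipschitz with $\sigma(0)=0$, which lets $\sigma$ be removed via a Rademacher contraction applied \emph{inside} the convex function $g$. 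Iterating this lemma $\depth-1$ times replaces each intermediate activation $F_{k}$ with the raw inputs $x_i$ at a multiplicative cost of $2 R_k$ per layer, leaving
\begin{equation*}
\exp(\lambda m \widehat{\frak{R}}_m) \;\le\; 2^{\depth-1}\,\mathbb{E}_\sigma\exp\!\Bigl(\lambda\,\Bigl(\prod_{j=1}^{\depth} R_j\Bigr)\Bigl\|\sum_i \sigma_i x_i\Bigr\|\Bigr).
\end{equation*}

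Finally, since $\|\sum_i \sigma_i x_i\|$ is a norm of a Rademacher sum with $\sum_i\|x_i\|^2\le m\|\cX\|^2$, sub-Gaussian concentration (e.g.\ Khintchine-type bound) gives an MGF bound of the form $\exp(\lambda^2 m\|\cX\|^2 \prod_j R_j^2/2)\cdot\exp(\lambda\sqrt{m}\|\cX\|\prod_j R_j)$. Taking logarithms, dividing by $\lambda m$, and optimizing $\lambda=\Theta\bigl(\sqrt{\log 2\cdot \depth}/(\sqrt{m}\,\|\cX\|\prod_j R_j)\bigr)$ balances the $(\depth-1)\log 2$ term against the Gaussian tail and yields the stated bound $\|\cX\|(\sqrt{2\log 2\cdot \depth}+1)\prod_{j=1}^{\depth} R_j/\sqrt{m}$. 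Taking a supremum over $X$ gives the worst-case Rademacher complexity. The main obstacle is the peeling lemma: a direct Ledoux--Talagrand contraction on Rademacher averages (instead of their MGFs) would compound a $\sqrt{\depth}$ into a full $\depth$ factor, and the MGF formulation combined with positive homogeneity is exactly what avoids this.
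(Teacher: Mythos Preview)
The paper does not give its own proof of this statement: it is quoted verbatim as \cite[Theorem 1]{golowich2018size} in the preliminaries and used as a black box. Your proposal is a faithful sketch of the original Golowich--Rakhlin--Shamir argument (MGF + layer-peeling via a contraction-type lemma exploiting positive homogeneity, then optimizing $\lambda$), so there is nothing to compare against here.
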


\begin{lemma}\cite[Implied by Lemmas 5 and 14]{DSSUV15} 
\label[lemma]{lem:dssuv-lemmas}
Let $f:\bc{\pm1}^n\mapsto \re$ and define $g:[-1,1]\to\re$ as $g(p)=\ex{S\sim\cD_p^n}{f(S)}$, where $\cD_p$ is as defined in Appendix \ref{app:pub-priv-mean}. Then for $a,b\in\re$, $b>a$, and $\mu \sim \unif([a,b])$,
\begin{align*}
    &\ex{\mu,S}{f(S)\cdot \sum_{x\in S}(x-\mu)]} = \ex{\mu}{g'(\mu)(1-\mu^2)} \\
    &= 1- \ex{\mu}{\mu^2} + (g(b)-b)(1-b^2)\frac{1}{|b-a|} - (g(a) - a)(1-a^2)\frac{1}{|b-a|} + 2\ex{\mu}{(g(\mu)-\mu)\mu}.
\end{align*}
\end{lemma}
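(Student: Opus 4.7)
The plan is to establish the two equalities in turn, each via a standard one-line calculation.

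For the first equality, I would use a score-function (Stein-type) identity for $\mathcal{D}_p$. Since $\mathcal{D}_p$ is the distribution on $\{\pm 1\}$ with mean $p$ (so $\mathbb{P}_p(x) = (1+xp)/2$), a direct computation using $x^2 = 1$ gives $\frac{d}{dp}\log \mathbb{P}_p(x) = (x-p)/(1-p^2)$. Differentiating $g(p) = \mathbb{E}_{S\sim\mathcal{D}_p^n}[f(S)]$ under the sum and exploiting the independence of coordinates of $S$ then yields
\begin{align*}
g'(p) \;=\; \frac{1}{1-p^2}\,\mathbb{E}_{S\sim\mathcal{D}_p^n}\Bigl[f(S)\sum_{x\in S}(x-p)\Bigr].
\end{align*}
Multiplying through by $1-p^2$ and averaging over $\mu\sim\mathrm{Unif}([a,b])$ gives the first identity in the lemma.

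For the second equality, I would apply integration by parts in $\mu$ on $\frac{1}{b-a}\int_a^b g'(\mu)(1-\mu^2)\,d\mu$ with $u = 1-\mu^2$ and $dv = g'(\mu)\,d\mu$. Using $du = -2\mu\,d\mu$ and $v = g(\mu)$ produces the boundary contribution $\frac{g(b)(1-b^2) - g(a)(1-a^2)}{b-a}$ together with $2\,\mathbb{E}_\mu[g(\mu)\mu]$. To recast this into the stated form, I would add and subtract $\frac{b(1-b^2) - a(1-a^2)}{b-a}$ in the boundary term and $2\,\mathbb{E}_\mu[\mu^2]$ in the integral term; this transforms them precisely into $\frac{(g(b)-b)(1-b^2) - (g(a)-a)(1-a^2)}{b-a}$ and $2\,\mathbb{E}_\mu[(g(\mu)-\mu)\mu]$, matching the shape of the target.

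All that remains is to check that the leftover $g$-free constants collapse to $1 - \mathbb{E}_\mu[\mu^2]$. This reduces to combining the elementary identity $\frac{b(1-b^2) - a(1-a^2)}{b-a} = 1 - (a^2+ab+b^2)$, obtained by factoring $b^3-a^3 = (b-a)(a^2+ab+b^2)$, with the moment identity $\mathbb{E}_\mu[\mu^2] = (a^2+ab+b^2)/3$ for $\mu\sim\mathrm{Unif}([a,b])$. There is no serious obstacle here: the proof is essentially the score-function identity followed by calculus. The only points requiring care are the sign tracking in the integration by parts and verifying that the rearrangement lands exactly on the $(g(\cdot)-\cdot)$ factors; the consistency of that rearrangement is precisely what the uniform-mean identity above guarantees.
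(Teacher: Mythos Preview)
Your proposal is correct and follows exactly the standard route: the first equality is the score-function identity $g'(p)(1-p^2)=\mathbb{E}_S[f(S)\sum_i(x_i-p)]$ (which is precisely Lemma~5 of \cite{DSSUV15}), and the second is integration by parts in $\mu$ followed by the algebraic rearrangement you describe (Lemma~14 of \cite{DSSUV15}). The paper does not give its own proof of this lemma but simply cites these two results, so your argument is essentially the intended one.
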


\begin{lemma}\cite[Lemma A.1]{FS17} \label[lemma]{lem:fs17}
Fix $\mu,\epsilon,\delta,\Delta\in\re$. Let $X$ and $Y$ be random variables supported on $[\mu-\Delta,\mu+\Delta]$. Suppose that $X$ and $Y$ are $(\epsilon,\delta)$-indistinguishable, that is for any $\cE \subseteq \re$,
    $e^{-\epsilon}\br{\P{X\in\cE}-\delta} \leq \P{Y\in\cE} \leq e^\epsilon\P{X\in\cE}+\delta.$
Then 
\begin{align*}
    |\ex{}{X}-\ex{}{Y}| \leq (e^\epsilon-1)\ex{}{|X-\mu|} + 2\delta\Delta.
\end{align*}
\end{lemma}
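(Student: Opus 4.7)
The plan is to reduce the claim to bounding expected one-sided deviations from $\mu$. Since $\mu$ is a constant, $\mathbb{E}[X] - \mathbb{E}[Y] = \mathbb{E}[X-\mu] - \mathbb{E}[Y-\mu]$, and I would split $X-\mu$ into its positive and negative parts, $(X-\mu)_+ = \max(X-\mu,0)$ and $(X-\mu)_- = \max(\mu-X,0)$, and similarly for $Y$. By the triangle inequality, the two pieces $|\mathbb{E}[(X-\mu)_+] - \mathbb{E}[(Y-\mu)_+]|$ and $|\mathbb{E}[(X-\mu)_-] - \mathbb{E}[(Y-\mu)_-]|$ sum to an upper bound on $|\mathbb{E}[X] - \mathbb{E}[Y]|$, and they combine cleanly at the end because $(X-\mu)_+ + (X-\mu)_- = |X-\mu|$.

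The core step is the tail representation for a non-negative random variable $W$ bounded by $\Delta$, namely $\mathbb{E}[W] = \int_0^{\Delta}\Pr[W > t]\,dt$. Applying $(\epsilon,\delta)$-indistinguishability to the half-line event $\cE_t = (\mu+t,\infty)$ yields the pointwise-in-$t$ estimate $\Pr[Y > \mu+t] \leq e^{\epsilon}\Pr[X>\mu+t] + \delta$. Since $X,Y$ are supported in $[\mu-\Delta,\mu+\Delta]$, integrating this estimate over $t \in [0,\Delta]$ gives
\begin{equation*}
\mathbb{E}[(Y-\mu)_+] \;\leq\; e^{\epsilon}\,\mathbb{E}[(X-\mu)_+] + \delta\Delta,
\end{equation*}
and the symmetric direction gives the same inequality with the roles of $X$ and $Y$ swapped. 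A one-line algebraic observation then shows that whenever two non-negative reals $a,b$ satisfy both inequalities, $|a-b| \leq (e^\epsilon-1)\min(a,b) + \delta\Delta$, which in particular is at most $(e^\epsilon-1)\mathbb{E}[(X-\mu)_+] + \delta\Delta$.

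Running the same argument with $\cE_t = (-\infty,\mu-t)$ produces the twin bound $|\mathbb{E}[(X-\mu)_-] - \mathbb{E}[(Y-\mu)_-]| \leq (e^\epsilon-1)\mathbb{E}[(X-\mu)_-] + \delta\Delta$. Summing the positive and negative pieces and using $(X-\mu)_+ + (X-\mu)_- = |X-\mu|$ collapses the two $(e^\epsilon-1)$ terms into $(e^\epsilon-1)\mathbb{E}[|X-\mu|]$ and the two additive errors into $2\delta\Delta$, producing the desired inequality. I do not anticipate a serious obstacle; the only mildly delicate point is using the one-sided tail decomposition so that the multiplier of $(e^\epsilon-1)$ is the mean absolute deviation $\mathbb{E}[|X-\mu|]$ rather than the crude range bound $\Delta$, which is precisely the refinement this lemma provides over the standard post-processing estimate $(e^\epsilon-1)\Delta + \delta\cdot 2\Delta$.
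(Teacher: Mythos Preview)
Your proof is correct. The paper does not supply its own proof of this lemma; it simply quotes the statement from \cite{FS17} and uses it as a black box, so there is nothing in the paper to compare against beyond noting that your tail-integral decomposition into positive and negative parts is the standard route to this inequality.
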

\section{Missing proof from Section \ref{sec:pub-priv-mean}} \label{app:pub-priv-mean}
\subsection{Proof of Theorem \ref{thm:pub-priv-mean-lb}} \label{app:pub-priv-mean-proof}
Before proceeding, we introduce the 
so-called fingerprinting distribution which will be the basis of our hard instance for mean estimation \cite{BUV14,DSSUV15}. Towards this end, for any vector $\mu \in [-1,1]^d$ we define $\cD_\mu$ as the product distribution where, for any $j\in[d]$, a sample has its $j$'th coordinate as $1$ with probability $(1+\mu_j)/2$ and as $-1$ with probability $(1-\mu_j)/2$. As shorthand, we denote $\frac{R}{\sqrt{d}}\cD_\mu$ as the distribution which samples a vector from $\cD_\mu$ and then scales it by $\frac{R}{\sqrt{d}}$.
For notational convenience, for a set $\cE$, we will also use $\unif(\cE)$ to denote the uniform distribution over elements of the set.

The theorem follows from two theorems which have different restrictions on the problem parameters. In addition to the following two theorems, Theorem \ref{thm:pub-priv-mean-lb} incorporates the classic $\frac{R}{\sqrt{n}}$ statistical lower bound that holds even non-privately. The first theorem we present 
holds for a larger range of parameters but does not achieve the dependence on $\log{1/\delta}$.
\begin{theorem}
\label{thm:pub-priv-mean-lb-nolog}
Let $\epsilon>0$, $\delta\leq \frac{1}{16 n}$ and $\cA$ be an $(\epsilon,\delta)$-PA-DP algorithm. For any setting of  $\min\bc{\frac{\sqrt{d}}{\npriv\epsilon},\frac{1}{\sqrt{\npub}}} \leq M \leq 1$, if $\mu\sim\mathsf{Unif}([-M,M]^d)$ and $(\spub,\spriv)\sim\frac{R}{\sqrt{d}}\cD_\mu^n$ it holds that
    \begin{align*}
   \ex{\cA,S,\mu}{\|\cA(\spub,\spriv)-\mu(\cD)\|} &=
   \Omega\br{R\min\bc{\frac{1}{\sqrt{n_{\text{pub}}}},\frac{\sqrt{d}}{n(e^\epsilon-1)}}}.
\end{align*}
\end{theorem}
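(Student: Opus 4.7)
The plan is to prove this as a fingerprinting lower bound in the style of \cite{BUV14,DSSUV15}, with a careful separation of public-vs-private contributions. Without loss of generality one may assume $\|\cA(S)\|_2 \leq MR$ (projecting onto this ball only decreases the error), so that $f_j(S) := \tfrac{\sqrt{d}}{R}\cA(S)_j \in [-M, M]$. Writing $\mathrm{err}_j = \ex{}{(f_j(S)-\mu_j)^2}$ and $\mathrm{err} = \sum_j \mathrm{err}_j$, the target lower bound becomes $\ex{}{\|f(S)-\mu\|_2} = \Omega\br{\sqrt{d}\min\bc{1/\sqrt{\npub},\,\sqrt{d}/(n(e^\epsilon-1))}}$. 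The heart of the argument is to derive matching lower and upper bounds on the fingerprinting correlation $C := \sum_{j=1}^d \ex{}{f_j(S)\sum_{i=1}^n (x_{i,j}-\mu_j)}$.

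\emph{Lower bound on $C$.} I would apply the DSSUV identity (\cref{lem:dssuv-lemmas}) to each coordinate $j$, viewing $\mu_j$ as a scalar uniform over $[-M,M]$ conditional on $\mu_{-j}$, and with $g_j(p) := \ex{S\sim \cD^n_{(p,\mu_{-j})}}{f_j(S)}$. Summed over $j$, this yields
\[
C \;=\; \sum_j \br{1 - \ex{}{\mu_j^2}} + \text{boundary}(M) + 2\sum_j\ex{}{(g_j(\mu_j)-\mu_j)\mu_j},
\]
whose main term is $d(1-M^2/3)$. The correction terms are bounded by $O(\sqrt{d\,\mathrm{err}})$ via Cauchy--Schwarz and Jensen. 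Hence, whenever the algorithm's error is not already larger than the target lower bound, $C = \Omega(d)$.

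\emph{Upper bound on $C$.} Split $C = C_{\mathrm{pub}} + C_{\mathrm{priv}}$. For each public $i$, the identity $\ex{}{x_{i,j}-\mu_j \mid \mu}=0$ gives $\ex{}{f_j(S)(x_{i,j}-\mu_j)} = \ex{}{(f_j(S)-\mu_j)(x_{i,j}-\mu_j)}$; applying Cauchy--Schwarz against the aggregated sum $\sum_{i\in\mathrm{pub}}(x_{i,j}-\mu_j)$ (which has variance $\npub(1-\ex{}{\mu_j^2})$), and once more over $j$, yields $|C_{\mathrm{pub}}| \lesssim \sqrt{d\,\npub\,\mathrm{err}}$. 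For each private $i$, compare $X_{i,j} = f_j(S)(x_{i,j}-\mu_j)$ to its re-sampled counterpart $Y_{i,j} = f_j(S^{(i)})(x_{i,j}-\mu_j)$, where $S^{(i)}$ replaces $x_i$ by a fresh sample. By $(\epsilon,\delta)$-PA-DP these are $(\epsilon,\delta)$-indistinguishable with $\ex{}{Y_{i,j}}=0$, and \cref{lem:fs17} with pivot $0$ and range $\Delta=2M$ gives $|\ex{}{X_{i,j}}| \lesssim M(e^\epsilon-1) + M\delta$. Summed, $|C_{\mathrm{priv}}| \lesssim d\,\npriv\,M[(e^\epsilon-1)+\delta]$, and the $\delta$-piece is absorbed by the assumption $\delta\leq 1/(16n)$.

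\emph{Combining and tuning $M$.} Setting $\Omega(d) \leq |C_{\mathrm{pub}}|+|C_{\mathrm{priv}}|$ yields $d \lesssim \sqrt{d\,\npub\,\mathrm{err}} + d\,\npriv\,M(e^\epsilon-1)$. In the public-dominated regime, the private term is negligible and we conclude $\mathrm{err} \gtrsim d/\npub$, i.e., $\ex{}{\|\cA(S)-\mu(\cD)\|_2^2}\gtrsim R^2/\npub$. The key final step is to convert this into an $\ell_2$ bound: using $\|\cA(S)-\mu(\cD)\|_2 \leq 2MR$ (by construction) and the elementary $\ex{}{Z^2}\leq 2MR\cdot\ex{}{Z}$, we obtain $\ex{}{\|\cA(S)-\mu(\cD)\|_2}\gtrsim R/(M\npub)$; choosing $M=\Theta(1/\sqrt{\npub})$ gives the claimed $\Omega(R/\sqrt{\npub})$. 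A symmetric balance with $M=\Theta(\sqrt{d}/(n(e^\epsilon-1)))$ handles the private-dominated regime and produces $\Omega(R\sqrt{d}/(n(e^\epsilon-1)))$. The hard part will be the careful bookkeeping of the DSSUV correction terms and sharpening the private upper bound enough that the $M$-dependent range conversion $\ex{}{Z^2}\to\ex{}{Z}$ remains tight uniformly across both regimes.
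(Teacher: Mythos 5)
Your decomposition into public and private correlation pieces, and your use of the DSSUV identity for the lower bound on $C$, follow the paper's approach. The genuine gap is in your upper bound on the private correlation. You apply Lemma \ref{lem:fs17} to $X_{i,j}=f_j(S)(x_{i,j}-\mu_j)$ and, lacking any other handle, bound $\ex{}{|X_{i,j}|}$ by the clipping range $2M$, obtaining $|C_{\mathrm{priv}}|\lesssim d\,\npriv M(e^\epsilon-1)$. This bound does not scale with the algorithm's error. The paper instead applies the indistinguishability lemma to the \emph{centered} statistic $Z_i=\ip{\cA(S)-\mu}{x_i-\mu}$ (and its resampled version $Z_i'$); since $\cA(S^{(i)})$ and $x_i$ are conditionally independent given $\mu$, one gets $\ex{}{|Z_i'|}\leq\sqrt{\mathrm{Var}(Z_i')}\lesssim\sqrt{\ex{}{\|\cA-\mu\|^2}}$, so the $(e^\epsilon-1)$-term in Lemma \ref{lem:fs17} scales with the error: $|C_{\mathrm{priv}}|\lesssim \npriv(e^\epsilon-1)\alpha+\npriv\delta d$. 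You correctly observe and use the centering $\ex{}{f_j(x_{i,j}-\mu_j)}=\ex{}{(f_j-\mu_j)(x_{i,j}-\mu_j)}$ for the \emph{public} part, but it is precisely the private part where it is indispensable.

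Concretely, your combined inequality $\Omega(d)\leq\sqrt{d\,\npub\,\mathrm{err}}+d\,\npriv M(e^\epsilon-1)+\mathrm{corrections}$ only carries information about $\mathrm{err}$ when the error-independent private term is $o(d)$, i.e. $M\lesssim 1/(\npriv(e^\epsilon-1))$. That is incompatible with the theorem's hypothesis $M\geq\min\{\sqrt{d}/(\npriv\epsilon),\,1/\sqrt{\npub}\}$ in the private-dominated regime whenever $d>1$, and even ignoring the hypothesis, taking $M\approx 1/(\npriv(e^\epsilon-1))$ forces the DSSUV boundary correction (which is $\sqrt{d\,\mathrm{err}}/M$, not $\sqrt{d\,\mathrm{err}}$ as you wrote) to dominate, yielding only $\ex{}{\|\cA-\mu(\cD)\|}\gtrsim R/(n(e^\epsilon-1))$ after your $\ex{}{Z^2}\leq 2MR\,\ex{}{Z}$ conversion — short of the claimed bound by a factor of $\sqrt{d}$. (A minor point: projecting onto the $\ell_2$ ball $\|\cA\|_2\leq MR$ does not give $|f_j|\leq M$; you want coordinate-wise clipping to $[-MR/\sqrt{d},MR/\sqrt{d}]^d$. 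But the main issue persists regardless.) Replacing $X_{i,j}$ by $(f_j(S)-\mu_j)(x_{i,j}-\mu_j)$, or working with the vector $Z_i$ as the paper does, repairs the argument and also removes the need for clipping in this theorem.
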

In application to Theorem \ref{thm:pub-priv-mean-lb}, we use $(e^\epsilon - 1) \leq 2\epsilon$ whenever $\epsilon \leq 1$.
The second theorem requires $d\geq n\epsilon$ but has the benefit of scaling with $\log{1/\delta}$.
\begin{theorem}
\label{thm:pub-priv-mean-lb-log}
Let $\delta\leq \frac{1}{3dn}$, $\epsilon \leq 1$, $d\geq 120^2 n\epsilon$, and $\npub \leq \frac{n\epsilon}{120^2\log{1/[\sqrt{nd}\delta]}}$,
and $\cA$ an $(\epsilon,\delta)$-PA-DP algorithm. Then there exists $M>0$ such for $\mu \sim \mathsf{Unif}([-M,M]^d)$ and $(\spriv,\spub)\sim\frac{R}{\sqrt{d}}\cD_\mu^n$ it holds that
    \begin{align*}
   \ex{\cA,S,\mu}{\|\cA(\spub,\spriv)-\mu(\cD)\|} &=
   \Omega\br{R\min\bc{\frac{1}{\sqrt{n_{\text{pub}}}},\frac{\sqrt{d\log{1/\delta}}}{n\epsilon}}}.
\end{align*}
\end{theorem}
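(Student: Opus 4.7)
}
The plan is to instantiate the fingerprinting construction at the critical scale $M = c\cdot \min\bc{\frac{1}{\sqrt{\npub}},\frac{\sqrt{d\log(1/\delta)}}{n\epsilon}}$ for a sufficiently small constant $c$, draw $\mu\sim\unif([-M,M]^d)$ and $S\sim (R/\sqrt{d})\cD_\mu^n$, and argue that $(\epsilon,\delta)$-PA-DP forces $\mathbb{E}\norm{\cA(S)-\mu(\cD)} = \Omega(RM)$. Assume for contradiction the error is $o(RM)$. Writing $\widehat\mu = \cA(S)$, the central quantity to control is the coordinate-wise correlation
\begin{equation*}
Z \;=\; \sum_{j=1}^d\mathbb{E}\Bigl[(\widehat\mu_j-\tfrac{R}{\sqrt d}\mu_j)\sum_{x\in S}(x_j-\tfrac{R}{\sqrt d}\mu_j)\Bigr].
\end{equation*}
By \cref{lem:dssuv-lemmas} applied coordinate-wise (after rescaling by $R/\sqrt{d}$), $Z$ admits an exact identity showing that a small estimation error on coordinate $j$ forces $\mathbb{E}[Z_j] \gtrsim \frac{R^2}{d}(1-M^2)$, so summing over $j$ yields the accuracy lower bound $Z = \Omega(R^2 - R\sqrt d\cdot \mathbb{E}\norm{\widehat\mu - \mu(\cD)})$. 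Under the contradiction hypothesis this is $\Omega(R^2)$.

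Next, I split $Z = Z_{\mathrm{pub}} + Z_{\mathrm{priv}}$ according to whether $x\in \spub$ or $x\in \spriv$. For the public contribution, the algorithm has unrestricted access to $\spub$, so I cannot hope for a per-sample privacy bound; instead, I use Cauchy--Schwarz and that $\{x - \tfrac{R}{\sqrt d}\mu\}_{x\in\spub}$ is a centered bounded-variance vector to obtain $|Z_{\mathrm{pub}}| = O(\sqrt{\npub}\cdot R\cdot\mathbb{E}\norm{\widehat\mu-\mu(\cD)}) = o(R^2\sqrt{\npub}\,M)$. With $M \lesssim 1/\sqrt{\npub}$, this is $o(R^2)$ and hence too small to cancel the lower bound from the previous paragraph.

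For the private contribution, I treat each $x_i\in\spriv$ separately using the neighboring-dataset argument: swapping $x_i$ for an i.i.d.\ copy $x_i'$ leaves the output $(\epsilon,\delta)$-indistinguishable. The key technical step, and the one enabling the $\sqrt{\log(1/\delta)}$ improvement over \cref{thm:pub-priv-mean-lb-nolog}, is to apply \cref{lem:fs17} not to $\widehat\mu_j$ directly but to its coordinate-wise clipping $\Pi_{[-T,T]}(\widehat\mu_j)$ with $T\asymp \frac{R}{\sqrt d}\cdot M\sqrt{\log(1/\delta)}$. The clipping preserves accuracy (since $\mu_j\in[-M,M]$ and the truncated deviation only helps when the algorithm is very far from $\mu$), while restricting the range $\Delta = T$ so that the $2\delta\Delta$ term from \cref{lem:fs17} is harmless under the assumption $\delta \le 1/(3dn)$. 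This yields a per-sample, per-coordinate bound of $O\bigl((e^\epsilon-1)\cdot T \cdot R/\sqrt d\bigr)$, and summing over $j\in[d]$ and $i\in\spriv$ gives $|Z_{\mathrm{priv}}| = O(\npriv\epsilon \cdot RM\sqrt{d\log(1/\delta)})$. Enforcing this to be $o(R^2)$ is precisely the constraint $M \lesssim \sqrt{d\log(1/\delta)}/(n\epsilon)$, completing the contradiction.

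The main obstacle will be executing the clipping step so that (i) truncation does not spoil the accuracy lower bound coming from \cref{lem:dssuv-lemmas}, which requires carefully verifying that $|\mathbb{E}[\Pi_{[-T,T]}(\widehat\mu_j)\cdot\sum_x(x_j-\frac{R}{\sqrt d}\mu_j)] - \mathbb{E}[\widehat\mu_j\cdot\sum_x(x_j-\frac{R}{\sqrt d}\mu_j)]|$ is subdominant, and (ii) the public-data Cauchy--Schwarz bound remains valid after truncation. The constraints $d\ge 120^2 n\epsilon$ and $\npub\le n\epsilon/(120^2\log(1/[\sqrt{nd}\delta]))$ in the statement are exactly what is needed so that the private term dominates the public term and the clipping threshold $T$ is well-defined; I expect they arise naturally when balancing the three upper bounds against the $\Omega(R^2)$ accuracy lower bound.
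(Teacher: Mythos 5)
Your proposal misses the central mechanism that produces the $\sqrt{\log(1/\delta)}$ factor, and as a consequence the arithmetic at the end is not internally consistent. The paper does not apply \cref{lem:fs17} one private sample at a time on the dataset $S\sim(R/\sqrt d)\cD_\mu^n$. Instead, it first draws an intermediate set $S_z=\{z_1,\dots,z_m\}$ of only $m=n/k$ i.i.d.\ points from $\cD_\mu$, with $k\asymp\log(1/[\sqrt{nd}\delta])/\epsilon$, and builds $S$ by sampling each of the $n$ entries uniformly from $S_z$, so every $z_j$ appears $\Theta(k)$ times. The fingerprinting/correlation statistic is then taken against the \emph{$m$ distinct items} $z_j$, not the $n$ entries of $S$, and the indices are split into $\ipub$ (any copy lands in $\spub$, at most $\npub$ of them) and $\ipriv$. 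For $j\in\ipriv$, replacing \emph{all} $\approx 3k$ copies of $z_j$ invokes $k$-group privacy with parameters $\hat\epsilon\approx 3k\epsilon\approx\log(1/\delta)$ and $\hat\delta\approx e^{3k\epsilon}\delta$; the clipping then makes $\ip{\lfloor\cA(S)\rfloor_M}{z_j-\mu}$ a sub-Gaussian statistic with scale $M\sqrt d$, and combining that tail with the $\hat\epsilon$-group-privacy shift yields $\mathbb{E}[Z_j]\lesssim M\sqrt{d\hat\epsilon}=M\sqrt{d\log(1/\delta)}$. Summing over the $m=n\epsilon/\log(1/\delta)$ private indices gives $Z_{\mathrm{priv}}\lesssim\frac{n\epsilon}{\log(1/\delta)}\cdot M\sqrt{d\log(1/\delta)}=\frac{n\epsilon M\sqrt d}{\sqrt{\log(1/\delta)}}$, and setting this $\lesssim d$ gives $M\lesssim\sqrt{d\log(1/\delta)}/(n\epsilon)$, as needed. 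It is the reduction from $n$ summands to $m$ summands, paid for by group privacy, that buys the $\sqrt{\log(1/\delta)}$; the clipping is a supporting device needed to make the sub-Gaussian tail bound go through.

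Without that duplication, clipping cannot rescue the argument, and your final step does not check out. In \cref{lem:fs17} the dominant term is always $(e^\epsilon-1)\,\mathbb{E}|X-\mu|\approx\epsilon\cdot(\text{scale})$; shrinking the range $\Delta$ via clipping only controls the $2\delta\Delta$ term, which is negligible anyway under $\delta\le 1/(3dn)$. Concretely, with your per-sample per-coordinate bound $O\bigl((e^\epsilon-1)T\cdot R/\sqrt d\bigr)$ and $T\asymp\frac{R}{\sqrt d}M\sqrt{\log(1/\delta)}$, summing over $j\in[d]$ and $i\in\spriv$ gives $Z_{\mathrm{priv}}=O(\npriv\epsilon R^2 M\sqrt{\log(1/\delta)})$ (you have a stray $R/\sqrt d$ factor); enforcing $Z_{\mathrm{priv}}=o(R^2)$ then gives $M\lesssim\frac{1}{\npriv\epsilon\sqrt{\log(1/\delta)}}$, which is both the \emph{wrong form} (it shrinks as $\log(1/\delta)$ grows, and has no $\sqrt d$) and strictly weaker than what is needed. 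In fact, with your set-up the strongest conclusion obtainable by choosing $T$ optimally is essentially the $\frac{\sqrt d}{n\epsilon}$ bound of \cref{thm:pub-priv-mean-lb-nolog}, since that proof already uses \cref{lem:fs17} per sample (via a variance bound) without clipping. Your items (i) and (ii) about the clipping interacting with the fingerprinting lemma and with the public Cauchy--Schwarz bound are real issues and are handled in the paper, but they are secondary: the primary missing ingredient is the intermediate duplicated dataset, the $\ipriv/\ipub$ split, and the group-privacy tail argument, and the constraints $d\ge 120^2 n\epsilon$ and $\npub\le n\epsilon/(120^2\log(1/[\sqrt{nd}\delta]))$ arise precisely from ensuring the intermediate distribution's extra $\sqrt{d/m}$ error and the $\ipub$ contribution are subdominant, not from balancing the terms you describe.
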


A crucial part of the analysis is leveraging the so called fingerprinting lemma, which roughly states that any accurate algorithm given a dataset sampled from $\cD_\mu$ must strongly correlate with vectors in the dataset. Particularly pertinent to our analysis is achieving such a correlation even when the components of the mean $\mu$ are much smaller than $1$. Towards this end, we leverage the robust distribution framework of \cite{DSSUV15} to achieve the following version of the fingerprinting lemma.

\begin{lemma}[Fingerprinting Lemma]\label{lem:fp-lemma} 
Let $M\in[0,1]$ and $\mu$ be sampled uniformly from $[-M,M]^d$. Let $\cA$ satisfy $\ex{S\sim\cD^n_\mu}{\|\cA(S)-\mu\|}\leq \alpha$ (for any $\mu\in[-1,1]^d$). Then %
one has
\begin{align*}
    \ex{\cA,S,\mu}{\sum_{i=1}^n \ip{\cA(S)}{x_i-\mu}} &\geq \frac{2d}{3} - \frac{\alpha\sqrt{d}}{M} - 2M\sqrt{d}\alpha.
\end{align*}
\end{lemma}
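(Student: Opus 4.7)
The plan is to apply the fingerprinting identity \cref{lem:dssuv-lemmas} coordinate-by-coordinate and sum. Since $\cD_\mu$ is a product distribution, for each $j\in[d]$ the vector $S_j:=(x_{1,j},\dots,x_{n,j})$ is drawn i.i.d.\ from $\cD_{\mu_j}$ conditional on $\mu_j$ and is independent of $S_{-j}$ conditional on $\mu$. Set $g_j(\mu):=\mathbb{E}_{\cA,S\mid\mu}[\cA(S)_j]$ and, for fixed $\mu_{-j}$, $f_j(S_j):=\mathbb{E}_{\cA,S_{-j}\mid\mu_{-j}}[\cA(S)_j]$. I would apply the second equality of \cref{lem:dssuv-lemmas} to $f_j$ with $[a,b]=[-M,M]$, then take expectation over $\mu_{-j}$ and sum over $j\in[d]$: the LHS becomes exactly $\mathbb{E}_{\cA,\mu,S}[\sum_i \ip{\cA(S)}{x_i-\mu}]$, while the RHS decomposes into a leading term, boundary terms at $\mu_j=\pm M$, and a bulk correction, matching the three quantities in the claim.

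The leading term contributes $\sum_j \br{1-\mathbb{E}_{\mu_j}[\mu_j^2]}=d(1-M^2/3)\geq 2d/3$, using $M\leq 1$. The bulk correction is $2\sum_j \mathbb{E}_\mu[(g_j(\mu)-\mu_j)\mu_j]=2\mathbb{E}_\mu[\ip{g(\mu)-\mu}{\mu}]$, which by Cauchy--Schwarz is at most $2\mathbb{E}_\mu[\norm{g(\mu)-\mu}\cdot\norm{\mu}]$ in absolute value. Combining $\norm{g(\mu)-\mu}\leq \mathbb{E}_S[\norm{\cA(S)-\mu}]\leq \alpha$ (Jensen plus the accuracy hypothesis) with $\norm{\mu}\leq M\sqrt{d}$ produces the $2M\sqrt{d}\,\alpha$ error term in the claimed lower bound.

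The principal challenge is the boundary contribution $\frac{1-M^2}{2M}\sum_j \br{\mathbb{E}_{\mu_{-j}}[g_j(\mu_{-j},M)-M]-\mathbb{E}_{\mu_{-j}}[g_j(\mu_{-j},-M)+M]}$. The inner quantities equal $\mathbb{E}_{\mu\mid\mu_j=\pm M,\,S}[\cA(S)_j-\mu_j]$ and use a \emph{different} conditioning of $\mu$ for each $j$, so they do not reduce immediately to one expectation of $\norm{\cA(S)-\mu}$. My plan is to bound the $\ell_1$-norm over $j$ of these quantities by $\sqrt{d}$ times their $\ell_2$-norm via Cauchy--Schwarz, and then upper-bound each squared term by $\mathbb{E}[(\cA(S)_j-\mu_j)^2]$ using Jensen. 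The key reduction is to recognize $\sum_j \mathbb{E}_{\mu\mid\mu_j=\pm M,\,S}[(\cA(S)_j-\mu_j)^2]$ as a single expectation $\mathbb{E}_{\tilde\mu,S}[\norm{\cA(S)-\mu}^2]$ under a symmetrized auxiliary measure $\tilde\mu$ on $\bc{-M,M}^d$, and to convert the linear accuracy $\mathbb{E}_S[\norm{\cA(S)-\mu}]\leq \alpha$ into a quadratic bound using the WLOG assumption that $\cA$ outputs a vector in, say, $[-1,1]^d$ (which can only improve accuracy). This is the subtlest step---it is where factors of $\sqrt{d}$ versus $d$ are at stake---and it yields the desired $\alpha\sqrt{d}/M$ control on the boundary. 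Combining the three contributions produces the claimed lower bound.
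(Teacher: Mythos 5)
Your structure matches the paper's: apply \cref{lem:dssuv-lemmas} coordinate-by-coordinate with $\mu_{-j}$ fixed, sum over $j$, and decompose into a leading term, a bulk correction, and boundary terms. Your treatment of the leading term ($d(1-M^2/3)\geq 2d/3$) and of the bulk correction (Cauchy--Schwarz plus $\|g(\mu)-\mu\|\leq \mathbb{E}_S\|\cA(S)-\mu\|\leq\alpha$ via Jensen, and $\|\mu\|\leq M\sqrt{d}$) are both correct and essentially identical to the paper's. You are also right that the boundary contribution is the subtle part, and you correctly note that the per-coordinate boundary term $\mathbb{E}_{\mu_{-j},S\mid\mu_j=\pm M}[\cA(S)_j\mp M]$ uses a \emph{different} conditioning of $(\mu,S)$ for each $j$ and does not reduce immediately to a single accuracy expression — this is a real subtlety that the paper handles by writing the boundary as $\|\mathbb{E}_{S\sim\cD_{\pm\bar M}}[\cA(S)]\mp\bar M\|_1$ with all coordinates frozen, which combines directly with the accuracy hypothesis at $\mu=\pm\bar M$ via a single $\ell_1\to\ell_2$ Cauchy--Schwarz.

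However, your proposed resolution of the boundary term has two concrete gaps. First, the ``symmetrized auxiliary measure $\tilde\mu$ on $\{-M,M\}^d$'' does not match the distributions appearing in the boundary terms: for coordinate $j$ the measure is $\mu_{-j}\sim\unif([-M,M]^{d-1})$ (not $\unif(\{-M,M\}^{d-1})$), $\mu_j=\pm M$, and $S\sim\cD^n_{(\mu_{-j},\pm M)}$. These $d$ joint laws on $(\mu,S)$ are genuinely distinct (a different coordinate is frozen in each), so $\sum_j\mathbb{E}_{\mu\mid\mu_j=\pm M,S}[(\cA(S)_j-\mu_j)^2]$ does not become a single $\mathbb{E}_{\tilde\mu,S}\|\cA(S)-\mu\|^2$ under any one measure. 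Second, even granting such a reduction, converting the first-moment accuracy $\mathbb{E}_S\|\cA(S)-\mu\|\leq\alpha$ into a second-moment bound via the WLOG boundedness $\cA(S)\in[-1,1]^d$ only gives $\mathbb{E}\|\cA(S)-\mu\|^2\leq 2\sqrt{d}\,\alpha$; plugged into your Cauchy--Schwarz this produces $\sqrt{d}\cdot\sqrt{2\sqrt{d}\alpha}=O(d^{3/4}\sqrt{\alpha})$, which is strictly larger than the target $O(\sqrt{d}\,\alpha)$ whenever $\alpha\ll\sqrt{d}$ — precisely the regime of interest. As written, your route therefore loses a polynomial factor and would not yield the stated $-\alpha\sqrt{d}/M$ term. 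You need a different mechanism for the boundary: either establish the paper's form $\|\mathbb{E}_{S\sim\cD_{\pm\bar M}}[\cA(S)]\mp\bar M\|_1$ (so that the accuracy hypothesis applies at a single $\mu$), or find a direct argument showing $\sum_j\big|\mathbb{E}_{\mu_{-j},S\mid\mu_j=\pm M}[\cA(S)_j\mp M]\big|=O(\sqrt{d}\,\alpha)$; the Cauchy--Schwarz-plus-second-moment route you sketch does not close this gap.
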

\begin{proof}
In the following we treat $\cA$ as a deterministic function and bound $\ex{S,\mu}{\sum_{i=1}^n \ip{\cA(S)}{x_i-\mu}}$. This is sufficient to bound $\ex{\cA,S,\mu}{\sum_{i=1}^n \ip{\cA(S)}{x_i-\mu}}$ for randomized $\cA$, since the analysis holds for any function (i.e. the distribution does not depend on $\cA$).
Further, we start with the one dimensional case such that $\mu\in\mathbb{R}$.
Define $g(\mu) = \ex{S\sim\cD^n_\mu}{\cA(S)}$. We start by applying results developed in \cite{DSSUV15}, 
\begin{align*}
\ex{S,\mu}{\cA(S)\sum_{i=1}^n(x_i-\mu)} %
&\overset{(i)}{=} \ex{\mu}{g'(\mu)(1-\mu^2)} \\
&\overset{(ii)}{\geq} 1 - \ex{}{\mu^2} + 2\ex{\mu}{(g(\mu)-\mu)\mu} - \frac{|g(-M) + M| + |g(M) - M|}{2M}\\
&\geq 2/3 + 2\ex{\mu}{(g(\mu)-\mu)\mu} - \frac{|g(-M) + M| + |g(M) - M|}{2M}.
\end{align*} \\
Above, $(i)$ comes from \cite[Lemma 5]{DSSUV15} and $(ii)$ comes from \cite[Lemma 14]{DSSUV15}, which we have collectively restated in Lemma \ref{lem:dssuv-lemmas}.
We now have
\begin{align*}
    \ex{S,\mu}{\cA(S)\sum_{i=1}^n(x_i-\mu)} &\geq 2/3 + \frac{|g(-M) + M| + |g(M) - M|}{2M} + 2\ex{\mu}{(g(\mu)-\mu)\mu} \\
    &\geq 2/3 + \frac{|g(-M) - M| + |g(M) - M|}{2M} - 2\ex{\mu}{|g(\mu)-\mu|\cdot|\mu|} \\
    &\geq 2/3 - \frac{|\mathbb{E}_{S\sim\cD_{-M}}[\cA(S)] + M| + |\mathbb{E}_{S\sim\cD_{M}}[\cA(S)] - M|}{2M} \\
    &~~~~ - 2M\ex{\mu}{\Big| \ex{S\sim\cD_{\mu}}{\cA(S)}-\mu \Big|}.
\end{align*}
Above we use the fact that $|\mu|\leq M$ and the definition of $g$.

We can now extend the above analysis to higher dimensions. For $\mu \in \mathbb{R}^d$, the above holds for each $\mu_j$, $j\in[d]$. For convenience define  $\bar{M}=(M,\ldots,M)\in\mathbb{R}^d$. Summing over $d$ dimensions we have %
\begin{align*}
    &  \ex{S,\mu}{\ip{\cA(S)}{\sum_{i=1}^n(x_i-\mu)}} \\
    &\geq \frac{2d}{3} - \frac{1}{2M}\Big\|\ex{S\sim\cD_{-\bar M}}{\cA(S)}+\bar{M}\Big\|_1 - \frac{1}{2M}\Big\|\ex{S\sim\cD_{\bar M}}{\cA(S)}-\bar{M}\Big\|_1 -  2M\ex{\mu}{\Big\|\ex{S\sim\cD_\mu}{\cA(S)}-\mu\Big\|_1} \\
    &\geq \frac{2d}{3} - \frac{1}{2M}\ex{S\sim\cD_{-\bar M}}{\|\cA(S)+\bar{M}\|_1} - \frac{1}{2M}\ex{S\sim\cD_{\bar M}}{\|\cA(S)-\bar{M}\|_1} -  2 M\ex{S,\mu}{\|\cA(S)-\mu\|_1} \\
    &\geq \frac{2d}{3} - \frac{\sqrt{d}}{2M}\ex{S\sim\cD_{-\bar M}}{\|\cA(S)+\bar{M}\|_2} - \frac{\sqrt{d}}{2M}\ex{S\sim\cD_{\bar M}}{\|\cA(S)-\bar{M}\|_2} -  2\sqrt{d}M\ex{S,\mu}{\|\cA(S)-\mu\|_2} \\
    &\geq \frac{2d}{3} - \frac{\alpha\sqrt{d}}{M} - 2M\sqrt{d}\alpha.
\end{align*}
This proves the claim.   
\end{proof}

We now turn towards proving Theorems \ref{thm:pub-priv-mean-lb-nolog} and \ref{thm:pub-priv-mean-lb-log}. We start with the simpler proof of Theorem \ref{thm:pub-priv-mean-lb-nolog}.
\begin{proof}[Proof of Theorem \ref{thm:pub-priv-mean-lb-nolog}]
    For our proof we will use a dataset of vectors in $\bc{\pm 1}^d$, and as such the $\ell_2$ bound on the data is $\sqrt{d}$. The final result will follow from rescaling by $\frac{R}{\sqrt{d}}$.

    Let $S = \bc{x_1, x_2, \ldots x_n} = (\spub,\spriv)\sim\cD^n_\mu$ be the concatenation of the public and private datasets. We also define $\alpha = \ex{}{\|\cA(\spub,\spriv)-\mu\|}$ for notational convenience.

    Define the following statistics,
    \begin{align*}
        Z_i &= \ip{\cA(\spub,\spriv)-\mu}{x_i-\mu} \\
        Z_i'&= \ip{\cA(\spub,S_{\sim i})-\mu}{x_i-\mu}.
    \end{align*}
    where $S_{\sim i}$ is the dataset formed by replacing $i$-th data point of $\spriv$ with $x_i' \sim \cD_\mu$.  
    We have,
    \begin{align}
    \label{eqn:fingerprinting-private_public}
        \underset{\cA,S,\mu}{\mathbb{E}}\Big[\sum_{i=1}^n Z_i\Big] = \mathbb{E}\Big[\sum_{i =1}^{n_\text{pub}} Z_i\Big] + \mathbb{E}\Big[\sum_{i =n_\text{pub}+1}^{n} Z_i \Big].
    \end{align}
    
    The lower bound proceeds by providing upper and lower bounds on the above sum. We first have
    \begin{align*}
            \mathbb{E}\left[\ip{\cA(\spub,\spriv)-\mu}{\sum_{i=1}^{n_\text{pub}}\br{x_i - \mu}} \right] &\leq \sqrt{\mathbb{E}[\norm{\cA(\spub,\spriv)-\mu}^2]\mathbb{E}\norm{
            \sum_{i=1}^{n_\text{pub}}(x_i - \mu)}^2} \\
            & \leq \alpha \sqrt{d n}.
    \end{align*}
    where the first inequality used Cauchy-Schwartz.

    For the second term in \cref{eqn:fingerprinting-private_public}, we utilize differential privacy. Specifically, \cite[Lemma A.1]{FS17}, restated in Lemma \ref{lem:fs17}, gives that
\begin{align*}
     \mathbb{E}\Big[\sum_{i =n_\text{pub}+1}^{n} Z_i \Big] 
     &\leq \sum_{i =n_\text{pub}+1}^{n} \br{\mathbb{E}[Z_i']+ 2(e^\epsilon-1)\sqrt{\text{Var}(Z_i')}+8\delta d} \\
     &\leq 4n_{\text{priv}}(e^\epsilon-1) \alpha + 8n_{\text{priv}}\delta d.
\end{align*}
Above we use that $\text{Var}(Z_i') \leq 4\alpha^2$ since $\|x_i-\mu\|_\infty \leq 4$. Plugging the above two in \cref{eqn:fingerprinting-private_public} yields, 
\begin{align*}
     \mathbb{E} \Big[\sum_{i =1}^{n} Z_i \Big]\leq  (4n_{\text{priv}}(e^\epsilon-1) \alpha + 8n_{\text{priv}}\delta d) +  \alpha \sqrt{d n_{\text{pub}}}.
\end{align*}
We now use the fingerprinting lemma, Lemma \ref{lem:fp-lemma}, to lower bound the correlation. In this regard, note $\ex{S,\mu}{\ip{\mu}{\sum_{i=1}^n (x_i - \mu)}}=0$. Thus
\begin{align*}
    \mathbb{E}\Big[\sum_{i=1}^nZ_i \Big] = \ex{}{\ip{\cA(\spub,\spriv)}{\sum_{i=1}^n x_i-\mu}}\geq \frac{2d}{3} - \frac{\alpha\sqrt{d}}{2M} - 2M\sqrt{d}\alpha.
\end{align*}
    Plugging the obtained upper bound on the left hand side gives us,%
    \begin{align*}
        & 4n_{\text{priv}}(e^\epsilon-1) \alpha + 8n_{\text{priv}}\delta d +  \alpha \sqrt{d n_{\text{pub}}} \geq  \frac{2d}{3} - \frac{\alpha\sqrt{d}}{M} - 2M\sqrt{d}\alpha \\
        &\implies 4n_{\text{priv}}(e^\epsilon-1) \alpha + \alpha \sqrt{d n_{\text{pub}}} \geq \frac{d}{6} - \frac{\alpha\sqrt{d}}{M} - 2M\sqrt{d}\alpha \\
        &\implies \alpha\br{4n_{\text{priv}}(e^\epsilon-1) + \sqrt{d n_{\text{pub}}} + \frac{\sqrt{d}}{M} + 2M\sqrt{d}} \geq \frac{d}{6} \\
        &\implies \alpha \geq \frac{1}{24}\min\bc{\frac{d}{n_{\text{priv}}(e^\epsilon-1)}, \frac{\sqrt{d}}{\sqrt{n_{\text{pub}}}}, M\sqrt{d}, \frac{\sqrt{d}}{M}} \\
        &\implies \alpha \geq \frac{1}{24}\min\bc{\frac{d}{n_{\text{priv}}(e^\epsilon-1)}, \frac{\sqrt{d}}{\sqrt{n_{\text{pub}}}}, M\sqrt{d}}.
    \end{align*}
    Above the first implication uses the assumption that $\delta \leq \frac{1}{16 n_{\text{priv}}}$. The last implication uses the fact that $M\sqrt{d}\leq \frac{\sqrt{d}}{M}$ since $M\leq 1$. 
    Rescaling by a $\frac{R}{\sqrt{d}}$ factor  yields the bound
    \begin{align*}
        \ex{\cA,S,\mu}{\|\cA(\spub,\spriv)-\mu\|} \geq \frac{R}{24}\min\bc{\frac{\sqrt{d}}{n_{\text{priv}}(e^\epsilon-1)}, \frac{1}{\sqrt{n_{\text{pub}}}}, M}.
    \end{align*}
    Observe that any setting of $M \geq \min\bc{\frac{\sqrt{d}}{\npriv\epsilon},\frac{1}{\sqrt{\npub}}}$ realizes the bound claimed in the theorem statement.
\end{proof}

We now turn towards achieving a dependence on $\delta$ to prove Theorem \ref{thm:pub-priv-mean-lb-log}. To do this, we leverage the the idea of filling a dataset with copies of each fingerprinting code seen in previous work \cite{steinke2015between,CWZ21}. However, in our case the introduction of public data makes this argument more delicate and leads to modified techniques for upper bounding the correlation statistics. See our discussion in Section \ref{app:lb-discussion} for more details on why this is necessary.

\begin{proof}[Proof of Theorem \ref{thm:pub-priv-mean-lb-log}]
Let $\alpha = \ex{}{\|\cA(\spub,\spriv)-\mu\|}$, $\alpha^*=\frac{1}{125}\min\bc{\frac{d\sqrt{\log{1/\delta}}}{n\epsilon}, \sqrt{\frac{d}{\npub}}}$ and assume by way of contradiction that $\alpha < \alpha^*$. 
Let $k=\frac{1}{3\epsilon}\log{1/[\sqrt{dn}\delta]}$.
Let $m=\frac{n}{k}$. We set $M=\frac{4}{\sqrt{d}}(\alpha^* + \sqrt{\frac{d}{m}})$.
Let $\mu\sim \mathsf{Unif}([-M,M]^d)$, $S_z = \{z_1,...,z_m\}\sim\cD_\mu$. 
Sample $\spriv,\spub \sim \mathsf{Unif}\br{\{z_1,...,z_m\}}$ and denote the combined dataset as $S=\bc{x_1,\dots,x_n}=(\spub,\spriv)$. 
Note that as in the proof of Theorem \ref{thm:pub-priv-mean-lb-nolog}, we are starting by showing a lower bound for the case where the data is drawn from $\cD_\mu$ instead of $\frac{R}{\sqrt{d}}\cD_\mu$, and will rescale at the end of the proof.

To prove our lower bound, we will provide upper and lower bounds on correlation statistics w.r.t. the intermediate dataset $S_z$. We will also introduce a clipping procedure which helps better control the upper bound on correlation. In this regard, for each $j\in[m]$ define
$ Z_j = \ip{\lfloor \cA(\spub,\spriv)\rfloor_M}{z_j-\mu}$, where $\lfloor v \rfloor_M$ denotes the operation of clipping every element of $v$ to $[-M,M]$. In the following, we will provide upper and lower bounds on $\ex{}{\sum_{j=1}^m Z_i}$ and use this to show that $\alpha \leq \alpha^*$ implies a contradiction.

\paragraph{Lower Bound on Correlation}
We now want to lower bound $\ex{}{\sum_{j=1}^m Z_j}$. 
Towards this end, we can apply fingerprinting lemma, Lemma \ref{lem:fp-lemma}, to the algorithm which outputs the clipping. 
For $\hat\alpha > 0$, if $\ex{\cA,S}{\|\lfloor\cA(\spub,\spriv)\rfloor_M-\mu\|} \leq \hat \alpha$, then this yields,
\begin{align*}
    \ex{\cA,S,\mu}{\sum_{j=1}^m Z_i} \geq \frac{2d}{3} - \frac{\hat \alpha\sqrt{d}}{M} - 2M\sqrt{d}\hat \alpha. %
\end{align*}
Now observe that 
\begin{align*}
    \ex{}{\|\lfloor\cA(\spub,\spriv)\rfloor_M-\mu\|} 
    &\leq \ex{}{\|\cA(\spub,\spriv)-\mu\|} \\
    &\leq \ex{}{\Big\|\cA(\spub,\spriv)-\frac{1}{m}\sum_{z\in S_z}z \Big\|} + \ex{}{\Big\|\frac{1}{m}\sum_{z\in S_z}z-\mu \Big\|} \\
    &\leq \alpha^* + \sqrt{\frac{d}{m}}
\end{align*}
In the last step we use the assumed contradiction that $\alpha \leq \alpha^*$.
Thus it suffices to set $\hat\alpha = \alpha^* + \sqrt{\frac{d}{m}}$.
Now by the setting $M=\frac{4}{\sqrt{d}}\hat \alpha$ 
and $\hat \alpha \leq \frac{\sqrt{d}}{12}$, Eqn. \eqref{eq:mod-fp} implies
\begin{align} 
    \ex{\cA,S,\mu}{\sum_{j=1}^m Z_i} \geq 
    \frac{2d}{3} - \frac{d}{4} -8\hat{\alpha}^2
    \geq \frac{d}{3}. \label{eq:mod-fp}
\end{align}

\paragraph{Bounding the Number of Copies in the Dataset}
We now turn towards the more involved process of upper bounding $\ex{S,\mu}{\sum_{i=1}^n Z_i}$. To do this however, it will be first helpful to show that no datapoint in $S_z$ is copied into $S$ too many times.

The first step is showing that no point is copied too many times into $S$.
For $j\in[m]$, let %
$\cZ_j = \bc{i\in[n] : x_i=z_j}$. Observe
\begin{align*}
    \P{\exists j\in[m] : |\cZ_j| \geq (\tau+1)k} &\leq \sum_{j=1}^m \P{|\cZ_j| \geq (\tau+1)k} \\
    &\leq m\exp{-\frac{3\tau^2n}{4m(1-1/n)}} \\
    &\leq m\exp{-\frac{3\tau^2\log{1/\delta}}{8\epsilon}}.
\end{align*}
The second inequality follows from Bernstein's inequality for the sum of $n$ Bernoulli random variables with mean $1/m$ and the fact that $\ex{}{|Z_j|}=\frac{n}{m}=k$. Set $\tau = \sqrt{\frac{8\epsilon\log{dm}}{3\log{1/\delta}}}$ and note since $\epsilon \leq 1$ and $\log{dm}\leq\log{dn}\leq \log{1/\delta}$ (since $\delta \leq \frac{1}{dn}$), we have that $\tau\leq 2$. Thus, denoting $E$ as the event where no point in $S_z$ is copied into $S$ more than $3k$ times, we establish
\begin{align} \label{eq:copy-bad-event}
    \P{E^c} = \P{\exists j\in[m] : |\cZ_j| \geq 3k} &\leq \frac{1}{d}.
\end{align}

\paragraph{Upper Bound on Correlation}
Under our model, we assume that $\cA$ must treat all data in $\spriv$ as private. We will in fact only need to use the privacy property for a subset of samples in $\spriv$ to prove the correlation upper bound.
Let $\ipriv\subseteq[m]$ denote the set of indices s.t. $j\in\ipriv$ if every copy of $z_j$ sampled into the overall dataset is in the private dataset $\spriv$; that is $\ipriv = \bc{j: (\forall x\in\spub )~ x\neq z_j}$. Let $\ipub = [m]\setminus \ipriv$. \cnote{I would add a mathematical definition of $\ipriv$ for clarity, $\ipriv:=\{j\in[m]:\, (\forall i\leq n_{pub}) \,\, x_i\neq z_j \}$}\mnote{added}
\cnote{I think I made a mistake in my definition. Shouldn't it be $\ipriv:=\{j\in[m]: (\exists x\in \spriv) x=z_j \,\wedge\, (\forall x\in \spub) x\neq z_j \}$ O.w., we might be including into $\ipriv$ points that are never sampled. If you agree, please resolve and comment this out.}\mnote{Good point. This is technically fine though, even if some $z_j$ is never sampled. We only need to ensure that we can create a dataset $\spriv'$ which is independent of $z_j$ by replacing at most $3k$ points. I will leave the definition as is but add a note. We can't put these points in $\ipriv$ b/c we need $|\ipriv|\leq \npub$.} Observe that $\ipriv$ may contain indices for points in $S_z$ which are never sampled into $S$. We will see this does not affect our analysis.

We have
\begin{align*}
    \ex{}{\sum_{j=1}^m Z_i}
    = \ex{}{\sum_{j\in \ipriv}\ip{\lfloor \cA(\spub,\spriv)\rfloor_M}{z_j-\mu} + \sum_{j\in \ipub}\ip{\lfloor \cA(\spub,\spriv)\rfloor_M}{z_j-\mu}}.
\end{align*}

The first term on the RHS can be bounded using the privacy property of $\cA$. %
For any fixed $j\in\ipriv$, let $\spriv'$ denote the dataset which replaces every instance of $z_j$ in $\spriv$ with a copied fresh sample from $\cD_\mu$. By the above analysis, conditional on the event $E$, at most $3k$ such points need to be replaced conditional on the event $E$. Since $\cA(\spub,\spriv')$ is independent of $z_j$, by the Chernoff Hoeffding bound,
\begin{align}\label{eq:clipping-tail-bound}
    \P{\ip{\lfloor\cA(\spub,\spriv')\rfloor_M}{z_j-\mu} \geq \tau ~|~ E} \leq \exp{-\frac{\tau^2}{8dM^2}}.
\end{align}
Since $\cA$ satisfies $k$-group privacy with parameters $\hat\epsilon \leq 3k\epsilon$ and $\hat\delta=e^{3k\epsilon}\delta$, we have
\begin{align*}
    \P{\ip{\lfloor\cA(\spub,\spriv)\rfloor_M}{z_j-\mu} \geq \tau ~|~ E} \leq \exp{\hat\epsilon-\frac{\tau^2}{2dM^2}} + \hat\delta.
\end{align*}
Setting $\tau=M\sqrt{d\log{1/\delta}}$, we obtain
\begin{align*}
    \ex{}{Z_j ~|~ E} &\leq \tau + 2d\P{Z_j \geq \tau ~|~ E} \\ 
    &\leq M\sqrt{d\log{1/\delta}} + 2d e^{\hat\epsilon}\delta + \hat\delta \\
    &\leq M\sqrt{d\log{1/\delta}} + 3d e^{3k\epsilon}\delta 
    \leq 4M\sqrt{d\log{1/\delta}}.
\end{align*}
The last inequality comes from the setting of $k=\frac{1}{3\epsilon}\log{\frac{1}{\sqrt{dn}\delta}}$ and the fact that $M\geq \frac{1}{\sqrt{n}}$.
Repeating this argument for each $j\in\cI$ we get 
\begin{align*}
 \mathbb{E}\Big[\sum_{j\in\ipriv} Z_j\Big] &\leq \mathbb{E}\Big[\sum_{j\in\ipriv} Z_j ~\Big|~ E\Big]\P{E} + mMd\P{E^c} \leq 5mM\sqrt{d \log{1/\delta}}.
\end{align*}
The last inequality uses the bound established on each $\ex{}{Z_j ~|~ E}$, $j\in\ipriv$, above and the bound on $\P{E^c}$ from Eqn. \eqref{eq:copy-bad-event}.

To bound the correlation over the remaining vectors, we have
\begin{align*}
    \ex{}{\sum_{j\in\ipub}Z_j} \leq \sqrt{\mathbb{E}[\norm{\lfloor\cA(\spub,\spriv)\rfloor_M}^2]\mathbb{E}\Big[\Big\|\sum_{j\in\ipub}(z_j - \mu)\Big\|^2\Big]} 
    \leq 2Md\sqrt{\npub}.
\end{align*} 
Above we have used the fact that $|\ipub| \leq \npub$ because $i\in\ipub$ only if at least one copy of $z_i$ is sampled into $\spub$.
Combining the above we have
\begin{align*}
    \ex{}{\sum_{j=1}^m Z_i} \leq 5mM\sqrt{d \log{1/\delta}} + 5Md\sqrt{\npub}.
\end{align*}

\textbf{Combining Bounds: }
The previously derived lower bound in Eqn. \eqref{eq:mod-fp} establishes that $\ex{}{\sum_{j\in\ipriv}Z_j + \sum_{j\in\ipub}Z_j} \geq \frac{d}{3}$. Using the above derived upper bounds we have the following manipulations,
\begin{align*}
    &&M\sqrt{d}\br{m\sqrt{\log{1/\delta}} + \sqrt{d\npub}} &\geq \frac{d}{15} \\
    \iff&& (\alpha^*+\sqrt{d/m})\br{m\sqrt{\log{1/\delta}} + \sqrt{d\npub}} &\geq \frac{d}{60} \\
    \iff&& m\alpha\sqrt{\log{1/\delta}} + \alpha^*\sqrt{d\npub} &\geq \frac{d}{60} - \sqrt{d\log{1/\delta} m} - d\sqrt{\frac{\npub}{m}}.
\end{align*}
The second line above uses that $M=\frac{4}{\sqrt{d}}(\alpha^* + \sqrt{\frac{d}{m}})$.
Under the condition that $\npub \leq \frac{m}{120^2} \equiv \npub \leq \frac{3n\epsilon}{120^2\log{1/[\sqrt{nd}\delta]}}$, which is satisfied under 
by assumption in the theorem statement, we have
\begin{align*}
    m\alpha\sqrt{\log{1/\delta}} + \alpha^*\sqrt{d\npub} \geq \frac{d}{120} - \sqrt{d\log{1/\delta} m}.
\end{align*}
Now applying the assumption $d \geq 120^2n\epsilon \implies m \leq \frac{d}{120^2\log{1/\delta}}$ we obtain
\begin{align*}
    m\alpha^*\sqrt{\log{1/\delta}} + \alpha^*\sqrt{d\npub} \geq \frac{d}{120}\\
    \alpha^* \geq \frac{1}{120}\min\bc{\frac{d\sqrt{\log{1/\delta}}}{n\epsilon}, \sqrt{\frac{d}{\npub}}}.
\end{align*}
This establishes a contradiction, and thus $\alpha \geq \alpha^* = \frac{1}{125}\min\bc{\frac{d\sqrt{\log{1/\delta}}}{n\epsilon}, \sqrt{\frac{d}{\npub}}} $.
Rescaling by $\frac{R}{\sqrt{d}}$ 
then yields the claimed result.
\end{proof}

\subsection{Discussion of Lower Bound Analysis} \label{app:lb-discussion}
We here provide more details on why the particular lower bound techniques we present were chosen. Our aim for the following discussion is to elucidate some of the subtleties of leveraging the fingerprinting code framework when public data is present, with the hope that it will aid future work on the characterization of PA-DP problems.

One crucial challenge in developing the mean estimation lower bounds in this section is ensuring that the correlation sum, traditionally defined as $\ex{}{\sum_{x\in S} \ip{\cA(S)}{x-\mu}}$, scales with the accuracy, $\alpha$, of the algorithm. Previous work, such as \cite{CWZ21}, achieves this by setting the underlying distribution, $\cD$, to be a mixture distribution which, for some $p = o(1)$, samples a $0$ vector with probability $(1-p)$ and samples from the non-trivial distribution, $\cD_\mu$, with probability $p$. However, now the variance satisfies $\ex{x\sim\cD}{\|x - \ex{}{x}\|^2} \leq 2p R^2$ meaning that when public data is present it holds that $\mathbb{E}\big[\|\frac{1}{\npub}\sum\limits_{x\in\spub}x - \ex{x\sim\cD}{x}\|\big] \leq \frac{2 p R}{\sqrt{\npub}} = o\big(\frac{R}{\sqrt{\npub}}\big)$, and one cannot hope to achieve the desired lower bound.
Alternatively, by instead analyzing the sum $\ex{}{\sum_{x\in S} \ip{\cA(S)-\mu}{x-\mu}}$, as seen for example in \cite{kamath2020primer}, we are able to avoid sampling from a mixture distribution. Further, by leveraging the flexibility of the strong distribution framework from \cite{DSSUV15}, we are able to still ensure $\|\mu(\cD)\|=o(1)$, as needed for the SCO reduction; see Section \ref{app:dp-sco-lb-public-data}. These techniques lead to the result in Theorem \ref{thm:pub-priv-mean-lb-nolog}. 

Unfortunately, with regards to obtaining the $\sqrt{\log{1/\delta}}$ improvement in Theorem \ref{thm:pub-priv-mean-lb-log}, the property $\ex{}{\|\cA(S)-\mu\|}\leq \alpha$ does little to help establish the needed tail bound; see Eqn. \eqref{eq:clipping-tail-bound}. By clipping the components of $\cA(S)$ to to the range $[-O(\alpha),O(\alpha)]$, we are able to able to obtain the desired concentration. Unfortunately, this clipping technique in combination with the intermediate distribution, $\unif(S_z)$, leads to the restrictions that $d \geq n\epsilon$ and $\npub \leq \frac{n}{\log{1/[nd\delta]}}$. These restrictions occur because of the need for the ``additional error'' introduced by the intermediate distribution to be negligible.
To see this, observe the intermediate distribution leads to $\ex{}{\|\cA(S)-\mu\|} \geq \frac{1}{\sqrt{m}}$ since $\cA(S)$ depends on only $m$ vectors from $\cD_\mu$, and the analysis in the proof of Theorem \ref{thm:pub-priv-mean-lb-nolog} (with $\npriv=0$) shows us that even non-private algorithms cannot do better on this distribution. 
We remark that \cite{CWZ21} avoids this issue, and hence the restriction on $d$ and $\npub$, because of the fact that one only actually needs $\|\ex{}{\cA(S)}-\mu\| \leq \alpha$ for the fingerprinting lemma to hold, and $\|\ex{}{\cA(S)}-\mu\| \leq \ex{}{\|\cA(S)-\mu\|}$. However, after clipping it is possible that $\|\ex{}{\lfloor \cA(S) \rfloor_M }-\mu\| \geq \|\ex{}{\cA(S)}-\mu\|$.

\subsection{Missing proofs from Section \ref{sec:sco-extension}} \label{app:dp-sco-lb-public-data}
\begin{proof}[Proof of Theorem \ref{thm:dp-sco-lb-public-data}]
We use the instance in \cite{bassily2014private}, $\ell(w;x) = G\ip{w}{x}$ and $\cW=\bc{x\in\re^d:\|x\|\leq 1}$.
By a standard rescaling argument, we only need to consider $G=D = 1$. We will consider the re-scaled data distribution used in Theorem \ref{thm:pub-priv-mean-lb-nolog}, where $\bc{z_1,...,z_n} \overset{i.i.d.}{\sim} \cD_\mu$ and the dataset $S$ has $x_j=\frac{1}{\sqrt{d}}z_j$ for $j\in n$. Here $\mu \sim \mathsf{Unif}([-M,M]^d)$ 
where 
$M$ will be chosen later.

First note by Lemma \ref{lem:norm-concentration} we have that $\mathbb{P}[\big|\|\mu\| - \sqrt{\frac{2}{3}}M \big| \geq \frac{M}{256}] \leq \frac{1}{512}$ so long as $d$ is larger than some constant. Define this event as $E$ and $E'$ its complement. Thus we have
\begin{align*}
    \ex{}{L(\cA(S);\cD) - L(w^*;\cD)} &= \ex{}{L(\cA(S);\cD) - L(w^*;\cD) | E}\bP[E] \\
    &~~~~+ \ex{}{L(\cA(S);\cD) - L(w^*;\cD) | E'}\bP[E'] \\
    &\geq \frac{1}{2}\ex{}{L(\cA(S);\cD) - L(w^*;\cD) | E}.
\end{align*}
Thus it suffices to lower bound the conditional excess risk.

The optimal solution under the aforementioned loss is $w^* = -\frac{\mu}{\norm{\mu}}$, since the constraint set is a ball of radius $1$. 
We can see that
\begin{align} 
    L(\cA(S);\cD) - L(w^*;\cD) &= \ip{\cA(S)}{\mu} - \ip{-\frac{\mu}{\|\mu\|}}{\mu} \nonumber \\
    &= \|\mu\|\br{1 - \ip{\cA(S)}{w^*}} \nonumber \\
    &= \|\mu\|\br{1 - \frac{1}{2}\|\cA(S)\|^2 - \frac{1}{2}\|w^*\|^2 + \frac{1}{2}\|\cA(S) - w^*\|^2} \nonumber \\
    &\geq \frac{1}{2}\norm{\mu}\norm{\cA(S)-w^*}^2. \label{eq:bst-loss-lb}
\end{align}
We will now lower bound $\norm{\cA(S)-w^*}$ by using the lower bound for mean estimation developed in Theorem \ref{thm:pub-priv-mean-lb}.
Let the mean estimate candidate is $\bar \mu (S) = \bar\mu =  -\sqrt{\frac{2}{3}}M\cA(S)$. Under the event $E$,
\begin{align}
\nonumber
    \norm{\bar \mu - \mu}^2 &=  \norm{-\sqrt{2/3}M\cA(S) - \mu}^2 \\
    \nonumber
    & = \norm{-\norm{\mu}\cA(S) - \mu + (\sqrt{2/3}M-\norm{ \mu})\cA(S)}^2 \\
    & \leq 2 M^2\norm{\cA(S) - w^*}^2 + \frac{M^2}{50} \nonumber \\
    \label{eqn:cvx-lip-reduction} \implies & \norm{\cA(S) - w^*}^2 \geq \frac{\|\bar\mu-\mu\|^2}{2M^2}-\frac{1}{512}.
\end{align}
The above follows from the definition of $w^*$ and since the algorithm's output is considered in a ball of radius $1$, so $\norm{\cA(S)}\leq 1$.

Combining the above inequalities \eqref{eq:bst-loss-lb} and \eqref{eqn:cvx-lip-reduction} then taking expectation we have,
\begin{align} \label{eq:loss-lb-wrt-mean}
    \ex{}{L(\cA(S);\cD) - L(w^*;\cD)|E} &\geq \ex{}{\frac{1}{4}\norm{\mu}\br{\frac{\|\bar\mu - \mu\|^2}{M^2} - \frac{1}{512}} \Bigg|E} \nonumber \\
    &\geq \frac{M}{1024}\br{\frac{\ex{}{\|\bar\mu - \mu\|^2|E}}{2M^2} - \frac{1}{512}}.
\end{align}
To bound $\ex{}{\|\bar\mu - \mu\|^2|E}$, observe
\begin{align*}
        \mathbb{E}_{}[\norm{\bar{\mu} - \mu}^2] &=  \mathbb{E}_{}[\norm{\bar{\mu} - \mu}^2|E] \mathbb{P}[E] +  \mathbb{E}_{\mu,S}[\norm{\bar{\mu} - \mu }^2 | E']\mathbb{P}[E']  \\
        &\leq \mathbb{E}_{}[\norm{\bar{\mu} - \mu}^2|E] + 4M^2 \mathbb{P}[E']. 
    \end{align*}
Rearranging we have
\begin{align} \label{eq:pre-M-simplification}
    \mathbb{E}_{}[\norm{\bar{\mu} - \mu}^2|E] &\geq \mathbb{E}_{}[\norm{\bar{\mu} - \mu}^2] - \frac{M^2}{128}.
\end{align}
We will finish the bound by applying either Theorem \ref{thm:pub-priv-mean-lb-nolog} or Theorem \ref{thm:pub-priv-mean-lb-log}.
\paragraph{Via Theorem \ref{thm:pub-priv-mean-lb-nolog}:}
Set $M = \min\bc{\frac{\sqrt{d}}{8\npriv},\frac{1}{\sqrt{\npub}}}$. 
Under this setting of $M$, Theorem \ref{thm:pub-priv-mean-lb-nolog} implies  that the lower bound on mean estimate distance satisfies $\ex{}{\|\bar \mu - \mu\|} \geq \frac{M}{8}$, and thus
$ \mathbb{E}_{}[\norm{\bar{\mu} - \mu}^2|E] \geq \frac{M^2}{128}$ by Eqn. \eqref{eq:pre-M-simplification} above.
Plugging into Eqn. \eqref{eq:loss-lb-wrt-mean} we have
\begin{align*}
    \ex{}{L(\cA(S);\cD) - L(w^*;\cD)} = \Omega\br{M} = \Omega\br{\min\bc{\frac{\sqrt{d}}{\npriv},\frac{1}{\sqrt{\npub}}}}.
\end{align*}
\paragraph{Via Theorem \ref{thm:pub-priv-mean-lb-log}:}
In Theorem \ref{thm:pub-priv-mean-lb-log}, the setting of $M$ used is
\begin{align*}
    M&=4\Bigg(\frac{1}{125}\min\bc{\frac{\sqrt{d\log{1/\delta}}}{n\epsilon}, \sqrt{\frac{1}{\npub}}} + \sqrt{\frac{\log{1/[\sqrt{dn}\delta]}}{n\epsilon}}\Bigg) \\
    &\leq \frac{1}{30}\min\bc{\frac{\sqrt{d\log{1/\delta}}}{n\epsilon}, \sqrt{\frac{1}{\npub}}}.
\end{align*}
The inequality holds under the conditions $d\geq 120^2 n\epsilon$ and $\npub \leq \frac{n}{120^2\log{1/[\sqrt{nd}\delta]}}$. Thus we have under this setting of $M$ that $\ex{}{\|\bar \mu - \mu\|} \geq \frac{M}{8}$. Applying Eqns. \eqref{eq:pre-M-simplification} and \eqref{eq:loss-lb-wrt-mean} as in the previous case we have (providing the above conditions on $d$ and $\npub$ hold)
\begin{align*}
    \ex{}{L(\cA(S);\cD) - L(w^*;\cD)} = \Omega\br{M} = \Omega\br{\min\bc{\frac{\sqrt{d\log{1/\delta}}}{n\epsilon}, \sqrt{\frac{1}{\npub}}}}.
\end{align*}

\end{proof}

\begin{lemma}
\label{lem:norm-concentration}
    For $z \sim \unif([-1,1]^d)$, we have that $\norm{z} \in \frac{\sqrt{2d}}{\sqrt{3}} \pm \frac{\sqrt{3\ln{(1/\gamma)}}}{2}$, with probability at least $1-\gamma$.
\end{lemma}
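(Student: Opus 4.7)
The plan is to prove the concentration of $\|z\|$ by reducing it to the concentration of the much more tractable squared norm $\|z\|^2$, which is a sum of i.i.d.\ bounded random variables, and then passing back to $\|z\|$ via a square-root linearization. Concretely, I would write $\|z\|^2 = \sum_{i=1}^d z_i^2$ and compute the moment $\mathbb{E}[z_i^2] = \int_{-1}^{1} z^2/2\,dz = 1/3$, so that $\mathbb{E}[\|z\|^2] = d/3$. Since each $z_i^2$ lies in $[0,1]$ and the coordinates are independent, Hoeffding's inequality applied to the sum gives, for every $t>0$,
$$\mathbb{P}\!\left[\bigl|\|z\|^2 - d/3\bigr| \geq t\right] \leq 2\exp\!\left(-2t^2/d\right).$$

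Next, I would pass from $\|z\|^2$ to $\|z\|$ using the elementary inequality $|\sqrt{a} - \sqrt{b}| \leq |a-b|/\max(\sqrt{a},\sqrt{b})$ applied with $a = \|z\|^2$ and $b = d/3$. This yields
$$\bigl|\,\|z\| - \sqrt{d/3}\,\bigr| \;\leq\; \frac{\bigl|\|z\|^2 - d/3\bigr|}{\sqrt{d/3}}.$$
Choosing $t = \sqrt{(d/2)\ln(2/\gamma)}$ so that the Hoeffding failure probability is at most $\gamma$, the $\sqrt{d}$ factor in the Hoeffding deviation exactly cancels the $1/\sqrt{d}$ from the linearization, and we obtain, with probability at least $1-\gamma$,
$$\bigl|\,\|z\| - \sqrt{d/3}\,\bigr| \;\leq\; \sqrt{\tfrac{3\ln(2/\gamma)}{2}},$$
which is dimension-free in the deviation, matching the qualitative form of the lemma.

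The main obstacle is getting the constants to match exactly. The linearization-based argument loses small constant factors, so to hit the precise deviation $\sqrt{3\ln(1/\gamma)}/2$ one may instead invoke Talagrand's convex distance inequality (or the Gromov--Milman concentration for product measures on bounded intervals), exploiting that $z\mapsto \|z\|$ is convex and $1$-Lipschitz on $[-1,1]^d$; this gives a sub-Gaussian tail around the median (or mean) with a constant independent of $d$, which one then transfers to the mean/median via the variance bound $\mathrm{Var}(\|z\|)=O(1)$. I also note that the center $\sqrt{2d}/\sqrt{3}$ stated in the lemma appears to be a typographical issue: the computation above pins the true concentration point at $\sqrt{d/3}$, not $\sqrt{2d/3}$, and this is consistent with the way the lemma is used in the proof of \cref{thm:dp-sco-lb-public-data} (where the concentration is invoked only to show that $\|\mu\|$ is within a constant factor of its expectation with high probability, a conclusion that holds for either centering up to adjustment of the surrounding constants).
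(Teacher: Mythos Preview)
Your approach is correct and follows essentially the same strategy as the paper: both proofs reduce to concentration of $\|z\|^2 = \sum_i z_i^2$ and then transfer the bound to $\|z\|$. The paper uses Bernstein's inequality for sub-exponential variables together with the implication $|x-1|>\delta \Rightarrow |x^2-1|>\max(\delta,\delta^2)$ (as in Vershynin's book), whereas you use Hoeffding plus the direct linearization $|\sqrt{a}-\sqrt{b}|\leq |a-b|/\sqrt{b}$; these are interchangeable here and lead to the same dimension-free deviation up to constants.

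Your observation about the centering is also correct: for $z_1\sim\unif([-1,1])$ one has $\mathbb{E}[z_1^2]=1/3$, so $\mathbb{E}\|z\|^2=d/3$ and the concentration point is $\sqrt{d/3}$, not $\sqrt{2d/3}$. The paper's own proof in fact asserts $\mathbb{E}z_1^2 = 2/3$, which is an arithmetic slip, and the downstream use of the lemma only requires that $\|\mu\|$ be within a constant factor of its typical value, so the discrepancy does not affect the reduction. Your remark about Talagrand/Gromov--Milman for sharper constants is a fine aside but unnecessary, since the lemma is only ever applied with loose constants.
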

\begin{proof}
    This follows from standard concentration of norm results. We have that, 
    \begin{align*}
        \mathbb{E}\norm{z}^2  = d \mathbb{E}z_1^2 = \frac{2d}{3}.
    \end{align*}
    As in \cite[proof of Theorem 3.1.1]{vershynin2018high}, we use the simple fact that $\abs{x-1}> \delta \implies \abs{x^2-1}> \max(\delta, \delta^2)$ for any $x, \delta\geq 0$, to get, 
    \begin{align*}
        &\mathbb{P}\br{\abs{\norm{z}- \frac{\sqrt{2d}}{\sqrt{3}}} > \frac{\sqrt{2d}\delta}{\sqrt{3}}}  = \mathbb{P}\br{\abs{\frac{\sqrt{3}\norm{z}}{\sqrt{2d}}-1}> \delta}  \\
        &=\mathbb{P}\br{\abs{\frac{3\norm{z}^2}{2d} - 1} > \max(\delta,\delta^2)} \\
        & = \mathbb{P}\br{\abs{\frac{1}{d}\sum_{i=1}^dz_i^2 - \frac{2}{3}} > \max\br{(2/3)\delta, ((2/3)\delta)^2}}.
    \end{align*}

    We substitute $\bar \delta = \frac{2\delta}{3}$ and apply Bernstein's inequality for i.i.d sub-exponential random variables $z_i^2$. Since, $z_i \in [-1,1]$, the sub-exponential norm $\leq 1$. Applying Corollary 2.8.3 from \cite{vershynin2018high}, we get that, 

    \begin{align*}
        \mathbb{P}\br{\abs{\frac{1}{d}\sum_{i=1}^dz_i^2 - \frac{2}{3}} > \max\br{\bar \delta, (\bar\delta)^2}} \leq \exp{-2\bar \delta^2d} =  \exp{-8 \delta^2d/9}.
    \end{align*}
    This gives us that 
    \begin{align*}
        \mathbb{P}\br{\abs{\norm{z}- \frac{\sqrt{2d}}{\sqrt{3}}} > \frac{\sqrt{2d}\delta}{\sqrt{3}}} \leq \exp{-8 \delta^2d/9}.
    \end{align*}
    Hence, with probability, at least $1-\gamma$, we have that $\norm{z} \in \frac{\sqrt{2d}}{\sqrt{3}} \pm \frac{\sqrt{3\ln{(1/\gamma)}}}{2}$, which completes the proof.
\end{proof}

\begin{proof}[Proof of Theorem \ref{thm:strongly-convex-lb}]
    We use the squared loss instance as in \cite[Section 5.2]{bassily2014private}; $\ell(w;z) = \frac{\lambda}{2}\norm{w-z}^2$, with $\norm{z}\leq \frac{G}{2\lambda}$. The loss is $G$-Lipschitz and $\lambda$ strongly convex on the domain of unit ball at zero of radius $\frac{G}{2\lambda}$.
Given a datasets $S=\bc{z_1, z_2, \ldots, z_n}$, the population risk minimizer is simply the population mean $\mu(\cD)$. Further, it is straightforward to verify that the excess population risk a re-scaling of the mean estimation error
\begin{align*}
    \mathbb{E}[L(\cA(S)) - \min_{w}  L(w)] = \frac{\lambda}{2}\mathbb{E}\norm{\cA(S)-\mu(\cD)}^2.
\end{align*}
Substituting the mean estimation lower bounds, Theorem \ref{thm:pub-priv-mean-lb}, completes the proof.
\end{proof}

\section{Missing Proofs from Section \ref{sec:glm}}
\label{app:glm}

\subsection{Proof of \cref{thm:efficient-glm-lipschitz}}

Define the orthogonal projection matrix $P_{\xpub} = UU^\top$.
Note that the feature vectors in $\tilde S_{\text{priv}} = \bc{(U^\top x_i,y_i)}_{i=1}^{\npriv}$  are bounded. 
In particular $\norm{U^\top x}^2 = x^\top (UU^\top) x = x^\top P_{\xpub} x \leq \norm{x}^2$, since $P_{\xpub}$ is an orthogonal projection onto $\text{span}(\cW \cap \spub)$. Further,
since $\tilde w \in \tilde \cW$, we have that there exists $\mathring{w} \in \cW$ such that $\tilde w = U^\top \mathring{w}$. Finally, $\hat w = U \tilde w = UU^\top \mathring{w} = P_{\xpub} \mathring{w} \in \cW$ since the range of $P_{\xpub} \subseteq \cW$. 

The privacy guarantee follows from the privacy guarantee of sub-routine $\tilde \cA$.   For utility, we define $w^* \in \argmin_{w\in \cW}L(w;\cD)$ and $\tilde w^* \in \argmin_{w\in \tilde \cW}L(w;U^\top \cD)$,
where $U^{\top}\cD$ denotes the distribution which first samples from $\cD$ then project using $U^{\top}$.
Let $\mathring {w}^* \in \cW$ such that $\mathring {w}^* =   U \tilde w^*$.

Note that from the GLM structure, $L(\tilde w^*;U^\top \cD) = L(\mathring{w^*};\cD)$. We have,
\begin{align}
\label{eqn:efficient-glm-error}
    \nonumber
        L(\hat w;\cD) - L(w^*;\cD) 
        & = L(\hat w;\cD) - \hat L( \hat w; \spriv) + L(\mathring{w}^*;\cD) - L(w^*;\cD) 
        \nonumber
        \\
        \nonumber &+ \hat L(\tilde w^*;\spriv) - L(\mathring{w}^*;\cD)  +  \hat L(\hat w; \spriv) - \hat L(\mathring{w}^*;\spriv) \\
        &
         \nonumber 
         = O\br{G\frak{R}_{\npriv}(\cH)  +  \frac{B\sqrt{\log{4/\beta}}}{\sqrt{\npriv}}}\\
        &+L(\mathring{w}^*;\cD) - L(w^*;\cD) + \hat L(\tilde w;\tilde S_{\text{priv}}) - \min_{w\in \tilde \cW} \hat L(w;\tilde S_{\text{priv}})
    \end{align}
with probability at least $1-\beta/4$. 
In the above, we control the generalization gap via uniform convergence and concentration for the fixed $\mathring{w}^*$ with respect to $\spriv$.

The last term $\hat L(\tilde w;\tilde S_{\text{priv}}) - \min_{w\in \tilde \cW} \hat L(w;\tilde S_{\text{priv}})$ is bounded by the guarantee of the private sub-routine with probability at least $1-\beta/4$,
\begin{align*}
 \hat L(\tilde \cA(\tilde S_{\text{priv}});\tilde S_{\text{priv}}) - \min_{w \in \tilde \cW}  \hat L(\tilde \cA(\tilde S_{\text{priv}});\tilde S_{\text{priv}}) = \tilde O\br{GD\norm{\cX}\br{\frac{\sqrt{\npub \log{1/\delta}}+ \sqrt{\log{4/\beta}}}{\npriv \epsilon}}}.
\end{align*}

Finally, for any $\bar{w}^*$ such that $\bar{w}^* \in U\tilde \cW$, with probability at least $1-\beta/2$, from $G$-Lipschitznes, we have
\begin{align*}
    L(\mathring{w}^*;\cD) - L(w^*;\cD) \leq L(\bar{w}^*;\cD) - L(w^*;\cD) \leq G\norm{\bar{w}^* - w^*}_{2, \cD_\cX} \leq G\alpha,
\end{align*}
\sloppy
where the last inequality follows essentially from \cref{lem:cover-in-span} and \cref{lem:cover-concentration} for $\npub = O\br{\max\br{\frac{R^2\log{
2/\beta}}{\alpha^2},\min\bc{m:\text{log}^3(\npub)\frak{R}_{\npub}^2(\cH)\leq \alpha^2}}}$. To elaborate, the first step holds since $\mathring{w}^* = U\tilde w^*$ and $\tilde w^*$ is the the minimizer of risk over $\tilde \cW$.
Now, \cref{lem:cover-concentration} guarantees that for any $w^* \in \cW$, there exists a $\bar{w}^*$ in its $\alpha$-cover with respect to $\norm{\cdot}_{2,\xpub}$, with $\norm{w^*-\bar{w}^*}_{2,\cD_{\cX}}\leq \alpha$. 
To argue why $\text{span}(\xpub)$ is an $\alpha$-cover, from \cref{lem:cover-in-span}, we have that from any $\alpha$-cover $\bar \cW$, of $\cW$ w.r.t. $\norm{\cdot}_{2,\xpub}$, we can remove elements which do not lie in $\text{span}(\spub)$ and still have an $\alpha$-cover. Hence, the superset used in \cref{alg:dp-glm-efficient}, which essentially is,  $\bar \cW = P_{\xpub} \cW$, is indeed an $\alpha$-cover. 

The $\npub$ we get is,
\begin{align*}
\npub &= O\br{\max\br{\frac{R^2\log{
1/\beta}}{\alpha^2}, \min\bc{m: \text{log}^3(m)\frak{R}_{m}^2(\cH) \leq \alpha^2}}} \\
& = \tilde O\br{D^2\norm{\cX}^2\max\br{\frac{\log{
2/\beta}}{\alpha^2}, \frac{1}{\alpha^2}}}\\
\end{align*}
where in the above, we plug in the Rademacher complexity of bounded linear predictor, $\frak{R}_{m}(\cH) = \Theta\br{\frac{D\norm{\cX}}{m}}$.
Plugging the above in \cref{eqn:efficient-glm-error},
\begin{align*}
     & L(\hat w;\cD) - L(w^*;\cD)   \\
     &=  O\br{ \frac{GD\norm{\cX}}{\sqrt{\npriv}}  +  \frac{GD\norm{\cX}\sqrt{\log{4/\beta}}} {\sqrt{\npriv}} + \frac{B\sqrt{\log{4/\beta}}}{\sqrt{\npriv}}}  \\
     &+ O\br{GD\norm{\cX}\br{\frac{\sqrt{\npub\log{1/\delta}} + \sqrt{\log{4/\beta}}}{\npriv\epsilon}}}+ G\alpha
     \\
       &=  O\br{ \frac{GD\norm{\cX}\sqrt{\log{4/\beta}}} {\sqrt{\npriv}} + GD^2\norm{\cX}^2\br{\frac{\sqrt{\log{2/\beta}+\log{1/\delta}}}{\alpha \npriv\epsilon}} + \frac{B\sqrt{\log{4/\beta}}}{\sqrt{\npriv}}} + G\alpha\\
      &=  O\br{GD\norm{\cX}\br{ \frac{\sqrt{\log{4/\beta}}} {\sqrt{\npriv}}+\br{\frac{\br{\log{2/\beta}+\log{1/\delta}}^{1/4}}{ \sqrt{\npriv\epsilon}}}}+ \frac{B\sqrt{\log{4/\beta}}}{\sqrt{\npriv}}}
     \end{align*}

where the above follows by setting $\alpha = \frac{D\norm{\cX} (\log{1/\delta} + \log{2/\beta})^{1/4}}{\sqrt{\npriv\epsilon}}$. This yields the claimed rate.
The resulting public sample complexity is 

\begin{align*}
\npub
& = \tilde O\br{D^2\norm{\cX}^2\max\br{\frac{\log{
2/\beta}}{\alpha^2}, \frac{1}{\alpha^2}}}\\
& = \tilde O\br{\frac{\npriv \epsilon}{(\log{2/\beta}+\log{1/\delta})^{1/2}}}.
\end{align*}
This completes the proof.

\begin{lemma}
\label{lem:cover-in-span}
    Let $\tilde \cH$ be a $\alpha$-cover of $\cH$ with respect $\norm{\cdot}_{2,\xpub}$. Then, $\bar \cH = \tilde \cH \cap \text{span}(\xpub)$ is also an $\alpha$-cover.
\end{lemma}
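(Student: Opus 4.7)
}
The plan is to exploit the fact that for linear predictors, the value $h_w(x) = \langle w, x\rangle$ at any $x \in \xpub$ depends only on the component of $w$ that lies in $\text{span}(\xpub)$. I would first set up notation: for each weight vector $w\in\bbR^d$ write the orthogonal decomposition $w = w_{\parallel} + w_{\perp}$, where $w_{\parallel} = P_{\xpub}w$ is the orthogonal projection onto $\text{span}(\xpub)$ and $w_{\perp} \in \text{span}(\xpub)^{\perp}$. Since every $x \in \xpub$ is orthogonal to $w_{\perp}$, we have $\langle w, x\rangle = \langle w_{\parallel}, x\rangle$ for all $x \in \xpub$, so $h_w$ and $h_{w_{\parallel}}$ agree pointwise on $\xpub$ and hence $\|h_w - h_{w_{\parallel}}\|_{2,\xpub} = 0$.

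Next, given an arbitrary $h = h_w \in \cH$, I would invoke the covering assumption to obtain $\tilde h = h_{\tilde w}\in \tilde \cH$ with $\|h_w - h_{\tilde w}\|_{2,\xpub} \leq \alpha$, and then replace $\tilde h$ by its projected counterpart $h_{\tilde w_{\parallel}}$, which lives in $\text{span}(\xpub)$. By the triangle inequality and the previous observation applied to $\tilde w$,
\begin{align*}
\|h_w - h_{\tilde w_{\parallel}}\|_{2,\xpub} \;\leq\; \|h_w - h_{\tilde w}\|_{2,\xpub} + \|h_{\tilde w} - h_{\tilde w_{\parallel}}\|_{2,\xpub} \;\leq\; \alpha + 0 \;=\; \alpha,
\end{align*}
so $h_{\tilde w_{\parallel}}$ still $\alpha$-covers $h_w$ in the $\|\cdot\|_{2,\xpub}$ metric. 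Because orthogonal projection onto a subspace never increases the Euclidean norm, $\|\tilde w_{\parallel}\|\leq \|\tilde w\|\leq D$, which guarantees $\tilde w_{\parallel} \in \cW$ and hence $h_{\tilde w_{\parallel}}\in \cH$ whenever the hypothesis class is defined by a norm constraint on the predictor.

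Having shown that any element of the cover can be replaced by its projection onto $\text{span}(\xpub)$ without destroying the $\alpha$-covering property (or leaving $\cH$), I would conclude that the set obtained from $\tilde \cH$ by applying this projection to each element is a valid $\alpha$-cover whose elements all lie in $\text{span}(\xpub)$; elements of $\tilde \cH$ that were already in $\text{span}(\xpub)$ are unchanged by this replacement, which is the sense in which the resulting cover agrees with $\tilde \cH \cap \text{span}(\xpub)$. The only subtlety is notational: interpreted literally as a set intersection the statement could be vacuous when no element of $\tilde \cH$ happens to lie in $\text{span}(\xpub)$, but under the natural reading (``without loss of generality we may take the cover inside $\text{span}(\xpub)$'') there is no real obstacle, and the argument reduces to the pair of elementary facts that $\xpub$ cannot distinguish $w$ from $w_{\parallel}$ and that orthogonal projection is norm-non-increasing.
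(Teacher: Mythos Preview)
Your proof is correct and follows essentially the same approach as the paper: project each cover element onto $\text{span}(\xpub)$ and verify that the $\|\cdot\|_{2,\xpub}$ distance to the original hypothesis is unchanged. The paper reaches the identity $\|h-\bar h\|_{2,\xpub}=\|h-\tilde h\|_{2,\xpub}$ via an SVD computation on $\xpub^\top\xpub$, whereas your direct observation that $\langle \tilde w_\perp,x\rangle=0$ for $x\in\xpub$ (hence $\|h_{\tilde w}-h_{\tilde w_\parallel}\|_{2,\xpub}=0$) is a cleaner route to the same fact; your explicit check that $\tilde w_\parallel\in\cW$ and your remark that the statement should be read as ``project into $\text{span}(\xpub)$'' rather than a literal set intersection are both well taken and match how the result is actually used in the paper.
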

\begin{proof}
    Given two $h_1, h_2 \in \cH$, we have,
    \begin{align*}
        \norm{h_1 - h_2}_{2,\xpub} = \sqrt{\frac{1}{\npub}\sum_{i=1}^{\npub}(h_1(x_i)-h_2(x_i))^2} = \frac{1}{\sqrt{\npub}}\sqrt{(w_1-w_2)^\top \xpub^\top \xpub (w_1-w_2)}
    \end{align*}
    where $w_1$ and $w_2$ are the vectors corresponding to linear functions $h_1$ and $h_2$ and $\xpub$ denote the matrix of public feature vectors.

    Given any $h \in \cH$, let $\tilde h$ denote the element closest to it in the cover $\tilde \cH$; we have, $\|h - \tilde h\|_{2,\xpub} \leq \alpha$. 
    Consider the singular value decomposition, $\xpub = V\Sigma U^\top$, where $U$ and $V$ are orthogonal matrices and $\Sigma$ is a diagonal matrix.
    Define $\bar h  = P_{\xpub}(\tilde h) = UU^\top \tilde h$. 
    Note that $U$ is an orthogonal projection onto $\text{span}(\xpub)$ and $P_{\xpub}$ is the corresponding projection matrix. 
    We have,
     \begin{align*}
         \norm{h - \bar h}_{2,\xpub}^2 &= \frac{1}{\npub}(h-\bar h)^\top \xpub^\top \xpub (h-\bar h) \\
         &=  \frac{1}{\npub}(h-P_{\xpub}(\tilde h))^\top U \Sigma^2 U^\top (h-P_{\xpub}(\tilde h)) \\
         &=  \frac{1}{\npub}(h-\tilde h)^\top U (U^\top U) \Sigma^2 (U^\top U) U^\top (h-\tilde h) \\
         & = \frac{1}{\npub}(h-\tilde h)^\top U\Sigma^2U^\top (h-\tilde h)\\
         & = \frac{1}{\npub}(h-\tilde h)^\top \xpub^\top \xpub (h-\tilde h)\\
         & \leq \alpha^2
    \end{align*}
    Since by construction $\bar h$ also lies in $\text{span}(\xpub)$, this proves the claim.
\end{proof}

\subsection{Proof of \cref{thm:dp-public-data-glm-smooth-glm}}

We state the complete version of this theorem and then present its proof.

\begin{theorem}
\label[theorem]{app:dp-public-data-glm-smooth-glm}
\sloppy
 Let $\epsilon>0,\delta>0$ and $\epsilon\leq \log{1/\delta}$. 
 For a $G$-Lipschitz, $B$-bounded non-negative  $H$-smooth loss function, 
 \cref{alg:dp-glm-efficient} satisfies $(\epsilon,\delta)$-DP. 
    If the private subroutine $\tilde \cA$ guarantees \cref{eqn:private-subroutine-guarantee} with probability at least $1-\beta$,
then with $ \npub = \tilde O\br{\frac{(HD\norm{\cX})^{2/3}(\npriv\epsilon)^{2/3}}{G^{2/3}(\log{1/\delta})^{1/3}} + \frac{\sqrt{H}\npriv\epsilon\sqrt{L(w^*;\cD)}}{G\sqrt{\log{1/\delta})}}}$, with probability at least $1-\beta$, $L(\hat w;\cD) - L(w^*;\cD)$ is at most
{\small
\begin{align*}
     & 
      \tilde O\br{\br{\frac{\sqrt{H}D\norm{\cX}}{\sqrt{\npriv\epsilon}}+\sqrt{\frac{B\log{8/\beta}}{\npriv}}} \sqrt{L(w^*;\cD)} + \frac{H\norm\cX^2D^2}{\npriv\epsilon} + \frac{B\log{8/\beta}}{\npriv}+\frac{GD\norm{\cX}\sqrt{\log{4/\beta}}}{\npriv \epsilon}}\\
      & + \tilde O\br{\br{\frac{\sqrt{H}D^2\norm{\cX}^2G \sqrt{\log{1/\delta}}}{\npriv\epsilon}}^{2/3} + \frac{H^{1/4}D\norm{\cX}\sqrt{G}\br{\log{1/\delta}}^{1/4}L(w^*;\cD)^{1/4}}{\sqrt{\npriv\epsilon}}}
    \end{align*}
    }

\noindent Further, with  $\npub = \tilde O\br{\frac{(HD\norm{\cX})^{2/3}(\npriv\epsilon)^{2/3}}{G^{2/3}(\log{1/\delta})^{1/3}} + \frac{\sqrt{H}\npriv\epsilon\sqrt{\hat L(\hat w^*;\spriv)}}{G\sqrt{\log{1/\delta})}}}$, with probability at least $1-\beta$, for any $\bar w \in \cW$, $L(\hat w;\cD) - \hat L(\hat w^*;\spriv)$ is at most
{\small 
\begin{align*}
       & 
  \tilde O\br{\br{\frac{\sqrt{H}D\norm{\cX}}{\sqrt{\npriv\epsilon}}+\sqrt{\frac{B\log{8/\beta}}{\npriv}}} \sqrt{\hat L(\hat w^*;\spriv)} +
  \frac{H\norm\cX^2D^2}{\npriv\epsilon} + \frac{B\log{8/\beta}}{\npriv} +\frac{GD\norm{\cX}\sqrt{\log{4/\beta}}}{\npriv \epsilon}}\\
      & + \tilde O\br{ \br{\frac{\sqrt{H}D^2\norm{\cX}^2G \sqrt{\log{1/\delta}}}{\npriv\epsilon}}^{2/3} + \frac{H^{1/4}D\norm{\cX}\sqrt{G}\br{\log{1/\delta}}^{1/4}\hat L(\bar w;\spriv)^{1/4}}{\sqrt{\npriv\epsilon}}}.
    \end{align*}
}
where $w^*$ and $\hat w^*$ are population and empirical minimizers with respect to $\cD$ and $\spriv$ respectively.
\end{theorem}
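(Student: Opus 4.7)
The plan is to refine the proof of \cref{thm:efficient-glm-lipschitz} by exploiting $H$-smoothness and non-negativity to obtain optimistic (localized) rates, in the spirit of Srebro--Sridharan--Tewari. The central tool is the \emph{self-bounding} property: for any non-negative $H$-smooth $\phi_y$, $|\phi_y'(a)|^2 \le 2H\phi_y(a)$. I will use this in three places: (i) to sharpen the approximation error induced by the projection onto $\mathrm{span}(\xpub)$; (ii) to sharpen the uniform generalization gap via local Rademacher arguments; and (iii) to keep the dependence on $L(w^*;\cD)$ (or $\hat L(\hat w^*;\spriv)$) linear wherever possible.

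First, I would decompose $L(\hat w;\cD)-L(w^*;\cD)$ (respectively, $L(\hat w;\cD)-\hat L(\hat w^*;\spriv)$) into four pieces: a uniform generalization gap for $\hat w$; the private subroutine's optimization error (given by \cref{eqn:private-subroutine-guarantee}); an approximation error $L(\mathring w^*;\cD)-L(w^*;\cD)$ where $\mathring w^* = U\tilde w^*$ lies in the projected feasible set; and a concentration term for $L(w^*;\cD)$ versus $\hat L(w^*;\spriv)$ (bounded by Bernstein with variance proxy $BL(w^*;\cD)$). For the approximation piece, by \cref{lem:cover-in-span,lem:cover-concentration} I can choose $\mathring w^*\in U\tilde\cW$ with $\|\mathring w^*-w^*\|_{2,\cD_\cX}\le \alpha$ provided $\npub = \tilde O(D^2\|\cX\|^2/\alpha^2)$. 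A Taylor expansion in $w^\top x$ combined with smoothness and Cauchy--Schwarz then yields
\[
L(\mathring w^*;\cD) - L(w^*;\cD) \;\le\; \sqrt{2HL(w^*;\cD)}\,\alpha + \tfrac{H}{2}\alpha^2,
\]
using $\mathbb{E}[\phi_y'(w^{*\top}x)^2] \le 2H\,L(w^*;\cD)$. For uniform generalization, I would invoke a standard local-Rademacher/optimistic-rate bound for smooth non-negative losses, giving $L(w)-\hat L(w;\spriv) \lesssim \sqrt{H\,L(w;\cD)\,\mathfrak{R}_{\npriv}^2(\cH)} + H\mathfrak{R}_{\npriv}^2(\cH) + B/\npriv$ (with $\log$ factors) for all $w\in\cW$, then self-bound $L(\hat w;\cD)$ against $L(w^*;\cD)$.

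Next, I would balance $\alpha$ against the optimization error. With $\npub \asymp D^2\|\cX\|^2/\alpha^2$, \cref{eqn:private-subroutine-guarantee} gives an optimization term of order $C/\alpha$ with $C = GD^2\|\cX\|^2\sqrt{\log(1/\delta)}/(\npriv\epsilon)$. Minimizing $\sqrt{2HL^*}\,\alpha + H\alpha^2 + C/\alpha$ in $\alpha$ yields two regimes: balancing the linear-in-$\alpha$ approximation term against $C/\alpha$ gives $\alpha \asymp \bigl(C/\sqrt{HL^*}\bigr)^{1/2}$ and a term $(HL^*)^{1/4}\sqrt{C}$, matching $H^{1/4}D\|\cX\|\sqrt{G}(\log(1/\delta))^{1/4}\,{L^*}^{1/4}/\sqrt{\npriv\epsilon}$; balancing $H\alpha^2$ against $C/\alpha$ gives $\alpha \asymp (C/H)^{1/3}$ and a term $(HC^2)^{1/3}$, matching $(\sqrt{H}D^2\|\cX\|^2G\sqrt{\log(1/\delta)}/(\npriv\epsilon))^{2/3}$. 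Taking the max recovers the stated $\npub$ (the $\sqrt{H}\npriv\epsilon\sqrt{L^*}/(G\sqrt{\log(1/\delta)})$ summand is precisely the first balance point, while the $(HD\|\cX\|)^{2/3}(\npriv\epsilon)^{2/3}/(G^{2/3}(\log(1/\delta))^{1/3})$ summand is the second). The $\sqrt{H}D\|\cX\|\sqrt{L^*/(\npriv\epsilon)}$ term then emerges from AM--GM (or directly from the $(HL^*)^{1/4}\sqrt{C}$ term after absorbing an $H\alpha^2$-style residual), while $\sqrt{BL^*/\npriv}$ comes from Bernstein on $\hat L(w^*;\spriv)-L(w^*;\cD)$ and the uniform generalization bound. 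For the empirical-minimizer statement, I would repeat the argument with $\hat w^*$ in place of $w^*$, using $\|\bar w^*-\hat w^*\|_{2,\hat\cD_\cX}\le \alpha + O(D\|\cX\|/\sqrt{\npriv})$ by standard concentration of $((\bar w^*-\hat w^*)^\top x)^2$, which is absorbed into the existing terms.

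The main obstacle is that the subroutine's guarantee \cref{eqn:private-subroutine-guarantee} is \emph{worst-case} (proportional to $G$, not to $\sqrt{HL^*}$), so the optimistic $\sqrt{L^*}$/$L^{*1/4}$ scalings must come entirely from the approximation analysis and generalization. The delicate point is showing that the refined smoothness-based approximation inequality interacts cleanly with the $1/\alpha$ optimization term so that the two optimistic balances above arise, and that the cover scale $\alpha$ (and thus $\npub$) can be chosen \emph{after} seeing $L^*$ or $\hat L(\hat w^*;\spriv)$---this is legitimate because the cover is built only from $\xpub$, which is independent of $\spriv$, so a union bound over a dyadic grid of $\alpha$ values (with a $\log$ penalty absorbed into the $\tilde O$) selects the correct regime.
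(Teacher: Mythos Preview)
Your proposal is correct and follows essentially the same route as the paper: the same four-term decomposition (generalization, subroutine error, approximation $L(\mathring w^*;\cD)-L(w^*;\cD)$, and a concentration term), the same smoothness-plus-self-bounding argument yielding $\sqrt{HL(w^*;\cD)}\,\alpha+H\alpha^2$, the same use of the optimistic uniform-convergence bound of Srebro--Sridharan--Tewari, and the same two-regime balance of $\alpha$ against the $C/\alpha$ subroutine term that produces both the $(\cdot)^{2/3}$ and the $L^{*1/4}/\sqrt{\npriv\epsilon}$ terms and the stated $\npub$.

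Two small remarks. First, the concentration term in the decomposition is naturally at $\mathring w^*$, not at $w^*$ (this is what the paper does); your phrasing ``$L(w^*;\cD)$ versus $\hat L(w^*;\spriv)$'' is a harmless slip but does not quite close the telescoping sum. Second, for the empirical-minimizer statement the paper does not re-run the argument with $\hat w^*$; it simply takes the population-minimizer bound, uses $L(w^*;\cD)\le L(\hat w^*;\cD)$, and then applies the optimistic uniform-convergence bound once more to replace $L(\hat w^*;\cD)$ by $\hat L(\hat w^*;\spriv)$ (plus lower-order terms absorbed by AM--GM). This avoids the extra concentration step you propose for $\|\bar w^*-\hat w^*\|_{2,\hat\cD_\cX}$ and the dyadic grid over $\alpha$, but both routes reach the stated bound.
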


\begin{proof}[Proof of \cref{app:dp-public-data-glm-smooth-glm}]

The privacy guarantee follows from the privacy guarantee of sub-routine $\tilde \cA$.
The proof of the utility guarantee proceeds similar to that of \cref{thm:efficient-glm-lipschitz}.
We define $w^* \in \argmin_{w\in \cW}L(w;\cD)$ and $\tilde w^* \in \argmin_{w\in \tilde \cW}L(w;U^\top \cD)$.
Let $\mathring {w}^* \in \cW$ such that $\mathring {w}^* =   U \tilde w^*$.
From the GLM structure, $L(\tilde w^*;U^\top \cD) = L(\mathring{w^*};\cD)$. We have,

 \begin{align}
    \nonumber
        L(\hat w;\cD) - L(w^*;\cD) &= L(\hat w;\cD) - \hat L(\hat w; \spriv) + \hat L(\hat w; \spriv) - L(w^*;\cD) \\
        \nonumber
        & \leq L(\hat w;\cD) - \hat L(\hat w; \spriv) + \hat L(\mathring{w}^*;\spriv) - L((\mathring{w}^*;\cD)
        \\ 
        \nonumber
        &+ L(\mathring{w}^*;\cD) - L(w^*;\cD)
        +  \hat L(\hat w; \spriv) - \hat L(\mathring{w}^*;\spriv)  \\
        \nonumber
        & \leq \abs{L(\hat w;\cD) - \hat L(\hat w; \spriv)}
        + \abs{L(\mathring{w}^*;\cD) - \hat L(\mathring{w}^*; \spriv)}
        \nonumber
        \\ 
        &+ L(\mathring{w}^*;\cD) - L(w^*;\cD)
        + \hat L(\tilde w;\tilde S_{\text{priv}}) - \min_{w\in \tilde \cW}\hat L(w;\tilde S_{\text{priv}})
        \label{eqn:risk-decomposition-smooth-glm}
    \end{align}

The last term $\hat L(\tilde w;\tilde S_{\text{priv}}) - \min_{w\in \tilde \cW} \hat L(w;\tilde S_{\text{priv}})$ is bounded by the guarantees of the private sub-routine with probability at least $1-\beta/4$,
\begin{align}
\label{eqn:smooth-glm-subroutine}
 \hat L(\hat w;\tilde S_{\text{priv}}) - \min_{w \in \tilde \cW}  \hat L(w;\tilde S_{\text{priv}}) = \tilde O\br{GD\norm{\cX}\br{\frac{\sqrt{\npub \log{1/\delta}}+ \sqrt{\log{4/\beta}}}{\npriv \epsilon}}}.
\end{align}

To bound the term $L(\mathring{w}^*;\cD) - L(w^*;\cD)$ in \cref{eqn:risk-decomposition-smooth-glm}, we apply smoothness to get,
\begin{align}
\nonumber
&L(\mathring{w}^*;\cD) - L(w^*;\cD)\\
    &\leq  L(\bar w^*;\cD) - L(w^*;\cD) 
     \nonumber
     \\
    \nonumber
    &\leq \mathbb{E}\left[\ip{\phi_y'(\ip{w^*}{x})}{\ip{\bar  w^*}{x} - \ip{w^*}{x}}+\frac{H}{2}\abs{\ip{\bar w^*}{x} - \ip{w^*}{x}}^2\right] \\
    \nonumber
     &\leq \mathbb{E}\left[\abs{\phi_y'(\ip{w^*}{x})}\abs{\ip{\bar w^*}{x} - \ip{w^*}{x}}+\frac{H}{2}\abs{\ip{\bar w^*}{x} - \ip{w^*}{x}}^2\right] \\
     \nonumber
        &\leq \sqrt{\mathbb{E}\abs{\phi_y'( \ip{w^*}{x})}^2}\sqrt{\mathbb{E}_{x\sim \cD_{\cX}}\abs{\ip{\bar w^*}{x} - \ip{w^*}{x}}^2}+\frac{H}{2}\mathbb{E}_{x\sim \cD_{\cX}}\abs{\ip{\bar w^*}{x} - \ip{w^*}{x}}^2 \\
        \nonumber
        &\leq 2\sqrt{H\mathbb{E}_{x\sim \cD}\phi_y(\ip{w^*}{x})}\sqrt{\mathbb{E}\abs{\ip{\bar w^*}{x} - \ip{w^*}{x}}^2}+\frac{H}{2}\mathbb{E}_{x\sim \cD_{\cX}}\abs{\ip{\bar w^*}{x} - \ip{w^*}{x}}^2 \\
        \label{eqn:smooth-bound-public-data-glm}
         &\leq 2\sqrt{H L(w^*;\cD)}\alpha+H\alpha^2
\end{align}

where the above holds for any $\bar w^* \in \cH$ such that $\bar w^* \in U\tilde \cW$ by optimality of $\tilde w^*$ in $\tilde \cW$. The second inequality holds from $H$-smoothness, the third and fourth from Cauchy-Schwarz, the fifth from self-bounding property of smooth non-negative losses (Lemma 4.1 in \citep{srebro2010optimistic}).
The final step holds with probability $1-\beta/2$ from \cref{lem:cover-concentration} with 
$\npub = O\br{\max\br{\frac{R^2\log{
2/\beta}}{\alpha^2}, \min\bc{m: \text{log}^3(m)\frak{R}_{m}^2(\cH) \leq \alpha^2}}}$
together with that since $\tilde \cW$ is an $\alpha$-cover of $\cH$, together with \cref{lem:cover-in-span} which shows that $\tilde \cW$ is a valid $\alpha$-cover of $\cW$.
Therefore, there exists $\bar h^* \in \tilde \cH$ with $\norm{\bar h^* - h^*}_{2,\spub} \leq \alpha$.

 Further, applying AM-GM inequality, we get 
\begin{align}
\label{eqn:smooth-ltildeh-bound-glm}
     L(\mathring{w}^*;\cD) \leq  2L(w^*;\cD)  + 2H\alpha^2 
\end{align}

The first two terms in \cref{eqn:risk-decomposition-smooth-glm} are bound via uniform convergence for smooth non-negative losses, (Theorem 1 in \cite{srebro2010optimistic}) and Bernstein's inequality as follows; with probability at least $1-\beta/4$, we have,

\begin{align}
\nonumber
    & \abs{L(\hat w;\cD) - \hat L(\hat w; \spriv)}
        + \abs{L(\mathring{w}^*;\cD) - \hat L(\mathring{w}^*; \spriv)} \\
        \nonumber
        &= \tilde O{\br{\sqrt{H}\frak{R}_{\npriv}(\cH)+ \sqrt{\frac{B\log{8/\beta}}{\npriv}}}\br{\sqrt{\hat L(\hat w;\spriv)} + \sqrt{L(\mathring{w}^*;\cD)}}}\\
        \nonumber
          & + \tilde O\br{
          H\frak{R}^2_{\npriv}(\cH)
          + \frac{B\log{8/\beta}}{\npriv}}\\
          \nonumber
       &= \tilde O{\br{\frac{\sqrt{H}D\norm{\cX}}{\sqrt{\npriv\epsilon}}+ \sqrt{\frac{B\log{8/\beta}}{\npriv}}}\br{\sqrt{\hat L( \mathring{w}^*;\spriv)} + \sqrt{L(\mathring{w}^*;\cD)}}}\\
       \nonumber
      & + \tilde O\br{\frac{H\norm\cX^2D^2}{\npriv\epsilon} + \frac{B\log{8/\beta}}{\npriv}} +  \tilde O\br{GD\norm{\cX}\br{\frac{\sqrt{\npub \log{1/\delta}}+ \sqrt{\log{4/\beta}}}{\npriv \epsilon}}} \\
      \nonumber
      &= \tilde O{\br{\frac{\sqrt{H}D\norm{\cX}}{\sqrt{\npriv\epsilon}}+ \sqrt{\frac{B\log{8/\beta}}{\npriv}}} \sqrt{L(\mathring{w}^*;\cD)}}\\
      \nonumber
      & + \tilde O\br{\frac{H\norm\cX^2D^2}{\npriv\epsilon} + \frac{B\log{8/\beta}}{\npriv}} +  \tilde O\br{GD\norm{\cX}\br{\frac{\sqrt{\npub \log{1/\delta}}+ \sqrt{\log{4/\beta}}}{\npriv \epsilon}}}\\
      \nonumber
      &= \tilde O{\br{\frac{\sqrt{H}D\norm{\cX}}{\sqrt{\npriv\epsilon}}+ \sqrt{H}\alpha +\sqrt{\frac{B\log{8/\beta}}{\npriv}}} \sqrt{L(w^*;\cD)}}\\
      \label{eqn:db-unif-convergence}
      & + \tilde O\br{\frac{H\norm\cX^2D^2}{\npriv\epsilon} + H\alpha^2+\frac{B\log{8/\beta}}{\npriv}} + O\br{GD\norm{\cX}\br{\frac{\sqrt{\npub \log{1/\delta}}+ \sqrt{\log{4/\beta}}}{\npriv \epsilon}}}
\end{align}
where the second equality follows from \cref{eqn:smooth-glm-subroutine}, instantiating the Rademacher complexity of linear predictors, concavity of $x\mapsto\sqrt{x}$ and AM-GM inequality. The third equality follows  concavity of $x\mapsto\sqrt{x}$ and Bernstein's inequality, the fourth follows from  \cref{eqn:smooth-ltildeh-bound-glm} and AM-GM inequality.

Plugging the above, \cref{eqn:smooth-bound-public-data-glm} and \cref{eqn:smooth-glm-subroutine} into \cref{eqn:risk-decomposition-smooth-glm}, we get that with $\npub = O\br{\max\br{\frac{\norm{\cX}^2D^2\log{
2/\beta}}{\alpha^2},\min\bc{m:\text{log}^3(\npub)\frak{R}_{\npub}^2(\cH)\leq \alpha^2}}}$, the following holds with probability at least $1-\beta$,

\begin{align}
\nonumber
    & L(\hat w;\cD) - L(w^*;\cD) \\
    \nonumber
    &= \tilde O{\br{\frac{\sqrt{H}D\norm{\cX}}{\sqrt{\npriv\epsilon}}+ \sqrt{H}\alpha+\sqrt{\frac{B\log{8/\beta}}{\npriv}}} \sqrt{L(w^*;\cD)}}\\
      \nonumber
      & + \tilde O\br{\frac{H\norm\cX^2D^2}{\npriv\epsilon} +H\alpha^2+ \frac{B\log{8/\beta}}{\npriv}} + O\br{GD\norm{\cX}\br{\frac{\sqrt{\npub \log{1/\delta}}+ \sqrt{\log{4/\beta}}}{\npriv \epsilon}}}\\
        \nonumber
       &= \tilde O{\br{\frac{\sqrt{H}D\norm{\cX}}{\sqrt{\npriv\epsilon}}+ \frac{\sqrt{H}D\norm{\cX}}{\sqrt{\npub}}+\sqrt{\frac{B\log{8/\beta}}{\npriv}}} \sqrt{L(w^*;\cD)}}\\
      & + \tilde O\br{\frac{H\norm\cX^2D^2}{\npriv\epsilon} +\frac{HD^2\norm{\cX}^2}{\npub}+ \frac{B\log{8/\beta}}{\npriv}} + O\br{GD\norm{\cX}\br{\frac{\sqrt{\npub \log{1/\delta}}+ \sqrt{\log{4/\beta}}}{\npriv \epsilon}}}
      \label{eqn:optimistic-main-one}
      \\
        \nonumber
        &= \tilde O{\br{\frac{\sqrt{H}D\norm{\cX}}{\sqrt{\npriv\epsilon}}+\sqrt{\frac{B\log{8/\beta}}{\npriv}}} \sqrt{L(w^*;\cD)}}\\
          \nonumber
      & + \tilde O\br{\frac{H\norm\cX^2D^2}{\npriv\epsilon} + \frac{B\log{8/\beta}}{\npriv}} + O\br{\frac{GD\norm{\cX}\sqrt{\log{4/\beta}}}{\npriv \epsilon}}
      \\&
        \nonumber
        +O\br{\br{\frac{\sqrt{H}D^2\norm{\cX}^2G \sqrt{\log{1/\delta}}}{\npriv\epsilon}}^{2/3} + \frac{H^{1/4}D\norm{\cX}\sqrt{G}\br{\log{1/\delta}}^{1/4}L(w^*;\cD)^{1/4}}{\sqrt{\npriv\epsilon}}}
    \end{align}

The public sample complexity is, 
\begin{align*}
    \npub = \tilde O\br{\frac{(HD\norm{\cX})^{2/3}(\npriv\epsilon)^{2/3}}{G^{2/3}(\log{1/\delta})^{1/3}} + \frac{\sqrt{H}\npriv\epsilon\sqrt{L(w^*;\cD)}}{G\sqrt{\log{1/\delta})}}}
\end{align*}

    This completes the first part of the theorem.
For thee second part, we start from \cref{eqn:optimistic-main-one},
\begin{align*}
    &L(\hat w;\cD) 
    \leq L(w^*;\cD)+
     \tilde O{\br{\frac{\sqrt{H}D\norm{\cX}}{\sqrt{\npriv\epsilon}}+ \frac{\sqrt{H}D\norm{\cX}}{\sqrt{\npub}}+\sqrt{\frac{B\log{8/\beta}}{\npriv}}} \sqrt{L(w^*;\cD)}}\\
      & + \tilde O\br{\frac{H\norm\cX^2D^2}{\npriv\epsilon} +\frac{HD^2\norm{\cX}^2}{\npub}+ \frac{B\log{8/\beta}}{\npriv}} \\
      &+ O\br{GD\norm{\cX}\br{\frac{\sqrt{\npub \log{1/\delta}}+ \sqrt{\log{4/\beta}}}{\npriv \epsilon}}}\\
       &\leq  L(\hat w^*;\cD)+
     \tilde O{\br{\frac{\sqrt{H}D\norm{\cX}}{\sqrt{\npriv\epsilon}}+ \frac{\sqrt{H}D\norm{\cX}}{\sqrt{\npub}}+\sqrt{\frac{B\log{8/\beta}}{\npriv}}} \sqrt{L(\hat w^*;\cD)}}\\
      & + \tilde O\br{\frac{H\norm\cX^2D^2}{\npriv\epsilon} +\frac{HD^2\norm{\cX}^2}{\npub}+ \frac{B\log{8/\beta}}{\npriv}} \\
      &+ O\br{GD\norm{\cX}\br{\frac{\sqrt{\npub \log{1/\delta}}+ \sqrt{\log{4/\beta}}}{\npriv \epsilon}}} \\
         &\leq  \hat L(\hat w^*;\spriv)+
     \tilde O{\br{\frac{\sqrt{H}D\norm{\cX}}{\sqrt{\npriv\epsilon}}+ \frac{\sqrt{H}D\norm{\cX}}{\sqrt{\npub}}+\sqrt{\frac{B\log{8/\beta}}{\npriv}}} \sqrt{\hat L(\hat w^*;\spriv)}}\\
      & + \tilde O\br{\frac{H\norm\cX^2D^2}{\npriv\epsilon} +\frac{HD^2\norm{\cX}^2}{\npub}+ \frac{B\log{8/\beta}}{\npriv}} \\
      &+ O\br{GD\norm{\cX}\br{\frac{\sqrt{\npub \log{1/\delta}}+ \sqrt{\log{4/\beta}}}{\npriv \epsilon}}}\\
        & \leq  \hat L(\hat w^*;\spriv)+\tilde O{\br{\frac{\sqrt{H}D\norm{\cX}}{\sqrt{\npriv\epsilon}}+\sqrt{\frac{B\log{8/\beta}}{\npriv}}} \sqrt{\hat L(\hat w^*;\cD)}}\\
      & + \tilde O\br{\frac{H\norm\cX^2D^2}{\npriv\epsilon} + \frac{B\log{8/\beta}}{\npriv}} + O\br{\frac{GD\norm{\cX}\sqrt{\log{4/\beta}}}{\npriv \epsilon}}
      \\&
      +O\br{\br{\frac{\sqrt{H}D^2\norm{\cX}^2G \sqrt{\log{1/\delta}}}{\npriv\epsilon}}^{2/3} + \frac{H^{1/4}D\norm{\cX}\sqrt{G}\br{\log{1/\delta}}^{1/4}L(\hat w;\cD)^{1/4}}{\sqrt{\npriv\epsilon}}}
\end{align*}

where the second inequality holds form optimality of $w*$, the third from uniform convergence, Theorem 1 in \cite{srebro2010optimistic} and AM-GM inequality, and the last by plugging in the following public sample complexity.
\begin{align*}
    \npub = \tilde O\br{\frac{(HD\norm{\cX})^{2/3}(\npriv\epsilon)^{2/3}}{G^{2/3}(\log{1/\delta})^{1/3}} + \frac{\sqrt{H}\npriv\epsilon\sqrt{\hat L(\hat w^*;\spriv)}}{G\sqrt{\log{1/\delta})}}}
\end{align*}

This completes the proof.

 \removed{   
For the second part, we continue from \cref{eqn:smooth-bound-public-data-glm} onwards.
From optimality of $w^*$, for any $\bar w$, we have,
\begin{align}
\nonumber
    L(w^*;\cD) &\leq L(\bar w; \cD) \leq \hat L(\bar w;\spriv) + \tilde O{\br{\sqrt{H}\frak{R}_{\npriv}(\cH)+ \sqrt{\frac{B\log{8/\beta}}{\npriv}}}}\sqrt{\hat L(\bar w;\spriv)}\\
    \label{eqn:smooth-optimisitc-bound-lstar-glm}
          & + \tilde O\br{\frak{R}_{\npriv}(\cH)^2 + \frac{B\log{8/\beta}}{\npriv}}\\
    \nonumber
    &\leq 2\hat L(\bar w;\spriv) + \tilde O\br{H\frak{R}_{\npriv}(\cH)^2 + \frac{B\log{8/\beta}}{\npriv}}\\
    &\leq 2\hat L(\bar w;\spriv) + \tilde O\br{\frac{HD^2\norm{\cX}^2}{\npriv} + \frac{B\log{8/\beta}}{\npriv}}
\end{align}
where the second inequality follows from uniform convergence for smooth non-negative losses (Theorem 1 in \cite{srebro2010optimistic}) and the third from AM-GM inequality, and the last by plugging in the Rademacher complexity of linear predictors.

Plugging the above in \cref{eqn:smooth-bound-public-data-glm} gives us,
\begin{align}
\label{eqn:smooth-optimistic-glm}
    L(\mathring{w}^*;\cD) - L(w^*;\cD) &\leq  4 \sqrt{H \hat L(\bar w;\cD)}\alpha + H\alpha^2 +  \tilde O\br{\frac{HD^2\norm{\cX}^2}{\npriv}  + \frac{B\log{8/\beta}}{\npriv}}.
\end{align}
Combining the two steps above yields,
\begin{align}
\label{eqn:smooth-optimistic-two-glm}
      L(\mathring{w}^*;\cD)  \leq 4 \hat L(\bar w;\cD) + 2H\alpha^2 +  \tilde O\br{\frac{HD^2\norm{\cX}^2}{\npriv} + \frac{B\log{8/\beta}}{\npriv}}.
\end{align}

Further, the first two terms in \cref{eqn:risk-decomposition-smooth-glm} are bound using uniform convergence for smooth non-negative losses, Theorem 1 in \cite{srebro2010optimistic} and Bernstein's inequality as follows; with probability at least $1-\beta/4$, we have,
\begin{align*}
    & \abs{L(\hat w;\cD) - \hat L(\hat w; \spriv)}
        + \abs{L(\mathring{w}^*;\cD) - \hat L(\mathring{w}^*; \spriv)} \\
        &= \tilde O{\br{\sqrt{H}\frak{R}_{\npriv}(\cH)+ \sqrt{\frac{B\log{8/\beta}}{\npriv}}}\br{\sqrt{\hat L(\hat w;\spriv)} + \sqrt{L(\mathring{w}^*;\cD)}}}\\
          & + \tilde O\br{H\frak{R}_{\npriv}(\cH)^2 + \frac{B\log{8/\beta}}{\npriv}}\\
       &= \tilde O{\br{\frac{\sqrt{H}D\norm{\cX}}{\sqrt{\npriv}}+ \sqrt{\frac{B\log{8/\beta}}{\npriv}}}\sqrt{\hat L(\hat w;\spriv)}} + H\alpha^2\\
      & + \tilde O\br{\frac{HD^2\norm{\cX}^2}{\npriv} + \frac{B\log{8/\beta}}{\npriv}}
\end{align*}
where the above follows from plugging in \cref{eqn:smooth-optimistic-two-glm} followed by AM-GM inequality.

Plugging the above, \cref{eqn:smooth-optimistic-glm}, \cref{eqn:smooth-optimisitc-bound-lstar-glm} and \cref{eqn:smooth-glm-subroutine} into \cref{eqn:risk-decomposition-smooth-glm} yields that with $\npub = O\br{\max\br{\frac{\norm{\cX}^2D^2\log{
2/\beta}}{\alpha^2},\min\bc{m:\text{log}^3(\npub)\frak{R}_{\npub}^2(\cH)\leq \alpha^2}}}$, the following holds with probability at least $1-\beta$, with probability at least $1-\beta$,
 \begin{align*}
    &L(\hat w;\cD) - \hat L(\hat w;\spriv) \\
    &=  \tilde O\br{\frac{\sqrt{H}D\norm{\cX}}{\sqrt{\npriv}}+\sqrt{H}\alpha+ \sqrt{\frac{B\log{8/\beta}}{\npriv}}}\sqrt{\hat L(\hat w;\spriv)} \\
    &+ 
    \tilde O\br{\frac{HD^2\norm{\cX}^2}{\npriv}  + H\alpha^2 + \frac{B\log{8/\beta}}{\npriv} + \tilde GD\norm{\cX}\br{\frac{\sqrt{\npub \log{1/\delta}}+ \sqrt{\log{4/\beta}}}{\npriv \epsilon}}}\\
     &= \tilde O{\br{\frac{\sqrt{H}D\norm{\cX}}{\sqrt{\npriv\epsilon}}+ \frac{\sqrt{H}D\norm{\cX}}{\sqrt{\npub}}+\sqrt{\frac{B\log{8/\beta}}{\npriv}}} \sqrt{L(\hat w;\cD)}}\\
      & + \tilde O\br{\frac{H\norm\cX^2D^2}{\npriv\epsilon} +\frac{HD^2\norm{\cX}^2}{\npub}+ \frac{B\log{8/\beta}}{\npriv}} + O\br{GD\norm{\cX}\br{\frac{\sqrt{\npub \log{1/\delta}}+ \sqrt{\log{4/\beta}}}{\npriv \epsilon}}}\\
        &= \tilde O{\br{\frac{\sqrt{H}D\norm{\cX}}{\sqrt{\npriv\epsilon}}+\sqrt{\frac{B\log{8/\beta}}{\npriv}}} \sqrt{L(\hat w;\cD)}}\\
      & + \tilde O\br{\frac{H\norm\cX^2D^2}{\npriv\epsilon} + \frac{B\log{8/\beta}}{\npriv}} + O\br{\frac{GD\norm{\cX}\sqrt{\log{4/\beta}}}{\npriv \epsilon}}
      \\&
      +O\br{\br{\frac{\sqrt{H}D^2\norm{\cX}^2G \sqrt{\log{1/\delta}}}{\npriv\epsilon}}^{2/3} + \frac{H^{1/4}D\norm{\cX}\sqrt{G}\br{\log{1/\delta}}^{1/4}L(\hat w;\cD)^{1/4}}{\sqrt{\npriv\epsilon}}}
    \end{align*}

The public sample complexity is, 
\begin{align*}
    \npub = \tilde O\br{\frac{(HD\norm{\cX})^{2/3}(\npriv\epsilon)^{2/3}}{G^{2/3}(\log{1/\delta})^{1/3}} + \frac{\sqrt{H}\npriv\epsilon\sqrt{L(\hat w;\cD)}}{G\sqrt{\log{1/\delta})}}}
\end{align*}

This completes the proof.
}
\end{proof}

\begin{theorem}
\label[theorem]{app:dp-public-data-glm-smooth-global-min}
\sloppy
In the setting of \cref{app:dp-public-data-glm-smooth-glm} with the additional assumption that the global minimizer of risk $L$, $w^*$ lies in $\cW$, we get that
with $ \npub = \tilde O\br{\frac{(HD\norm{\cX})^{2/3}(\npriv\epsilon)^{2/3}}{G^{2/3}(\log{1/\delta})^{1/3}}}$, with probability at least $1-\beta$,
\begin{align*}
    & L(\hat w;\cD) - L(w^*;\cD) \\
    & 
      = \tilde O{\br{\frac{\sqrt{H}D\norm{\cX}}{\sqrt{\npriv\epsilon}}+\sqrt{\frac{B\log{8/\beta}}{\npriv}}} \sqrt{L(w^*;\cD)}} +\tilde O\br{\frac{H\norm\cX^2D^2}{\npriv\epsilon} + \frac{B\log{8/\beta}}{\npriv}}\\
      &  + O\br{\frac{GD\norm{\cX}\sqrt{\log{4/\beta}}}{\npriv \epsilon}}  +O\br{\br{\frac{\sqrt{H}D^2\norm{\cX}^2G \sqrt{\log{1/\delta}}}{\npriv\epsilon}}^{2/3}}
    \end{align*}
    Further,
    \begin{align*}
    & L(\hat w;\cD) - \hat L(\hat w^*;\spriv) \\
    & 
      = \tilde O{\br{\frac{\sqrt{H}D\norm{\cX}}{\sqrt{\npriv\epsilon}}+\sqrt{\frac{B\log{8/\beta}}{\npriv}}} \sqrt{\hat L(\hat w^*;\spriv)}} +\tilde O\br{\frac{H\norm\cX^2D^2}{\npriv\epsilon} + \frac{B\log{8/\beta}}{\npriv}}\\
      &  + O\br{\frac{GD\norm{\cX}\sqrt{\log{4/\beta}}}{\npriv \epsilon}}  +O\br{\br{\frac{\sqrt{H}D^2\norm{\cX}^2G \sqrt{\log{1/\delta}}}{\npriv\epsilon}}^{2/3}}.
    \end{align*}
where $w^*$ and $\hat w^*$ are population and empirical minimizers with respect to $\cD$ and $\spriv$ respectively.
\end{theorem}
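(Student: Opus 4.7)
\textbf{Proof plan for \cref{app:dp-public-data-glm-smooth-global-min}.} The plan is to reuse the scaffolding of the proof of \cref{app:dp-public-data-glm-smooth-glm} essentially verbatim, changing only the bound on the approximation term $L(\mathring{w}^*;\cD) - L(w^*;\cD)$ appearing in the risk decomposition \eqref{eqn:risk-decomposition-smooth-glm}. In the previous proof, this term was controlled by \eqref{eqn:smooth-bound-public-data-glm}, yielding $2\sqrt{HL(w^*;\cD)}\alpha + H\alpha^2$. The additive $\sqrt{L(w^*;\cD)}\alpha$ cross-term is the sole culprit behind the extra $H^{1/4}\sqrt{G}L(w^*;\cD)^{1/4}/\sqrt{\npriv\epsilon}$ summand in the excess risk and the extra $\sqrt{H}\npriv\epsilon\sqrt{L(w^*;\cD)}/(G\sqrt{\log{1/\delta}})$ summand in the public sample complexity. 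The whole point of the additional assumption $w^* \in \cW$ is to kill this cross-term using first-order optimality.

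Concretely, since $w^*$ is the global (unconstrained) minimizer and it lies in $\cW$, we have $\nabla L(w^*) = \mathbb{E}_{\cD}[\phi_y'(\ip{w^*}{x})\, x] = 0$. For any $\bar w^* \in U\tilde\cW$, performing the same smoothness expansion as in \eqref{eqn:smooth-bound-public-data-glm} but keeping the first-order term as an inner product with $\nabla L(w^*)$ yields
\begin{align*}
L(\bar w^*;\cD) - L(w^*;\cD)
&\le \ip{\nabla L(w^*)}{\bar w^* - w^*} + \tfrac{H}{2}\,\mathbb{E}_{x\sim \cD_\cX}\!\left[\ip{\bar w^* - w^*}{x}^2\right]\\
&= \tfrac{H}{2}\,\|\bar w^* - w^*\|_{2,\cD_\cX}^{2}.
\end{align*}
Choosing $\bar w^*$ to be the element of $U\tilde\cW$ that is close to $w^*$ in $\|\cdot\|_{2,\cD_\cX}$ (guaranteed by \cref{lem:cover-in-span} combined with \cref{lem:cover-concentration} at the required public sample size), and using that $\mathring w^*$ is the minimizer over $\tilde\cW$ so $L(\mathring w^*;\cD) \le L(\bar w^*;\cD)$, we obtain the much cleaner bound $L(\mathring{w}^*;\cD) - L(w^*;\cD) \le \tfrac{H}{2}\alpha^2$, with no cross-term.

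With this improved bound substituted into \eqref{eqn:risk-decomposition-smooth-glm}, the uniform convergence step for smooth non-negative losses (via Theorem~1 of \cite{srebro2010optimistic}) and the private subroutine guarantee \eqref{eqn:smooth-glm-subroutine} apply unchanged. Balancing the subroutine error $GD\norm{\cX}\sqrt{\npub\log(1/\delta)}/(\npriv\epsilon)$ against the cover error $H\alpha^2$ with $\alpha \asymp D\norm{\cX}/\sqrt{\npub}$ forces $\npub \asymp (HD\norm{\cX}\npriv\epsilon / (G\sqrt{\log(1/\delta)}))^{2/3}$, which simplifies to the stated $\tilde O\bigl((HD\norm{\cX})^{2/3}(\npriv\epsilon)^{2/3}/(G^{2/3}(\log(1/\delta))^{1/3})\bigr)$. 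The $\sqrt{L(w^*;\cD)}$-dependent part of $\npub$ disappears because we no longer need to balance against a $\sqrt{L(w^*;\cD)}\alpha$ summand.

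For the second bound of the theorem, involving $\hat L(\hat w^*;\spriv)$, I would convert the first bound as in the second half of the proof of \cref{app:dp-public-data-glm-smooth-glm}: apply $L(w^*;\cD) \le L(\hat w^*;\cD)$ by optimality of $w^*$, then move to the empirical risk $\hat L(\hat w^*;\spriv)$ via the smooth-loss uniform convergence inequality and AM-GM, and absorb the lower-order terms into the stated bound. The main potential obstacle is that first-order optimality requires $w^*$ to be an \emph{unconstrained} minimizer; the hypothesis $w^* \in \cW$ gives exactly this, since a constrained minimizer that lies in the interior of $\cW$ (or coincides with the global minimizer) satisfies $\nabla L(w^*)=0$. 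Beyond this observation, no new technical ingredient is required.
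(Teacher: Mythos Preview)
Your proposal is correct and matches the paper's approach essentially line for line: the paper also reuses the decomposition \eqref{eqn:risk-decomposition-smooth-glm}, replaces the smoothness step by $L(\mathring w^*;\cD)-L(w^*;\cD)\le \ip{\nabla L(w^*;\cD)}{\bar w^*-w^*}+H\alpha^2 = H\alpha^2$ via $\nabla L(w^*;\cD)=0$, and then balances $H\alpha^2$ against the subroutine term to obtain the stated $\npub$. The second part is handled exactly as you describe.
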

\begin{proof}
    The proof is almost identical to that of \cref{app:dp-public-data-glm-smooth-global-min}. We repeat the steps pointing out the differences and how the expressions change. We continue till \cref{eqn:smooth-glm-subroutine}. Next, we apply smoothness which results in the key difference between the analyses,
    \begin{align}
\nonumber
&L(\mathring{w}^*;\cD) - L(w^*;\cD)\\
    &\leq  L(\bar w^*;\cD) - L(w^*;\cD) 
     \nonumber
     \\
    \nonumber
    &\leq \mathbb{E}\left[\ip{\phi_y'(\ip{w^*}{x})}{\ip{\bar  w^*}{x} - \ip{w^*}{x}}+\frac{H}{2}\abs{\ip{\bar w^*}{x} - \ip{w^*}{x}}^2\right] \\
    \nonumber
        &\leq \ip{\mathbb{E}\left[\phi_y'(\ip{w^*}{x})x\right]}{\bar  w^* - w^*}+\frac{H}{2}\mathbb{E}\left[\abs{\ip{\bar w^*}{x} - \ip{w^*}{x}}^2\right] \\
    \nonumber
     &\leq \ip{\nabla L(w^*;\cD)}{\bar  w^* - w^*}+H\alpha^2\\
    \nonumber
     & = H\alpha^2
    \nonumber
\end{align}
where last equality uses the fact that $\nabla L(w^*;\cD)=0$ since $w^*$ is the unconstrained minimizer. Continuing, we get,

\begin{align*}
    & \abs{L(\hat w;\cD) - \hat L(\hat w; \spriv)}
        + \abs{L(\mathring{w}^*;\cD) - \hat L(\mathring{w}^*; \spriv)} \\
      &= \tilde O{\br{\frac{\sqrt{H}D\norm{\cX}}{\sqrt{\npriv\epsilon}}+ \sqrt{\frac{B\log{8/\beta}}{\npriv}}} \sqrt{L(w^*;\cD)}}\\
      & + \tilde O\br{\frac{H\norm\cX^2D^2}{\npriv\epsilon} + \frac{B\log{8/\beta}}{\npriv}} + O\br{GD\norm{\cX}\br{\frac{\sqrt{\npub \log{1/\delta}}+ H\alpha^2+\sqrt{\log{4/\beta}}}{\npriv \epsilon}}}
\end{align*}
This yields,
\begin{align*}
    & L(\hat w;\cD) - L(w^*;\cD) \\
    &= \tilde O{\br{\frac{\sqrt{H}D\norm{\cX}}{\sqrt{\npriv\epsilon}}+\sqrt{\frac{B\log{8/\beta}}{\npriv}}} \sqrt{L(w^*;\cD)}}\\
      & + \tilde O\br{\frac{H\norm\cX^2D^2}{\npriv\epsilon} +H\alpha^2+ \frac{B\log{8/\beta}}{\npriv}} + O\br{GD\norm{\cX}\br{\frac{\sqrt{\npub \log{1/\delta}}+ \sqrt{\log{4/\beta}}}{\npriv \epsilon}}}\\
       &= \tilde O{\br{\frac{\sqrt{H}D\norm{\cX}}{\sqrt{\npriv\epsilon}}+\sqrt{\frac{B\log{8/\beta}}{\npriv}}} \sqrt{L(w^*;\cD)}}\\
      & + \tilde O\br{\frac{H\norm\cX^2D^2}{\npriv\epsilon} +\frac{HD^2\norm{\cX}^2}{\npub}+ \frac{B\log{8/\beta}}{\npriv}} + O\br{GD\norm{\cX}\br{\frac{\sqrt{\npub \log{1/\delta}}+ \sqrt{\log{4/\beta}}}{\npriv \epsilon}}}\\
        &= \tilde O{\br{\frac{\sqrt{H}D\norm{\cX}}{\sqrt{\npriv\epsilon}}+\sqrt{\frac{B\log{8/\beta}}{\npriv}}} \sqrt{L(w^*;\cD)}} +\tilde O\br{\frac{H\norm\cX^2D^2}{\npriv\epsilon} + \frac{B\log{8/\beta}}{\npriv}}\\
      &  + O\br{\frac{GD\norm{\cX}\sqrt{\log{4/\beta}}}{\npriv \epsilon}}  +O\br{\br{\frac{\sqrt{H}D^2\norm{\cX}^2G \sqrt{\log{1/\delta}}}{\npriv\epsilon}}^{2/3}}
    \end{align*}

The public sample complexity is, 
\begin{align*}
    \npub = \tilde O\br{\frac{(HD\norm{\cX})^{2/3}(\npriv\epsilon)^{2/3}}{G^{2/3}(\log{1/\delta})^{1/3}}}.
\end{align*}

The second part follows similarly.

\end{proof}

\subsection{Lower bounds}

\subsubsection{Proof of Theorem \ref{thm:lb-known-marginal-rate}} \label{app:lb-known-marginal}
To establish the $\frac{GD\norm{\cX}}{\sqrt{n}}$ term in the lower bound, we consider a one-dimenisonal problem where the loss $\phi_y(\hat y) = -Gy\hat{y}$ and marginal distribution $\cD_x$ as the point distribution on $\|\cX\|$ such that the overall loss is $\ex{x,y}{\ell(w,(x,y))} = \ex{y}{y\cdot w\|\cX\|G}$. We further set $\cW=[-D,D]$ and consider $\cD_y$ to be the distribution which  as $1$ with probability $\P{y=1}=(1+\mu)/2$ and $\P{y=1}=(1-\mu)/2$ for some $\mu\in[-1,1]$. 
Note the minimizer $w^* = D\frac{\mu}{|\mu|}$ achieves population risk $-\mu G D \|\cX\|$.
Classic results in information theory establish if $\mu$ is sampled uniformly from $\{\pm\frac{1}{\sqrt{6n}}\}$, no algorithm can estimate the sign of $\mu$ with probability better than $1/2$ (see \cite[Section 8.3]{Duchi-information-theory-statistics}). Thus it must be that for any algorithm 
$\mathbb{E}_{\cA,S}\left[L(\cA(S);\cD) - \min_{w\in\mathbb{R}^d}L(w;\cD)\right] = \Omega\br{\frac{GD\norm{\cX}}{\sqrt{n}}}.$ %

The $GD\norm{\cX}\min\bc{\frac{1}{\sqrt{n\epsilon}},\frac{\sqrt{d}}{n\epsilon}}$ term in the lower bound is essentially a corollary of \cite[Theorem 6]{arora2022differentially}. We provide further remarks here. The loss function used is,
\begin{align*}
    \ell(w;(x,y)) = \phi_y(\ip{w}{x}) = \abs{y-\ip{w}{x}}.
\end{align*}
Define $d' := \min(d,n\epsilon)$ and $p := \min\br{1, \frac{d'}{n\epsilon}}$.
The (known) marginal distribution $\cD_{\cX}$ is described as:  with probability $1-p$, $x=\vzero$, otherwise, $x\sim \text{Unif}\br{\norm{\cX}\bc{e_j}_{j=1}^{d'}}$ where $e_j$'s are canonical basis vectors. The (unknown) conditional distribution of the response $y$ is as follows. Sample a ``fingerprinting code'', $z' \in \bc{0,1}^{d'}$ with mean $\mu' \in [0,1]^{d'}$ where each co-ordinate $\mu'_j \sim \text{Beta}(0.0625,0.0625)$ i.i.d.  
Embed $z'$ in $d$ dimensions as $z$ and let $\mu$ be the corresponding mean vector.
Finally, define $y = \frac{D\ip{z}{x}}{\sqrt{{d'}}}$. The proof in \cite[Theorem 6]{arora2022differentially} then proceeds by lower bounding the loss by bounding the ability of any differential private algorithm to estimate the fingerprinting code $z$.

Since the rank of $\ex{x\sim\cD_\cX}{xx^{\top}}=d'$, the result \cite[Theorem 6]{arora2022differentially} then yields a lower bound on the \textit{unconstrained excess risk}, 
$$\mathbb{E}_{\cA,S}\left[L(\cA(S);\cD) - \min_{w\in\mathbb{R}^d}L(w;\cD)\right] = \Omega\br{GD\norm{\cX}\min\bc{\frac{1}{\sqrt{n\epsilon}},\frac{\sqrt{d}}{n\epsilon}}},$$ 
but also guarantees that the global minimizer has norm at most $D$. Thus, we achieve the same lower bound for $\mathbb{E}_{\cA,S}\left[L(\cA(S);\cD) - \min_{w\in\cW}L(w;\cD)\right]$ by setting $\cW$ to be the ball of radius $D$.

\subsubsection{Proof of \cref{thm:lb-public-samples}} \label{app:lb-public-samples}

The proof uses the lower bound instance in the DP-SCO lower bound with public data, \cref{thm:dp-sco-lb-public-data}. We consider the case where $\cD_y$ is the point distribution on $1$. Then for any $y\in\cY$, $\cY=\bc{1}$, the loss function is then $\ell(w;(x,y)) = y\ip{w}{x} = \ip{w}{x}$, as in Theorem \ref{thm:dp-sco-lb-public-data}. Hence, a labeled and unlabeled sample have the same information. We also set $\cW$ to be the ball of radius $D$.

Assume by contradiction there exists an $(\epsilon,\delta)$-PA-DP algorithm, $\cA$, which achieves rate $O\br{GD\|\cX\|(\frac{1}{\sqrt{\npriv}}+\frac{\sqrt{\log{1/\delta}}}{\sqrt{\npriv\epsilon}})}$ with $o(\npriv\epsilon/\log{1/\delta})$ public samples.
Since $\npub=o(\npriv\epsilon/\log{1/\delta})$ and $d=\omega(n\epsilon)$, Theorem \ref{thm:dp-sco-lb-public-data} gives a lower bound on $\ex{}{\cA(\xpub,\spriv;\cD) - \min_{w\in\cW}\bc{L(w;\cD)}}$ of 
\begin{align*}
    \Omega\br{GD\|\cX\|\min\bc{\frac{1}{\sqrt{\npub}},\frac{\sqrt{d\log{1/\delta}}}{\npriv\epsilon}}} =
    \omega\br{GD\|\cX\|\frac{\sqrt{\log{1/\delta}}}{\sqrt{\npriv\epsilon}}}.
\end{align*}
Since $\epsilon \leq 1$, this is a contradiction.

\section{Missing proofs for \cref{sec:fat-shattering}}

\subsection{Proof of \cref{thm:dp-fat-with-public-data}}
\begin{proof}
The privacy proof follows from the guarantee of exponential mechanism \cite{mcsherry2007mechanism}. In particular, the sensitivity of the score function is at most $\frac{2}{\npriv}\min{\br{B,GR}}$ where the first follows from the loss bound of $B$ and the second from the Lipschitzness and bound on predictors.
Let $h^* \in \argmin_{h\in \cH}L(h;\cD)$ and
$\tilde h^*
\in \argmin_{h\in \tilde \cH}L(h;\cD) $.
From standard analysis based on uniform convergence, we have
    \begin{align}
    \nonumber
        L(\hat h;\cD) - L(h^*;\cD) &= L(\hat h;\cD) - \hat L(\hat h; \spriv) + \hat L(\hat h; \spriv) - L(h^*;\cD) \\
        \nonumber
        & \leq \sup_{h \in \cH}\br{L(h;\cD) - \hat L( h; \spriv)} + L(\tilde h^*;\cD) - L(h^*;\cD)
        \nonumber
        \\  &+ \hat L(\tilde h^*;\spriv) - L(\tilde h^*;\cD)
        +  \hat L(\hat h; \spriv) - \hat L(\tilde h^*;\spriv) \\
        \nonumber
        & \leq 2\sup_{h \in \cH}\abs{L(h;\cD) - \hat L( h; \spriv)} + L(\tilde h^*;\cD) - L(h^*;\cD)
        \nonumber
        \\ &+ \hat L(\hat h;\spriv) - \min_{h\in \tilde \cH} \hat L(h;\spriv)\\
        \nonumber
        & \leq  2G\frak{R}_{\npriv}(\cH) + O\br{\frac{B\sqrt{\log{4/\beta}}}{\sqrt{\npriv}}} \\
        &+ O\br{\frac{\min{(B,GR)}(\log{|\tilde \cH|} + \log{4/\beta})}{\npriv\epsilon}}
        + L(\tilde h^*;\cD) - L(h^*;\cD)
        \label{eqn:risk-decomposition}
    \end{align}
where the above holds with probability at least $1-\beta/2$ and follows from guarantee of exponential mechanism \citep{mcsherry2007mechanism} and uniform convergence (\cite{shalev2014understanding}, see \cref{thm:uc}).
We further have that $\log{|\tilde \cH|} = \tilde O(\fatc)$ from  \cref{lem:size-of-cover}.

For the $L(\tilde h^*;\cD) - L(h^*;\cD)$ term, we have
\begin{align*}
    L(\tilde h^*;\cD) - L(h^*;\cD) &\leq  L(\bar h^*;\cD) - L(h^*;\cD)  \\
    &\leq G\mathbb{E}{\abs{\bar h^*(x) - h^*(x)}} \\
    & \leq G \sqrt{\mathbb{E}{\abs{\bar h^*(x) - h^*(x)}^2}} \\
    &\leq 2G\alpha
\end{align*}
where the first step holds for any $\bar h^* \in \tilde \cH$ by optimaility of $\tilde h^*$ over $\tilde \cH$, the second holds from the $G$-Lipschitzness of the loss function, the third from Jensen's inequality. The final step holds with probability $1-\beta/2$ from \cref{lem:cover-concentration} with 
$\npub = O\br{\max\br{\frac{R^2\log{
2/\beta}}{\alpha^2}, \min\bc{m: \text{log}^3(m)\frak{R}_{m}^2(\cH) \leq \alpha^2}}}$
together with the fact that since $\tilde \cH$ is an $\alpha$-cover of $\cH$, hence there exists $\bar h^* \in \tilde \cH$ with $\norm{\bar h^* - h^*}_{2, \xpub} \leq \alpha$. 

Plugging the above in \cref{eqn:risk-decomposition}, we get with probability at least $1-\beta$,
\begin{align*}
     L(\hat h;\cD) - L(h^*;\cD) &\leq 2G\frak{R}_{\npriv}(\cH) + O\br{\frac{B\sqrt{\log{4/\beta}}}{\sqrt{\npriv}}} \\
     &+ O\br{\frac{\min{(B,GR)}(\fatc + \log{4/\beta})}{\npriv\epsilon}} + 2G\alpha, 
\end{align*}
which finishes the proof.
\end{proof}

\begin{theorem}\citep{shalev2014understanding}
\label{thm:uc}
    Let $\cH \subseteq [-R,R]^{\cX}$. 
For any $G$-Lipschitz, $B$-bounded loss function, any probability distribution $\cD$ over $\cX \times \cY$, given $m$ i.i.d. samples from $S$, with probability at least $1-\beta$, the following holds for all $h\in \cH$,
\begin{align*}
    \abs{L(h;\cD) - \hat L(h;S)} \leq 2G\frak{R}_m(\cH) + O\br{B\sqrt{\frac{\log{4/\beta}}{m}}}
\end{align*}
\end{theorem}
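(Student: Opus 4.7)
\textbf{Proof plan for Theorem \ref{thm:uc}.} The statement is a classical two-sided uniform convergence bound. My plan is to follow the standard three-step template: concentration of the uniform deviation around its expectation, symmetrization, and contraction to pass from the loss class to $\cH$.

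First, I would apply McDiarmid's bounded-differences inequality to the function $\Phi(S) := \sup_{h\in\cH} |L(h;\cD)-\hat L(h;S)|$. Since the loss is $B$-bounded, replacing a single sample in $S$ changes any empirical average $\hat L(h;S)$ by at most $2B/m$, hence changes $\Phi(S)$ by at most $2B/m$ (the sup of differences bounds the difference of sups). McDiarmid then yields, with probability at least $1-\beta/2$,
\begin{equation*}
\Phi(S) \;\le\; \mathbb{E}_{S}[\Phi(S)] + O\!\br{B\sqrt{\tfrac{\log(2/\beta)}{m}}}.
\end{equation*}

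Second, I would bound $\mathbb{E}_S[\Phi(S)]$ by a symmetrization argument: introducing an independent ghost sample $S'\sim\cD^m$ and Rademacher variables $\sigma_1,\dots,\sigma_m$, one shows
\begin{equation*}
\mathbb{E}_S[\Phi(S)] \;\le\; 2\,\mathbb{E}_{S,\sigma}\sup_{h\in\cH}\frac{1}{m}\sum_{i=1}^m \sigma_i\,\ell(h;x_i,y_i) \;=\; 2\,\mathfrak{R}_m(\ell\circ\cH),
\end{equation*}
where $\ell\circ\cH$ is the induced loss class. Then, since $a\mapsto \phi_y(a)$ is $G$-Lipschitz for every $y$, Talagrand's contraction lemma gives $\mathfrak{R}_m(\ell\circ\cH)\le G\,\mathfrak{R}_m(\cH)$ (working conditionally on the labels if needed, using that the supremum is over $\cH$ which only depends on the features). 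Combining with the concentration step yields a one-sided bound on $\sup_h (L(h;\cD)-\hat L(h;S))$ of the claimed form with failure probability $\beta/2$.

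Third, to obtain the two-sided statement (absolute value), I would apply the same argument to $-\Phi$ or, equivalently, repeat the steps for $\sup_h (\hat L(h;S)-L(h;\cD))$ and union-bound the two failure events, which doubles the constant in the $\log$ term and gives $\log(4/\beta)$ as stated. Since every step is a standard textbook argument (precisely as in Shalev-Shwartz \& Ben-David, which the paper cites), there is no real obstacle; the only mild subtlety is keeping track of constants and ensuring that contraction is applied conditionally on the labels (so that only the feature-dependent class $\cH$ matters on the right-hand side).
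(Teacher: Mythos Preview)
Your proposal is correct and follows exactly the standard textbook route the paper invokes: the paper's ``proof'' simply cites Theorem~26.5 and Lemma~26.9 of \cite{shalev2014understanding}, which are precisely McDiarmid's inequality plus symmetrization and Talagrand's contraction lemma, combined via a union bound for the two-sided statement. The only cosmetic wrinkle is that you define $\Phi(S)$ with the absolute value in Step~1 but then treat Steps~1--2 as yielding a one-sided bound before union-bounding in Step~3; either presentation is fine and leads to the same result.
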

\begin{proof}
    This is a classical result in learning theory which follows directly Theorem 26.5 in \cite{shalev2014understanding} together with the contraction lemma (Lemma 26.9 in \cite{shalev2014understanding}) for Lipschitz losses. 
\end{proof}

\begin{lemma}
\label{lem:size-of-cover}
Let $\tilde \cH$ be an $\alpha$-cover of $\cH$ with respect to $\norm{\cdot}_{2,\xpub}$. The size of $\tilde \cH$ is bounded as,
\begin{align*}
        \log{\abs{\tilde \cH}} \leq \text{fat}_{c\alpha}(\cH)\log{
        \frac{2R}{\alpha}}
\end{align*}
where $c$ is an absolute constant.
\end{lemma}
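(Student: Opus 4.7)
The plan is to reduce the statement directly to the existing fat-shattering to covering number bound recalled as \cref{thm:fat-shattering-covering} (due to \cite{rakhlin2012statistical}). First I would observe that the metric $\norm{\cdot}_{2,\xpub}$ on $\cH$ is precisely the $\ell_2$ distance with respect to the empirical measure on the dataset $\xpub$ of size $\npub$, which is exactly the distance used in the definition of $\cN_2(\cH,\alpha,\xpub)$. Since $\tilde\cH$ is taken to be a \emph{minimal} $\alpha$-cover of $\cH$ with respect to this empirical metric, we have by definition $|\tilde\cH|\leq \cN_2(\cH,\alpha,\xpub)\leq \cN_2(\cH,\alpha,\npub)$.

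Next, I would apply the first inequality of \cref{thm:fat-shattering-covering} with the data size $m=\npub$, the scale $\alpha$, and the range bound $R$ (which is valid because our hypotheses satisfy $\sup_{x\in\cX}|h(x)|\leq R$ by the preliminaries in \cref{sec:prelims}). This immediately yields
\begin{equation*}
|\tilde\cH|\ \leq\ \cN_2(\cH,\alpha,\npub)\ \leq\ \left(\frac{2R}{\alpha}\right)^{C\,\mathrm{fat}_{c\alpha}(\cH)},
\end{equation*}
for the absolute constants $c,C$ coming from \cref{thm:fat-shattering-covering}. Taking logarithms and absorbing $C$ into the universal constant in the exponent of $\mathrm{fat}_{c\alpha}(\cH)$ (or equivalently into the constant $c$ in the statement) gives the claimed bound
\begin{equation*}
\log|\tilde\cH|\ \leq\ \mathrm{fat}_{c\alpha}(\cH)\log\!\left(\frac{2R}{\alpha}\right).
\end{equation*}

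There is essentially no obstacle: the only point requiring mild care is lining up the hypotheses of \cref{thm:fat-shattering-covering} (notably $0<\alpha\leq R$ and the uniform boundedness $\cH\subseteq[-R,R]^\cX$) with our setup, and checking that our empirical cover with respect to $\xpub$ is a special case of the supremum-over-datasets covering number $\cN_2(\cH,\alpha,m)$. Both checks are immediate from the definitions in \cref{sec:prelims}, so the proof is essentially a direct invocation of the cited theorem.
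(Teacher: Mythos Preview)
Your proposal is correct and mirrors the paper's own proof essentially verbatim: the paper also identifies $|\tilde\cH|$ with the empirical covering number $\cN_2(\cH,\alpha,\xpub)$, bounds it by $\cN_2(\cH,\alpha,\npub)$, and then directly invokes \cref{thm:fat-shattering-covering}. There is nothing to add.
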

\begin{proof}
This follows directly from \cref{thm:fat-shattering-covering},
\begin{align*}
       \log{\abs{\tilde \cH}} = \cN_2\br{\cH, \alpha, \spub} \leq \cN_2\br{\cH, \alpha, \npub} 
       \leq \text{fat}_{c\alpha}(\cH)\log{
        \frac{2R}{\alpha}}.
\end{align*}
    
\end{proof}

\paragraph{Proof of Lemma \ref{lem:cover-concentration}} 

For $h\in \cH$, let $\tilde h  \in \argmin_{\bar h \in \tilde \cH}\|\bar h - h\|_{2,\xpub}$.
Since $\tilde \cH$ is an $\tau$-cover, this gives us that $\|h-\tilde h\|_{2,\xpub}\leq \tau$.
We have,
\begin{align*}
    \|h -\tilde h\|_{2,\cD_\cX}^2 &= \mathbb{E}\abs{h(x)-\tilde h(x)}^2 \\
    & = \mathbb{E}\abs{h(x)-\tilde h(x)}^2  - \frac{1}{\npub}\sum_{x\in \spub}\abs{h(x)-\tilde h(x)}^2 + \frac{1}{\npub}\sum_{x\in \spub}\abs{h(x)-\tilde h(x)}^2\\
    & \leq \sup_{\bar h \in \cH}\br{\mathbb{E}\abs{h(x)-\bar h(x)}^2  - \frac{1}{\npub}\sum_{x\in \spub}\abs{h(x)-\bar h(x)}^2}+\tau^2.
\end{align*}
The first term above can be seen as uniform deviation between the empirical and population risk, of another prediction problem, with squared loss, in the the realizable setting (with the responses generated by $h$). The squared loss is $\frac{1}{2}$-smooth and non-negative, so we can apply result of Theorem 1 in \cite{srebro2010optimistic} instantiated in the realizable setting, which gives us that with probability at least $1-\beta$,
\begin{align*}
    \|h -\tilde h\|_{2.\cD_\cX}^2 = O\br{\text{log}^3(\npub)\frak{R}_{\npub}^2(\cH) + \frac{R^2\log{1/\beta}}{\npub}} + \tau^2.
\end{align*}
Choosing $\npub$ such that 
$\npub = O\br{\max\br{\frac{R^2\log{
1/\beta}}{\alpha^2}, \min\bc{m: \text{log}^3(m)\frak{R}_{m}^2(\cH) \leq \alpha^2}}}$, we get the claimed result.

\subsection{Proof of \cref{thm:dp-fat-with-public-data-smooth}}
We present the full statement of the theorem.

\begin{theorem}
\sloppy
\label[theorem]{app:dp-fat-with-public-data-smooth}
    Algorithm \ref{alg:dp-glm} with $\gamma = \frac{2B}{\npriv \epsilon}$ satisfies $\epsilon$-PA-DP.
    For
    $\npub = O\br{\max\br{\frac{R^2\log{
2/\beta}}{\alpha^2}, \min\bc{m: \text{log}^3(m)\frak{R}_{m}^2(\cH) \leq \alpha^2}}} < \infty$
    and any $\alpha>0$, with probability at least $1-\beta$, we have
    \begin{align*}
    L(\hat h;\cD) - L(h^*;\cD) &= \tilde O\br{\sqrt{H}\frak{R}_{\npriv}(\cH)+\sqrt{H}\alpha+ \sqrt{\frac{B\log{8/\delta}}{\npriv}}}\sqrt{L(h^*;\cD)} \\
    &+ 
    \tilde O\br{H\frak{R}^2_{\npriv}(\cH)  + H\alpha^2 + \frac{B\log{8/\beta}}{\npriv} + \frac{B(\fatc + \log{4/\beta})}{\npriv\epsilon}}
    \end{align*}
    \begin{align*}
    L(\hat h;\cD) - \hat L(\hat h^*;\spriv)&\leq  \tilde O\br{\sqrt{H}\frak{R}_{\npriv}(\cH)+\sqrt{H}\alpha+ \sqrt{\frac{B\log{8/\delta}}{\npriv}}}\sqrt{\hat L(\hat h^*;\spriv)} \\
    &+ 
    \tilde O\br{H\frak{R}^2_{\npriv}(\cH)  + H\alpha^2 + \frac{B\log{8/\beta}}{\npriv} + \frac{B(\fatc + \log{4/\beta})}{\npriv\epsilon}}
    \end{align*}
where $h^*$ and $\hat h^*$ are population and empirical minimizers with respect to $\cD$ and $\spriv$ respectively, and $c$ is an absolute constant.
\end{theorem}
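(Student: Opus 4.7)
The plan is to follow the same high-level decomposition as in Theorem \ref{thm:dp-fat-with-public-data}, but replace the worst-case uniform convergence step by the optimistic smooth, non-negative uniform convergence bound (Theorem 1 of \cite{srebro2010optimistic}) which scales with $\sqrt{L(h;\cD)}$. Privacy follows immediately from the exponential mechanism since the sensitivity of $\hat L(\cdot;\spriv)$ is at most $2B/\npriv$.

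First, I would decompose the excess risk using the empirical minimizer over the cover, $\tilde h^* \in \argmin_{h\in \tilde\cH}\hat L(h;\spriv)$, and write
\begin{align*}
L(\hat h;\cD) - L(h^*;\cD) &\le \bigl[L(\hat h;\cD) - \hat L(\hat h;\spriv)\bigr] + \bigl[\hat L(\hat h;\spriv) - \hat L(\tilde h^*;\spriv)\bigr] \\
&\quad + \bigl[\hat L(\tilde h^*;\spriv) - L(\bar h^*;\cD)\bigr] + \bigl[L(\bar h^*;\cD) - L(h^*;\cD)\bigr],
\end{align*}
where $\bar h^* \in \tilde\cH$ is an $\alpha$-cover element of $h^*$ guaranteed by \cref{lem:cover-concentration} (using the stated bound on $\npub$). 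The middle exponential-mechanism term is bounded by $O(B(\log|\tilde\cH| + \log(1/\beta))/(\npriv\epsilon))$, and \cref{lem:size-of-cover} then gives $\log|\tilde\cH| = \tilde O(\fatc)$. The approximation gap $L(\bar h^*;\cD) - L(h^*;\cD)$ is handled exactly as in equation \eqref{eqn:smooth-bound-public-data-glm}: using smoothness, Cauchy--Schwarz, and the self-bounding property of smooth non-negative losses together with $\|\bar h^* - h^*\|_{2,\cD_\cX}\le \alpha$, one obtains $2\sqrt{H L(h^*;\cD)}\,\alpha + H\alpha^2$.

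Next, I would apply the optimistic uniform convergence bound of \cite{srebro2010optimistic} (in the form used for \cref{eqn:db-unif-convergence}) to control both $L(\hat h;\cD) - \hat L(\hat h;\spriv)$ and $\hat L(\tilde h^*;\spriv) - L(\tilde h^*;\cD)$ by
\[
\tilde O\!\left(\bigl(\sqrt{H}\,\mathfrak{R}_{\npriv}(\cH) + \sqrt{B\log(8/\beta)/\npriv}\bigr)\bigl(\sqrt{L(\hat h;\cD)} + \sqrt{L(\tilde h^*;\cD)}\bigr) + H\mathfrak{R}_{\npriv}^2(\cH) + B\log(8/\beta)/\npriv\right).
\]
Using $L(\tilde h^*;\cD) \le L(\bar h^*;\cD) \le 2L(h^*;\cD) + 2H\alpha^2$ from the approximation step combined with AM-GM (analogous to \cref{eqn:smooth-ltildeh-bound-glm}), one can replace $\sqrt{L(\tilde h^*;\cD)}$ by $\sqrt{L(h^*;\cD)} + \sqrt{H}\alpha$. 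To remove the $\sqrt{L(\hat h;\cD)}$ term from the right-hand side, I would use the standard trick $\sqrt{L(\hat h;\cD)} \le \sqrt{L(\hat h;\cD) - L(h^*;\cD)} + \sqrt{L(h^*;\cD)}$ combined with AM-GM $ab \le \tfrac{1}{2}a^2 + \tfrac{1}{2}b^2$ to absorb the $\sqrt{L(\hat h;\cD) - L(h^*;\cD)}$ factor into a $\tfrac{1}{2}(L(\hat h;\cD)-L(h^*;\cD))$ term on the left, plus quadratic error terms on the right. This yields the first claim.

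For the second claim with $\hat L(\hat h^*;\spriv)$ replacing $L(h^*;\cD)$, I would simply relate the two using optimality and one more application of the same smooth uniform convergence: since $\hat h^* \in \argmin_{h\in\cH} \hat L(h;\spriv)$, we have $\hat L(\hat h^*;\spriv) \le \hat L(h^*;\spriv)$, and Srebro's bound gives $\hat L(h^*;\spriv) \lesssim L(h^*;\cD) + (\text{same rate terms})$, while in the other direction $L(h^*;\cD) \lesssim \hat L(\hat h^*;\spriv) + \tilde O(\sqrt{H}\mathfrak{R}_{\npriv}(\cH) + \sqrt{B/\npriv})\sqrt{\hat L(\hat h^*;\spriv)} + \ldots$ by AM-GM; substituting and simplifying delivers the second bound. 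The main obstacle will be the algebraic bookkeeping in the AM-GM step to cleanly separate the ``slow'' terms multiplying $\sqrt{L(h^*;\cD)}$ from the ``fast'' quadratic terms, while ensuring the $H\alpha^2$ contribution is folded into $H\mathfrak{R}_{\npriv}^2(\cH) + H\alpha^2$ as claimed; everything else is a direct combination of results already invoked in the proofs of \cref{thm:dp-fat-with-public-data} and \cref{app:dp-public-data-glm-smooth-glm}.
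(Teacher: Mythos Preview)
Your proposal is essentially the paper's argument, but there is one genuine slip that, as written, breaks a key step. You define $\tilde h^* \in \argmin_{h\in\tilde\cH}\hat L(h;\spriv)$ as the \emph{empirical} minimizer over the cover, and then assert ``$L(\tilde h^*;\cD)\le L(\bar h^*;\cD)$'' to replace $\sqrt{L(\tilde h^*;\cD)}$ by $\sqrt{L(h^*;\cD)}+\sqrt{H}\alpha$. That inequality is false in general for the empirical minimizer --- it only holds when $\tilde h^*$ is the \emph{population} minimizer over $\tilde\cH$. Relatedly, the third term in your decomposition is $\hat L(\tilde h^*;\spriv)-L(\bar h^*;\cD)$, not $\hat L(\tilde h^*;\spriv)-L(\tilde h^*;\cD)$, so the uniform-convergence bound you apply is aimed at the wrong quantity. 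The repair is easy: either (i) define $\tilde h^*\in\argmin_{h\in\tilde\cH}L(h;\cD)$ as the paper does (the exponential-mechanism bound still applies to $\hat L(\hat h;\spriv)-\hat L(\tilde h^*;\spriv)$ since $\tilde h^*\in\tilde\cH$), or (ii) keep your empirical $\tilde h^*$ and bound the third term directly via $\hat L(\tilde h^*;\spriv)-L(\bar h^*;\cD)\le \hat L(\bar h^*;\spriv)-L(\bar h^*;\cD)$, then apply Srebro's bound at $\bar h^*$ with $\sqrt{L(\bar h^*;\cD)}\le\sqrt{2L(h^*;\cD)}+\sqrt{2H}\alpha$.

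Apart from this, the remaining difference from the paper is stylistic: the paper uses the $\sqrt{\hat L(\hat h;\spriv)}$ form of the optimistic bound and chains $\hat L(\hat h;\spriv)\le \hat L(\tilde h^*;\spriv)+\text{(exp.\ mech.)}$ to reach $\sqrt{L(h^*;\cD)}$ without ever having $\sqrt{L(\hat h;\cD)}$ on the right, whereas you invoke the $\sqrt{L(\hat h;\cD)}$ form and then do a self-bounding AM--GM step. Both routes are valid and yield the same final rates; the paper's chaining is a bit shorter. Your treatment of the second claim (replacing $L(h^*;\cD)$ by $\hat L(\hat h^*;\spriv)$ via $L(h^*;\cD)\le L(\hat h^*;\cD)$ and one more application of the smooth uniform-convergence bound) matches the paper exactly.
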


\begin{proof}[Proof of \cref{app:dp-fat-with-public-data-smooth}]
The privacy proof is the same as that of  \cref{thm:dp-fat-with-public-data}. 
For utility, let $h^* \in \argmin_{h\in \cH}L(h;\cD)$ and $\tilde h^* \in \argmin_{h\in \tilde \cH}L(h;\cD) $. 

We start with the proof of the first part of the theorem.
We have,
    \begin{align}
    \nonumber
        L(\hat h;\cD) - L(h^*;\cD) &= L(\hat h;\cD) - \hat L(\hat h; \spriv) + \hat L(\hat h; \spriv) - L(h^*;\cD) \\
        \nonumber
        & \leq L(\hat h;\cD) - \hat L(\hat h; \spriv) + \hat L(\tilde h^*;\spriv) - L(\tilde h^*;\cD)
        \\ 
        \nonumber
        &+ L(\tilde h^*;\cD) - L(h^*;\cD)
        +  \hat L(\hat h; \spriv) - \hat L(\tilde h^*;\spriv)  \\
        \nonumber
        & \leq \abs{L(\hat h;\cD) - \hat L(\hat h; \spriv)}
        + \abs{L(\tilde h^*;\cD) - \hat L(\tilde h^*; \spriv)}
        \nonumber
        \\ 
        &+ L(\tilde h^*;\cD) - L(h^*;\cD)
        + \hat L(\hat h;\spriv) - \hat L(\tilde h^*;\spriv)
        \label{eqn:risk-decomposition-smooth}
    \end{align}

From the guarantee of exponential mechanism together with $\log{\abs{\tilde \cH}} = \tilde O(\fatc)$ from  \cref{lem:size-of-cover}, we have that with probability at least $1-\beta/4$,
\begin{align}
\nonumber
\hat L(\hat h;\spriv)  - \hat L(\tilde h^*;\spriv) &\leq  \hat L(\hat h;\spriv) - \min_{h\in \tilde \cH} \hat L(h;\spriv) = O\br{\frac{{B}(\log{\abs{\tilde \cH}} + \log{4/\beta})}{\npriv\epsilon}} \\
\label{eqn:smooth-exp-mech}
& = \tilde O\br{\frac{{B}(\fatc + \log{4/\beta})}{\npriv\epsilon}}
 \end{align}

For the $L(\tilde h^*;\cD) - L(h^*;\cD)$ term in \cref{eqn:risk-decomposition-smooth}, we apply smoothness to get,
\begin{align}
\nonumber
    L(\tilde h^*;\cD) - L(h^*;\cD) &\leq  L(\bar h^*;\cD) - L(h^*;\cD)  \\
    \nonumber
    &\leq \mathbb{E}\left[\ip{\phi_y'(h^*(x))}{\bar h^*(x) - h^*(x)}+\frac{H}{2}\abs{\bar h^*(x) - h^*(x)}^2\right] \\
    \nonumber
     &\leq \mathbb{E}\left[\abs{\phi_y'(h^*(x))}\abs{\bar h^*(x) - h^*(x)}+\frac{H}{2}\abs{\bar h^*(x) - h^*(x)}^2\right] \\
     \nonumber
        &\leq \sqrt{\mathbb{E}\abs{\phi_y'( h^*(x))}^2}\sqrt{\mathbb{E}_{x\sim \cD_{\cX}}\abs{\bar h^*(x) - h^*(x)}^2}+\frac{H}{2}\mathbb{E}_{x\sim \cD_{\cX}}\abs{\bar h^*(x) - h^*(x)}^2 \\
        \nonumber
        &\leq 2\sqrt{H\mathbb{E}_{x\sim \cD}\phi_y(h^*(x))}\sqrt{\mathbb{E}\abs{\bar h^*(x) - h^*(x)}^2}+\frac{H}{2}\mathbb{E}_{x\sim \cD_{\cX}}\abs{\bar h^*(x) - h^*(x)}^2 \\
        \label{eqn:smooth-bound-public-data}
         &\leq 2\sqrt{H L(h^*;\cD)}\alpha+H\alpha^2
\end{align}
where the above holds for any $\bar h^* \in \tilde \cH$. The second inequality holds from $H$-smoothness, the third and fourth from Cauchy-Schwarz, the fifth from self-bounding property of smooth non-negative losses \cite{srebro2010optimistic}.
The final step holds with probability $1-\beta/2$ from \cref{lem:cover-concentration} with 
$\npub = O\br{\max\br{\frac{R^2\log{
2/\beta}}{\alpha^2}, \min\bc{m: \text{log}^3(m)\frak{R}_{m}^2(\cH) \leq \alpha^2}}}$
together with the fact that since $\tilde \cH$ is an $\alpha$-cover of $\cH$, so there exists $\bar h^* \in \tilde \cH$ with $\norm{\bar h^* - h^*}_{2,\xpub} \leq \alpha$. 

An application of AM-GM inequality further yields, 
\begin{align}
\label{eqn:smooth-ltildeh-bound}
     L(\tilde h^*;\cD) \leq  2L(h^*;\cD)  + 2H\alpha^2 
\end{align}

The first two terms in \cref{eqn:risk-decomposition-smooth} are bound using uniform convergence for smooth non-negative losses, Theorem 1 in \cite{srebro2010optimistic} and Bernstein's inequality as follows; with probability at least $1-\beta/4$, we have,
\begin{align*}
    & \abs{L(\hat h;\cD) - \hat L(\hat h; \spriv)}
        + \abs{L(\tilde h^*;\cD) - \hat L(\tilde h^*; \spriv)} \\
        &= \tilde O{\br{\sqrt{H}\frak{R}_{\npriv}(\cH)+ \sqrt{\frac{B\log{8/\beta}}{\npriv}}}\br{\sqrt{\hat L(\hat h;\spriv)} + \sqrt{L(\tilde h^*;\cD)}}}\\
          & + \tilde O\br{\frak{R}_{\npriv}(\cH)^2 + \frac{B\log{8/\beta}}{\npriv}}\\
       &= \tilde O{\br{\sqrt{H}\frak{R}_{\npriv}(\cH)+ \sqrt{\frac{B\log{8/\beta}}{\npriv}}}\br{\sqrt{\hat L( \tilde h^*;\spriv)} + \sqrt{L(\tilde h^*;\cD)}}}\\
      & + \tilde O\br{H\frak{R}_{\npriv}(\cH)^2 + \frac{B\log{8/\beta}}{\npriv}} + \tilde O\br{\frac{{B}(\fatc + \log{4/\beta})}{\npriv\epsilon}} \\
      &= \tilde O{\br{\sqrt{H}\frak{R}_{\npriv}(\cH)+ \sqrt{\frac{B\log{8/\beta}}{\npriv}}} \sqrt{L(\tilde h^*;\cD)}}\\
      & + \tilde O\br{H\frak{R}_{\npriv}(\cH)^2 + \frac{B\log{8/\beta}}{\npriv}} + \tilde O\br{\frac{{B}(\fatc + \log{4/\beta})}{\npriv\epsilon}}\\
      &= \tilde O{\br{\sqrt{H}\frak{R}_{\npriv}(\cH)+ \sqrt{\frac{B\log{8/\beta}}{\npriv}}} \sqrt{L(h^*;\cD)}}\\
      & + \tilde O\br{H\frak{R}_{\npriv}(\cH)^2 + \frac{B\log{8/\beta}}{\npriv}} + \tilde O\br{\frac{{B}(\fatc + \log{8/\beta})}{\npriv\epsilon}}
\end{align*}
where the second equality follows from \cref{eqn:smooth-exp-mech}, concavity of $x\mapsto\sqrt{x}$ and AM-GM inequality. The third equality follows  concavity of $x\mapsto\sqrt{x}$ and Bernstein's inequality, the fourth follows from  \cref{eqn:smooth-ltildeh-bound} and AM-GM inequality.

Plugging the above, \cref{eqn:smooth-bound-public-data} and \cref{eqn:smooth-exp-mech} into \cref{eqn:risk-decomposition-smooth} yields the following with probability at least $1-\beta$,
 \begin{align}
\nonumber
    &L(\hat h;\cD) - L(h^*;\cD) \\
    &= \tilde O\br{\sqrt{H}\frak{R}_{\npriv}(\cH)+\sqrt{H}\alpha+ \sqrt{\frac{B\log{8/\beta}}{\npriv}}}\sqrt{L(h^*;\cD)} \\
     \label{eqn:optimistic-fat-shatter-main}
    &+ 
    \tilde O\br{H\frak{R}^2_{\npriv}(\cH)  + H\alpha^2 + \frac{B\log{8/\beta}}{\npriv} + \frac{B(\fatc + \log{4/\beta})}{\npriv\epsilon}}
    \end{align}

    This completes the first part of the theorem.
For the second part, we proceed from \cref{eqn:optimistic-fat-shatter-main} onwards
 \begin{align*}
    L(\hat h;\cD) 
    &\leq  L(h^*;\cD) + \tilde O\br{\sqrt{H}\frak{R}_{\npriv}(\cH)+\sqrt{H}\alpha+ \sqrt{\frac{B\log{8/\beta}}{\npriv}}}\sqrt{L(h^*;\cD)} \\
    &+ 
    \tilde O\br{H\frak{R}^2_{\npriv}(\cH)  + H\alpha^2 + \frac{B\log{8/\beta}}{\npriv} + \frac{B(\fatc + \log{4/\beta})}{\npriv\epsilon}} \\
    &\leq  L(\hat h^*;\cD) + \tilde O\br{\sqrt{H}\frak{R}_{\npriv}(\cH)+\sqrt{H}\alpha+ \sqrt{\frac{B\log{8/\beta}}{\npriv}}}\sqrt{L(\hat h^*;\cD)} \\
    &+ 
    \tilde O\br{H\frak{R}^2_{\npriv}(\cH)  + H\alpha^2 + \frac{B\log{8/\beta}}{\npriv} + \frac{B(\fatc + \log{4/\beta})}{\npriv\epsilon}}\\
     &\leq  \hat L(\hat h^*;\spriv) + \tilde O\br{\sqrt{H}\frak{R}_{\npriv}(\cH)+\sqrt{H}\alpha+ \sqrt{\frac{B\log{8/\beta}}{\npriv}}}\sqrt{\hat L(\hat h^*;\spriv)} \\
    &+ 
    \tilde O\br{H\frak{R}^2_{\npriv}(\cH)  + H\alpha^2 + \frac{B\log{8/\beta}}{\npriv} + \frac{B(\fatc + \log{4/\beta})}{\npriv\epsilon}}
    \end{align*}
where the second inequality follows form optimality of $h^*$, the third from uniform convergence, Theorem 1 in \cite{srebro2010optimistic} and AM-GM inequality. This completes the proof.
    
\removed{
The proof of the second part is  similar
with some modifications.
We continue from \cref{eqn:smooth-bound-public-data} onwards.
From optimality of $h^*$, we have,
\begin{align}
\nonumber
    L(h^*;\cD) &\leq L(\hat h; \cD) \leq \hat L(\hat h;\spriv) + \tilde O{\br{\sqrt{H}\frak{R}_{\npriv}(\cH)+ \sqrt{\frac{B\log{8/\beta}}{\npriv}}}}\sqrt{\hat L(\hat h;\spriv)}\\
    \label{eqn:smooth-optimisitc-bound-lstar}
          & + \tilde O\br{\frak{R}_{\npriv}(\cH)^2 + \frac{B\log{8/\beta}}{\npriv}}\\
    \nonumber
    &\leq 2\hat L(\hat h;\spriv) + \tilde O\br{H\frak{R}_{\npriv}(\cH)^2 + \frac{B\log{8/\beta}}{\npriv}}
\end{align}
where the second inequality follows from uniform convergence for smooth non-negative losses (Theorem 1 in \cite{srebro2010optimistic}) and the third from AM-GM inequality.

Plugging the above in \cref{eqn:smooth-bound-public-data} gives us,
\begin{align}
\label{eqn:smooth-optimistic}
    L(\tilde h^*;\cD) - L(h^*;\cD) &\leq  4 \sqrt{H \hat L(\hat h;\cD)}\alpha + H\alpha^2 +  \tilde O\br{H\frak{R}_{\npriv}(\cH)^2 + \frac{B\log{8/\beta}}{\npriv}}
\end{align}
Combining the two steps above yields,
\begin{align}
\label{eqn:smooth-optimistic-two}
      L(\tilde h^*;\cD)  \leq 4 \hat L(\hat h;\cD) + 2H\alpha^2 +  \tilde O\br{H\frak{R}_{\npriv}(\cH)^2 + \frac{B\log{8/\beta}}{\npriv}}
\end{align}

Further, the first two terms in \cref{eqn:risk-decomposition-smooth} are bound using uniform convergence for smooth non-negative losses, Theorem 1 in \cite{srebro2010optimistic} and Bernstein's inequality as follows; with probability at least $1-\beta/4$, we have,
\begin{align*}
    & \abs{L(\hat h;\cD) - \hat L(\hat h; \spriv)}
        + \abs{L(\tilde h^*;\cD) - \hat L(\tilde h^*; \spriv)} \\
        &= \tilde O{\br{\sqrt{H}\frak{R}_{\npriv}(\cH)+ \sqrt{\frac{B\log{8/\beta}}{\npriv}}}\br{\sqrt{\hat L(\hat h;\spriv)} + \sqrt{L(\tilde h^*;\cD)}}}\\
          & + \tilde O\br{\frak{R}_{\npriv}(\cH)^2 + \frac{B\log{8/\beta}}{\npriv}}\\
       &= \tilde O{\br{\sqrt{H}\frak{R}_{\npriv}(\cH)+ \sqrt{\frac{B\log{8/\beta}}{\npriv}}}\sqrt{\hat L(\hat h;\spriv)}} + H\alpha^2\\
      & + \tilde O\br{H\frak{R}_{\npriv}(\cH)^2 + \frac{B\log{8/\beta}}{\npriv}}
\end{align*}
where the above follows from plugging in \cref{eqn:smooth-optimistic-two} followed by AM-GM inequality.

Plugging the above, \cref{eqn:smooth-optimistic}, \cref{eqn:smooth-optimisitc-bound-lstar} and \cref{eqn:smooth-exp-mech} into \cref{eqn:risk-decomposition-smooth} yields the following with probability at least $1-\beta$,
 \begin{align*}
    L(\hat h;\cD) - \hat L(\hat h;\spriv) &=  \tilde O\br{\sqrt{H}\frak{R}_{\npriv}(\cH)+\sqrt{H}\alpha+ \sqrt{\frac{B\log{8/\beta}}{\npriv}}}\sqrt{\hat L(\hat h;\spriv)} \\
    &+ 
    \tilde O\br{H\frak{R}^2_{\npriv}(\cH)  + H\alpha^2 + \frac{B\log{8/\beta}}{\npriv} + \frac{B(\fatc + \log{4/\beta})}{\npriv\epsilon}}
    \end{align*}
}
\end{proof}

\subsection{Proof of \cref{{cor:dp-public-data-nn}}}

We use the result from \cite{golowich2018size}, restated as \cref{thm:rademacher_nn}. Further, note that range bound on the hypothesis class is simply $R \leq \norm{\cX}\prod_{j=1}^dR_j$.
Instantiating our general result \cref{thm:dp-fat-with-public-data} with the above together with the relation between fat-shattering dimension and Rademacher complexity (\cref{thm:fat-rademacher-relation}) yields the following excess risk bound,
   \begin{align*}
    &L(\hat h;\cD) - \min_{h\in \cH}L(h;\cD) \\
    &= O\br{\frac{G\norm{\cX}(\sqrt{2\log{2}\depth}+1)\prod_{j=1}^\depth R_j}{\sqrt{\npriv}} +\frac{B\sqrt{\log{4/\beta}}}{\sqrt{\npriv}}+\frac{B\log{4/\beta}}{\npriv\epsilon}}\\
    &+\tilde O\br{\br{\frac{BG^2(\sqrt{2\log{2}\depth}+1)^2\norm{\cX}^2(\prod_{j=1}^{\depth}R_j)^2}{\npriv\epsilon}}^{1/3}}.
    \end{align*}
where in the above, we set $\alpha = \br{\frac{B(\sqrt{2\log{2}\depth}+1)^2\norm{\cX}^2(\prod_{j=1}^\depth R_j)^2}{G\npriv\epsilon}}^{1/3}$.

The number of public samples then is 
\begin{align*}
\npub &= O\br{\max\br{\frac{R^2\log{
2/\beta}}{\alpha^2}, \min\bc{m: \text{log}^3(m)\frak{R}_{m}^2(\cH) \leq \alpha^2}}}\\
& = \tilde O\br{\norm{\cX}^2(\prod_{j=1}^\depth R_j)^2\max\br{\frac{\log{
2/\beta}}{\alpha^2}, \frac{\depth}{\alpha^2}}}\\
& = \tilde O\br{(\norm{\cX}(\prod_{j=1}^\depth R_j))^{2/3}(\npriv\epsilon)^{2/3}\depth^{1/3}\log{2/\beta}}.
\end{align*}

\subsection{Proof of \cref{cor:non-euclidean}}
Note that $R \leq D\norm{\cX}$.
Further, we have \citep{kakade2008complexity,feldman2017statistical},
\begin{align*}
   \frak{R}_m(\cH) = O\br{\frac{D\norm{\cX}}{m^{1/r}}}.
\end{align*}
Moreover, we have from \cref{thm:fat-rademacher-relation}, for any $\alpha> \frak{R}_m(\cH)$,
\begin{align*}
      \fat \leq \frac{4m\frak{R}_m^2(\cH)}{\alpha^2}.
\end{align*}

Choose $m=\npub$ and $\alpha = \br{\log{\npub}}^{3/2}\frak{R}_{\npub}(\cH)$, to get that $\fat = \tilde O\br{\npub}$.
Plugging this in \cref{thm:dp-fat-with-public-data}, we get,

 \begin{align*}
    L(\hat w;\cD) - \min_{w\in \cW}L(w;\cD) &=
    O\br{\frac{GD\norm{\cX}}{\npriv^{1/r}}+ \frac{GD\norm{\cX}\sqrt{\log{4/\beta}}}{\sqrt{\npriv}}+\frac{B\sqrt{\log{4/\beta}}}{\sqrt{\npriv}}} \\
    &+ \tilde O\br{GD\norm{\cX}\br{\frac{\npub}{\npriv \epsilon} + \frac{1}{\npub^{1/r}} + \frac{\log{4/\beta}}{\npriv \epsilon}}} \\
    & = \tilde O\br{GD\norm{\cX}\br{\frac{1}{\npriv^{1/r}}+ \frac{\sqrt{\log{4/\beta}}}{\sqrt{\npriv}} + \frac{\log{2/\beta}}{(\npriv\epsilon)^{\frac{1}{r+1}}}+\frac{\log{4/\beta}}{\npriv \epsilon}}}\\
    &+O\br{\frac{B\sqrt{\log{4/\beta}}}{\sqrt{\npriv}}},
    \end{align*}
    
where in the last step, we plug in 
$\npub
=  O\br{(\npriv\epsilon)^{r/(1+r)}\log{2/\beta}} $, 
yielding 
$\alpha = O\br{\frac{D\norm{\cX}}{(\npriv\epsilon)^{\frac{1}{r+1}}}}$.
The number of public samples simplifies as,
\begin{align*}
\npub &= O\br{\max\br{\frac{R^2\log{
2/\beta}}{\alpha^2}, \min\bc{m: \text{log}^3(m)\frak{R}_{m}^2(\cH) \leq \alpha^2}}}\\
& = \tilde O\br{(\npriv\epsilon)^{\frac{2}{r+1}}\log{2/\beta}}.
\end{align*}
which is satisfied from our choice since $r\geq 2$.

\subsection{Additional Results}

\begin{corollary}
\label[corollary]{cor:dp-public-data-glm-nonconvex}
\sloppy
In the setting of \cref{thm:dp-fat-with-public-data} together with $\cX = \bc{x \in \bbR^d:\norm{x}\leq \norm{\cX}}$ and 
$\cH = \bc{x \mapsto \ip{w}{x}, x \in \cX, \norm{w}\leq D}$. With $\alpha = \br{\frac{D^2\norm{\cX}^{2}} {\npriv\epsilon}}^{1/3}$ and 
$\npub = \tilde O\br{(D\norm{\cX})^{2/3}(\npriv\epsilon)^{2/3}\log{2/\beta}}$, with probability at least $1-\beta$,
\begin{align*}
    L(\hat h;\cD) - \min_{h\in \cH}L(h;\cD) 
    &= \tilde O\br{\frac{GD\norm{\cX}}{\sqrt{\npriv}} + G\br{\frac{D^2\norm{\cX}^{2}} {\npriv\epsilon}}^{1/3}
    +\frac{B\sqrt{\log{4/\beta}}}{\sqrt{\npriv}}
    +
    \frac{B\log{4/\beta}}{\npriv\epsilon}}.
\end{align*}
\end{corollary}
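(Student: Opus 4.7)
To prove Corollary \ref{cor:dp-public-data-glm-nonconvex}, I would specialize Theorem \ref{thm:dp-fat-with-public-data} to the class of norm-bounded linear predictors $\cH = \{x\mapsto \langle w,x\rangle : \|w\|\leq D\}$ over $\cX=\{x:\|x\|\leq\|\cX\|\}$. The three class-dependent quantities feeding into the general bound are all off-the-shelf: Cauchy--Schwarz gives $R\leq D\|\cX\|$; the preliminaries record $\frak{R}_{m}(\cH) = \Theta(D\|\cX\|/\sqrt{m})$; and the fat-shattering dimension is $\text{fat}_{c\alpha}(\cH) = \Theta(D^2\|\cX\|^2/\alpha^2)$.

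Substituting these into Theorem \ref{thm:dp-fat-with-public-data}, the high-probability excess risk becomes
\[
O\!\Big(\tfrac{GD\|\cX\|}{\sqrt{\npriv}} + \tfrac{B\sqrt{\log(4/\beta)}}{\sqrt{\npriv}}\Big) \;+\; \tilde O\!\Big(\tfrac{\min(B,GD\|\cX\|)\,(D^2\|\cX\|^2/\alpha^2 + \log(4/\beta))}{\npriv\epsilon}\Big) \;+\; 2G\alpha,
\]
with $\alpha>0$ the only free parameter. The approximation term $G\alpha$ and the privacy term proportional to $1/\alpha^{2}$ balance, up to constants depending on $G$, $B$, and $D\|\cX\|$, when $\alpha^{3}\asymp D^{2}\|\cX\|^{2}/(\npriv\epsilon)$, which is exactly the choice $\alpha=(D^{2}\|\cX\|^{2}/(\npriv\epsilon))^{1/3}$ in the statement. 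With this $\alpha$, the cover term $2G\alpha$ realizes the $G(D^{2}\|\cX\|^{2}/(\npriv\epsilon))^{1/3}$ contribution, while the $\log(4/\beta)$ part of the exponential-mechanism term, using $\min(B,GR)\leq B$, produces $\tilde O(B\log(4/\beta)/(\npriv\epsilon))$, recovering the claimed four terms.

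It remains to verify the public sample requirement. Theorem \ref{thm:dp-fat-with-public-data} asks for $\npub = O\!\big(\max\{R^{2}\log(2/\beta)/\alpha^{2},\, \min\{m : \log^{3}(m)\,\frak{R}_{m}^{2}(\cH)\leq\alpha^{2}\}\}\big)$. Plugging in $R=D\|\cX\|$ and $\frak{R}_{m}(\cH)^{2}=\Theta(D^{2}\|\cX\|^{2}/m)$, both quantities scale as $\tilde O(D^{2}\|\cX\|^{2}/\alpha^{2})$, and substituting our $\alpha$ yields $\npub=\tilde O((D\|\cX\|)^{2/3}(\npriv\epsilon)^{2/3}\log(2/\beta))$, matching the corollary.

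There is no real obstacle: the proof is a mechanical specialization, and the only computation is a scalar balancing of $G\alpha$ against a $1/\alpha^{2}$ privacy term, with the rest being direct substitution. The one thing to be careful about is tracking which of $B$ and $GD\|\cX\|$ is smaller in $\min(B,GR)$; since $\tilde O$ absorbs constants of this type, both the rate and the public-sample expression simplify to the stated forms regardless.
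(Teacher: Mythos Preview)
Your proposal is correct and follows essentially the same route as the paper: both arguments specialize Theorem~\ref{thm:dp-fat-with-public-data} to norm-bounded linear predictors using the standard bounds $R\le D\|\cX\|$, $\frak{R}_m(\cH)=O(D\|\cX\|/\sqrt{m})$, and $\text{fat}_{c\alpha}(\cH)=O(D^2\|\cX\|^2/\alpha^2)$, then balance the $G\alpha$ term against the $1/\alpha^2$ privacy term to obtain $\alpha=(D^2\|\cX\|^2/(\npriv\epsilon))^{1/3}$, and finally read off the public-sample requirement. No additional ideas are needed beyond this mechanical substitution.
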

\begin{proof}
In this setting, we have that $R \leq D\norm{\cX}$.
Further, it is known \citep{kakade2008complexity},
\begin{align*}
   \fat = O\br{\frac{D^2\norm{\cX}^2}{\alpha^2}}, \qquad  \frak{R}_n(\cH) = O\br{\frac{D\norm{\cX}}{\sqrt{m}}}.
\end{align*}
Pluging this in \cref{thm:dp-fat-with-public-data}, we get,
 \begin{align*}
   & L(\hat h;\cD) - \min_{h\in \cH}L(h;\cD)  \\
    &= O\br{\frac{GD\norm{\cX}}{\sqrt{\npriv}}} + O\br{\frac{B\sqrt{\log{4/\beta}}}{\sqrt{\npriv}}} + \tilde O\br{\frac{\min{(B,GD\norm{\cX})}D\norm{\cX}}{\alpha^2\npriv\epsilon}} \\
    &+ 2G\alpha +O\br{\frac{B\log{4/\beta}}{\npriv\epsilon}} \\
    & = \tilde O\br{\frac{GD\norm{\cX}}{\sqrt{\npriv}} + G\br{\frac{D^2\norm{\cX}^{2}} {\npriv\epsilon}}^{1/3}
    +\frac{B\sqrt{\log{4/\beta}}}{\sqrt{\npriv}} +
    \frac{B\log{4/\beta}}{\npriv\epsilon}}.
    \end{align*}
where in the last step, we plug in $\alpha = \br{\frac{D^2\norm{\cX}^{2}} {\npriv\epsilon}}^{1/3}$. The number of public samples simplifies as,
\begin{align*}
\npub &= O\br{\max\br{\frac{R^2\log{
2/\beta}}{\alpha^2}, \min\bc{m: \text{log}^3(m)\frak{R}_{m}^2(\cH) \leq \alpha^2}}}\\
& = \tilde O\br{D^2\norm{\cX}^2\max\br{\frac{\log{
2/\beta}}{\alpha^2}, \frac{1}{\alpha^2}}}\\
& = \tilde O\br{(D\norm{\cX})^{2/3}(\npriv\epsilon)^{2/3}\log{2/\beta}}.
\end{align*}
\end{proof}

\end{document}